\DeclareMathOperator*{\argmin}{arg\,min}
\DeclareMathOperator*{\argmax}{arg\,max}
\def\N{\mathbb{N}}
\def\I{\mathbb{I}}
\def\P{\mathrm{P}}
\def\-{\mbox{--}}
\theoremstyle{plain}
\newtheorem{assm}[]{Assumption}
\let\oldnl\nl
\newcommand{\nonl}{\renewcommand{\nl}{\let\nl\oldnl}}
\let\c@equation\c@thm
\def\bbbr{\mathbb{R}}
\begin{document}

\title{A Cross Entropy based Optimization Algorithm with Global Convergence Guarantees}



\author{Ajin George Joseph         \and
        Shalabh Bhatnagar 
}


\institute{Indian Institute of Science, Bangalore \at
              INDIA, 560012 \\
              Tel.: +123-45-678910\\
              Fax: +123-45-678910\\
              \email{ajin@iisc.ac.in}           
           \and
           Indian Institute of Science, Bangalore\at
              INDIA, 560012
}


\maketitle

\begin{abstract}
The cross entropy (CE) method is a model based search method to solve optimization problems where the objective function has minimal structure. The Monte-Carlo version of the CE method employs the naive sample averaging technique which is inefficient, both computationally and space wise. We provide a novel stochastic approximation version of the CE method, where the sample averaging is replaced with incremental geometric averaging. This approach can save considerable computational and storage costs.  Our algorithm is incremental in nature and possesses additional attractive features such as accuracy, stability, robustness and convergence to the global optimum for a particular class of objective functions. We evaluate the algorithm on a variety of global optimization benchmark problems and the results obtained corroborate our theoretical findings.\keywords{Cross entropy method \and Global optimization algorithm \and Stochastic approximation \and Model based search}
\end{abstract}

\section{Introduction}\label{intro}

In several optimization problems found in economics, biological sciences, social sciences, computational linguistics, computational physics, engineering and health sciences, where the objective function captures the marginal gain, health, error, potential, energy, loss, coherence or stability, there are situations where one demands to seek the ``absolute optimum'' of the objective function. The situation is made more challenging when the objective function considered in these complex settings is either non-linear and non-convex or the derivative is hard to compute or in some cases the analytic form of the function itself is unavailable which in turn prevents the verification of certain structural properties. In these settings, the objective function might contain multiple local extrema with their cardinality unknown and the margin of quality of the local extrema with respect to the global extreme is significant. There are various problems of this kind which are available in the literature \cite{wolfram1996mathematica,jacob2001illustrating,mandelbrot1983fractal,murraymathematical,wolfram2002new}.\\

\section{Problem Statement and Background}
The problem of global optimization\cite{spall2005introduction,pinter2002global,hu2007model} aims to seek the input parameter vector which attains the global optimum of the objective function. The global optimization problem can be formally defined as follows:
\begin{equation}
	\textrm{Find } x^{*} \in \argmax_{x \in \mathcal{X} \subseteq \bbbr^{m}}\mathcal{H}(x),
\end{equation} 
where $\mathcal{H}:\bbbr^{m} \rightarrow \bbbr$ is a multi-modal, bounded real-valued, Borel-measurable function. Multi-modality property implies that the objective function has multiple local optima. Further, we assume that the objective function need not be continuous  nor a closed form expression of the objective function is available. We also tacitly assume that the solution set $\argmax_{x \in \mathcal{X}}\mathcal{H}(x)$ is at most finite.\\
\begin{figure}[h]
	\centering
	\includegraphics[scale=0.5]{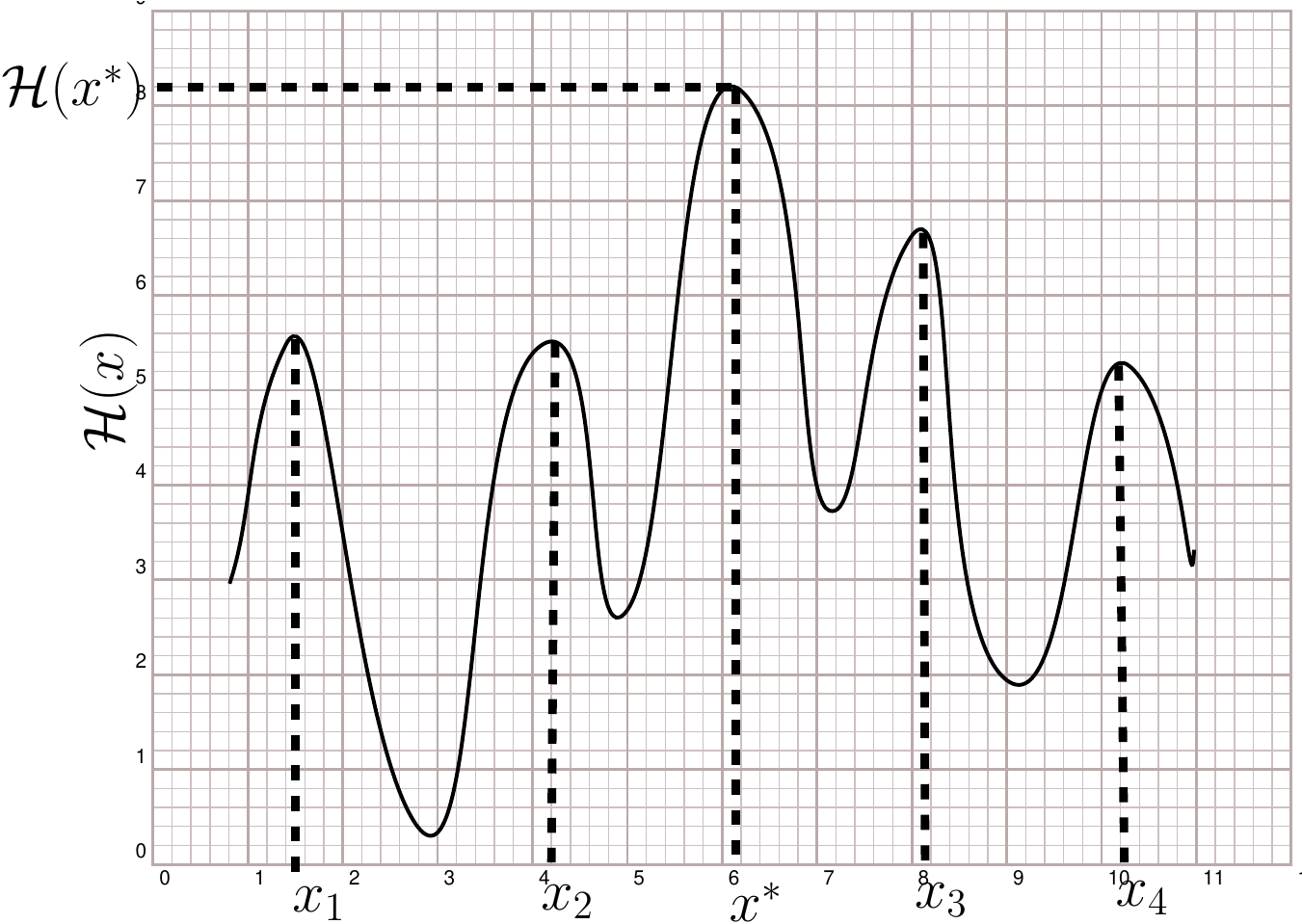}
	\caption{Example of a multi-modal function}
\end{figure}

The problem is inherently hard due to two primary reasons:
\begin{itemize}
	\item Presence of multiple local optima which considerably hinders the search for global optimum.
	\item A direct characterization (\emph{i.e.} analytic, closed form expression) of the global optimum is unavailable. Note that this is indeed a considerable deterrent. Any search technique requires a proper characterization of its goal which is critical in effectively guiding the search. For example, in the local gradient search methods, local optima can be characterized as the points where the gradient vanishes and hence the iterates can be guided in the direction where one can achieve this property. Apparently, global optima do not possess any efficient characterization.\\
\end{itemize}
\begin{figure}[t!]
	\begin{subfigure}[t]{0.43\textwidth}
		\centering
		\includegraphics[scale=0.38]{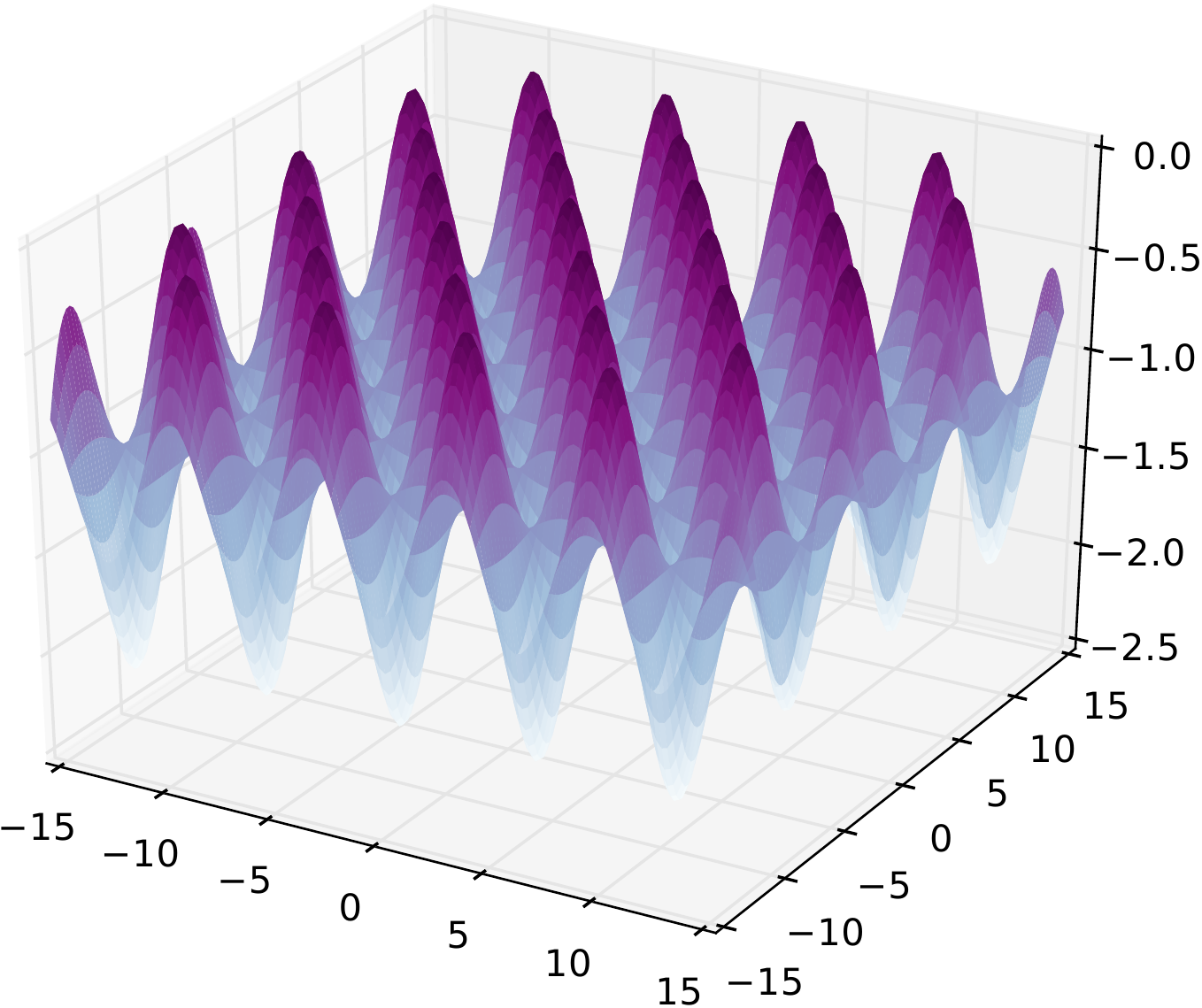}
		\subcaption{Griewank function on $\bbbr^{2}$ given by $\mathcal{H}_{1}(x) = -1-\frac{1}{4000}\sum_{i=1}^{2}x_{i}^{2}+\prod_{i=1}^{2}\cos{(x_i/\sqrt{i})}$}
	\end{subfigure}\hspace*{6mm}
	\begin{subfigure}[t]{0.45\textwidth}
		\centering
		\includegraphics[scale=0.22]{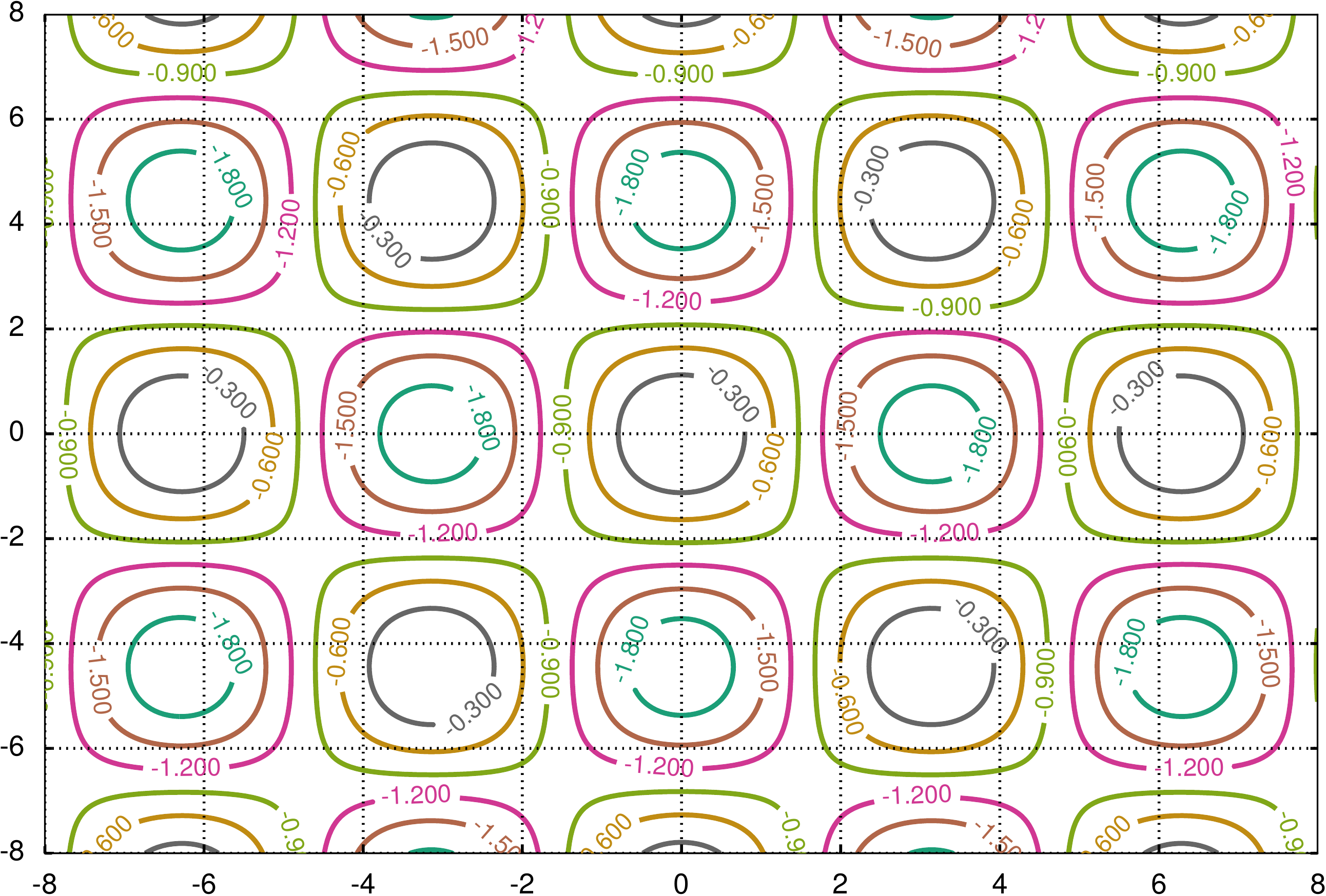}
		\subcaption{Contour map of the Griewank function which shows the high density of local optima}
	\end{subfigure}
\end{figure}
Local search methods like gradient search \cite{polyak1964gradient}, Newton method \cite{weerakoon2000variant}, variable metric method \cite{davidon1991variable} or conjugate gradient \cite{polyak1969conjugate} are local improvement algorithms which converge to the local optima of the objective function by utilizing local information like the gradient, Hessian, or other higher order derivatives of the objective function to guide their search. Hence they demand strict structural properties like smoothness (or finite times differentiability) of the objective function. Local optimization methods can find the global optimum only if the search is initiated close to the global optimum or the objective function is convex. Unfortunately, the above conditions are generally not satisfied in most practical applications. So to solve the global optimization problem, it is intuitive to presume that one has to either find an initial value close to the global optimum  or apodize the optimization  problem into a transformed setting, where the objective function is well-behaved and convex and then local optimization techniques can be applied to seek the global optimum. A practical solution to the former approach is to employ a chain of local search procedures, each solving a time dependent objective function with the solution obtained from each local search forming the initial value for the subsequent search. Smoothed functional schemes \cite{styblinski1990experiments,bhatnagar2007adaptive}, simulated annealing\cite{szu1986non,szu1987nonconvex}, simultaneous perturbation stochastic approximation (SPSA)\cite{maryak2001global, spall1992multivariate, spall2005introduction}, genetic algorithms \cite{fonseca1993genetic,gen2000genetic,golberg1989genetic} and tabu search \cite{glover2013tabu} are a few algorithms which belong to this category. Model based search methods which belong to the latter category are the primary topic of the paper, in particular the cross entropy method.

Model based search methods \cite{zlochin2004model} refer to a broad category of optimization methods which aim to generate a sequence of parametrized probability models $\{\theta_t\}$, where $\theta_t \in \Theta \subseteq \bbbr^{m}$ and each element $\theta_t$ of the sequence appropriately developed from the previous element $\theta_{t-1}$ and the model sequence satisfies the necessary property that it converges to the degenerate or singular model $\theta^{*} \in \Theta$ concentrated on the optimal value. By the degenerate distribution, we mean the Dirac measure with its entire mass at a single point in case when the global optimum is unique, otherwise (in case where the global optima form a finite set), the limiting model is the uniform distribution over the set of global optima. Model based search methods do not demand strong structural requirements on the  objective function and search is conducted by utilizing the objective function values. This is indeed an appealing feature and hence such methods can be applied to more general situations like the ``black box'' settings where the closed form expression of the objective function is absent, however function values of the input vectors are available. In the jargon of model based search, the search space $\mathcal{X}$ is referred to as the \textit{solution space}. The model based search methods are able to overcome the difficulty regarding the characterization of the global optima by defining the target as the degenerate probability distribution $\theta^{*} \in \Theta$ concentrated on the global optima. This enables the search to  be conducted on the model parameter space $\Theta$ with the ultimate goal to find the model parameter $\theta^{*}$. 

Model based search methods usually follow the ensuing framework:
\begin{enumerate}
	\item
	At time instant $t$, generate $N \in \mathbb{N}$ candidate solutions by drawing from the solution space $\mathcal{X}$ using the probability model defined by the parameter $\theta_t \in \Theta$.
	\item
	Using an update rule, deduce the new model parameter $\theta_{t+1} \in \Theta$ by utilizing the candidate solutions and their objective function values. The update rule is designed in such a way that  the new model parameter $\theta_{t+1}$ is effectively closer to the optimal probability model than its predecessor. Consequently, the quality of the candidate solutions drawn using the new probability model in the subsequent recursions will be better than those from its predecessor.
	\item
	Now set $t=t+1$ and go to step 1. Repeat until convergence. 
\end{enumerate}	
\begin{figure}[h]
	\centering
	\includegraphics[scale=0.5]{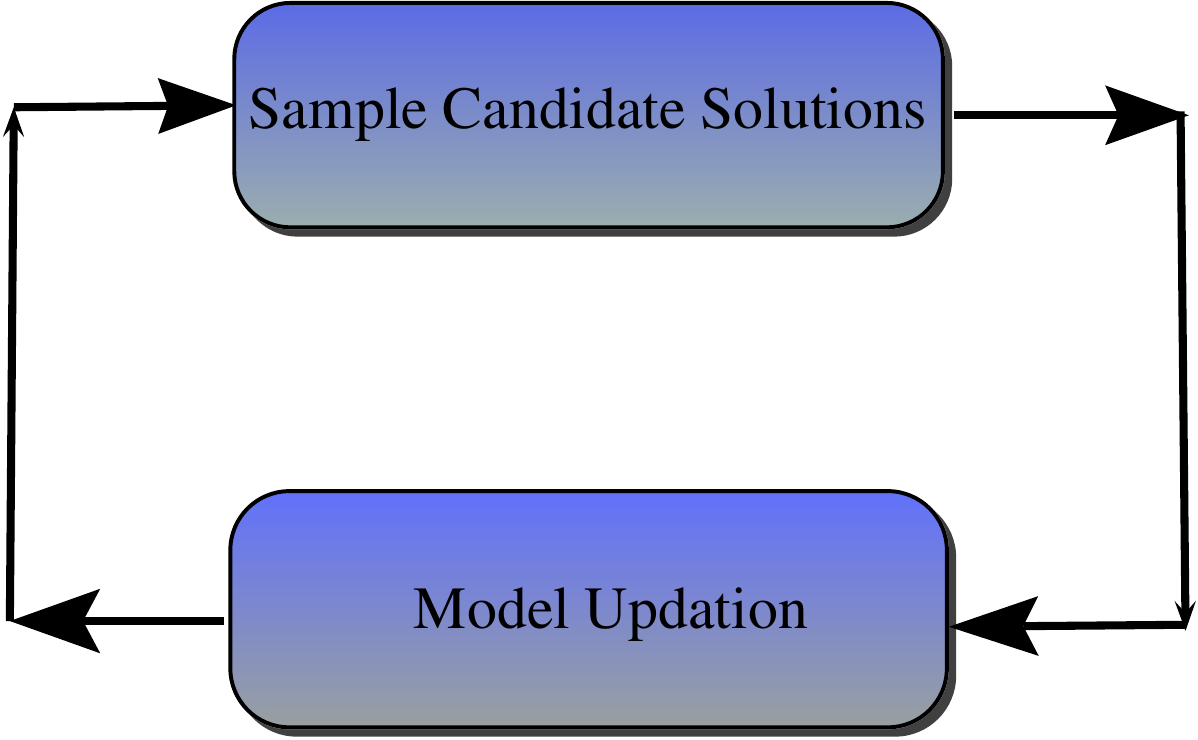}
	\caption{Flowchart of model based search method}
\end{figure}
An important and critical component of the model based search is the probability model subspace defined by $\Theta$. In order for the algorithm to provide good quality solutions, the probability model subspace is presumed to be rich enough where richness is defined in the sense that the subspace contains the degenerate probability distribution for each $x \in \mathcal{X}$, in particular $x^{*}$. 
\begin{figure}[h]
	\centering
	\includegraphics[scale=0.48]{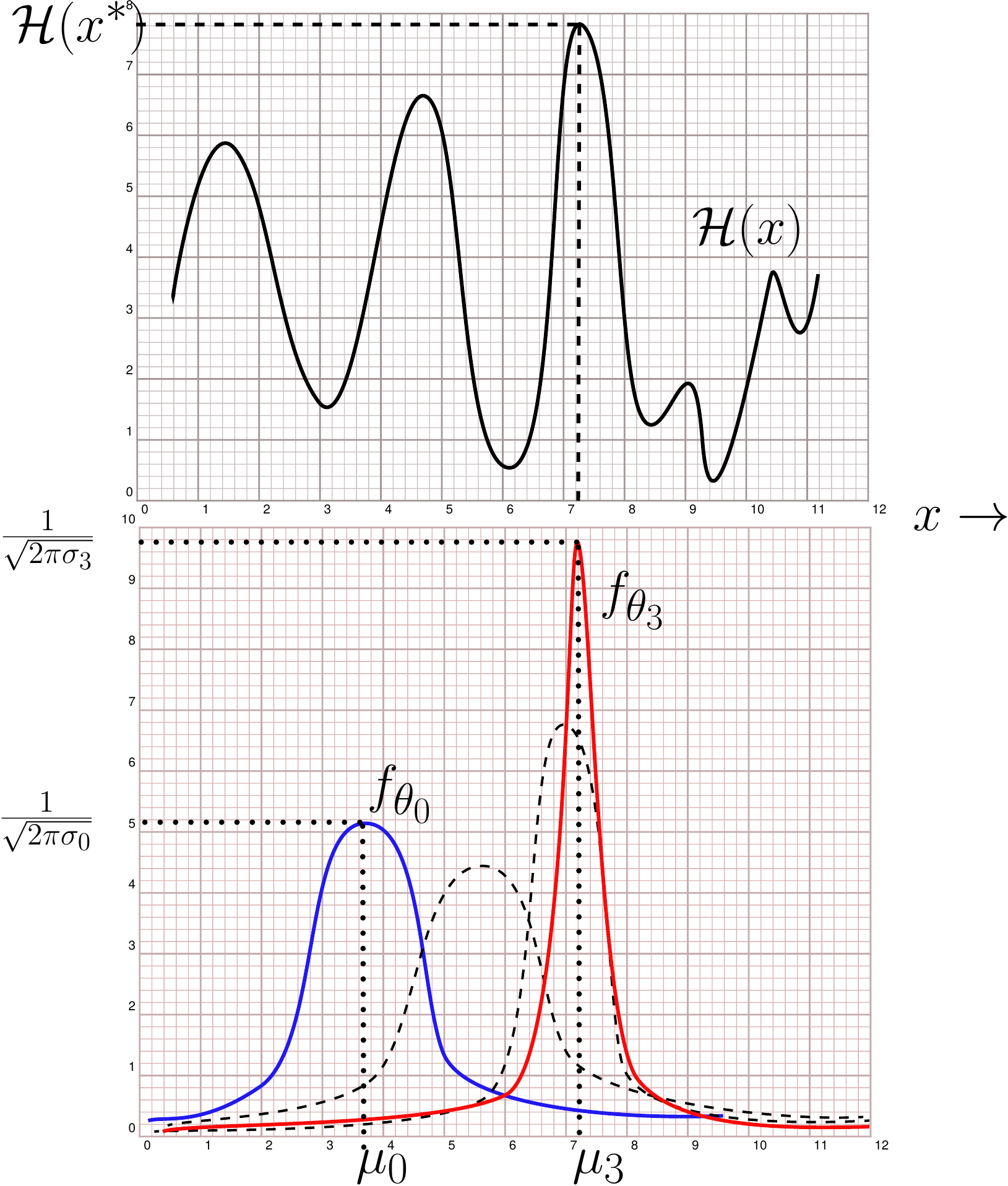}
	\caption{A candid demonstration of a particular instantiation of the model based search method. Here the Gaussian distribution parametrized by the tuple $\theta = (\mu, \sigma)^{\top}$, where $\mu \in \bbbr$  is the mean value and $\sigma \in \bbbr$  is the standard deviation. Here $\Theta \subseteq \bbbr^{2}$. And $f_{\theta}$ is the probability density function (PDF) of the Gaussian distribution defined by the parameter $\theta$. As illustrated here, the model sequence converges to the degenerate distribution concentrated at $x^{*}$, \emph{i.e.}, $\theta_t \rightarrow (x^{*}, 0)^{\top}$ as $t \rightarrow \infty$}
\end{figure}

Various algorithms exist in the optimization literature which belong to this class of methods. A few include model reference adaptive search (MRAS)\cite{hu2007model,fu2006model,chang2013simulation}, stochastic model reference adaptive search (SMRAS) \cite{hu2008model,chang2013simulation}, estimation of distribution algorithms (EDA) \cite{larranaga2002estimation,pelikan2002survey}, cross entropy (CE) method \cite{rubinstein1997optimization,rubinstein1999cross,kroese2006cross,rubinstein2001combinatorial,rubinstein2013cross,hu2012stochastic}, ant colony optimization (ACO) \cite{dorigo2006ant,socha2008ant,dorigo1999ant} and population based incremental learning \cite{baluja1994population}. The various algorithms differ in the way the update rule is defined. In this paper, the primary focus is on the well known cross entropy (CE) method. 

The CE method is a versatile Monte-Carlo technique used for estimation and optimization which was motivated from the adaptive variance reduction technique proposed for rare-event estimation\cite{rubinstein1997optimization}. Later this technique was adapted to design a combinatorial/discrete optimization algorithm by viewing the optimization procedure as a chain of inter-related rare-events. The proposed algorithm was initially used for solving various NP-hard problems like traveling sales man (TSP) problem \cite{de2005tutorial}, max-cut \cite{rubinstein2002cross,laguna2009hybridizing, de2005tutorial}, and graph bi-partitioning \cite{rubinstein2002cross}. The CE method has found applications far and wide. There is a rich literature on the applications of the CE method which include continuous multi-extremal optimization \cite{kroese2006cross,hu2012stochastic}, stochastic optimization \cite{he2010simulation,alon2005application,kroese2013cross}, constrained optimization \cite{kroese2013cross,kroese2006cross}, multi-objective optimization\cite{bekker2011cross}, network reliability optimization problem \cite{caserta2009cross,kroese2007network,ridder2005importance}, DNA sequence alignment \cite{keith2002rare}, power systems \cite{ernst2007cross}, buffer allocation \cite{alon2005application}, combinatorial auctions \cite{chan2008randomized}, network management \cite{wittner2003emergent}, machine learning \cite{kroese2007application,rubinstein2013cross}, queuing systems \cite{de2000analysis,de2004fast}, aeronautics \cite{wang2007double}, vehicle routing \cite{chepuri2005solving}, economic systems and policy analysis \cite{robinson2001updating,chan2010efficient,chan2012improved}, fuzzy control \cite{haber2010optimal}, social and biosciences \cite{sani2009stochastic,sani2008controlling}, telecommunication systems \cite{nariai2008cross,chen2009partial}, earth sciences \cite{you2005assessing}, hydraulics \cite{lind1989cross}, parameter estimation of ODEs \cite{wang2013parameter} and neural computation \cite{dubin2002application}, to name a few.

\section{Our Contributions}
A few enticing features of the cross entropy method are versatility, simplicity, robustness, flexibility, stability, non-dependency on the structural properties and zero-order operation. However, there are a few downsides too. One being the strong dependency on $N$, the cardinality of the set of candidate solutions at each iteration. The other drawback which is a serious hindrance in applying the CE method to a lot of interesting domains is its  offline nature. The offline property implies that the CE method can only be applied to settings, where, either the true objective function values or at least reasonable estimates are available with tolerable delay. But in many practical scenarios, the data arrives incrementally and the delay incurred in accumulating sufficient data in order to estimate the objective function values with reasonable error is often quite long. For example, in economic and financial systems, where various market indicators like the stock market indices, inflation rates, interest rates and production indices arrive sequentially with monthly or weekly delays and the prediction models have to evolve to accommodate the new information. Similar systems of this nature can be found in weather prediction systems, sequential decision making paradigms like the model free Markov decision processes as well as prediction models and computational inductions in health and social sciences, where data arrives sequentially in a delayed manner and hence the optimization involved has to operate in an incremental, online fashion by sequentially considering and accommodating the incoming data to guide the search. So our primary focus in the paper is to remodel the CE method to provide an online dimension without losing any of the attractive features like global optimum convergence and gradient-free operations which have made it so successful and appealing to the optimization community.\\

\noindent
\textbf{Notation: } We use $\mathsf{X}$ to denote a random variable and $x$ for deterministic variable. For $A \subset \mathbb{R}^{m}$,  $\mathbb{I}_{A}$ represents the indicator function of $A$, \emph{i.e.}, $\mathbb{I}_{A}(x) = 1$ if $x \in A$ and $0$ otherwise. Let $f_{\theta}(\cdot)$ denote the probability density function (PDF) parametrized by $\theta$. Also, $\P_{\theta}$ and $\mathbb{E}_{\theta}[\cdot]$ are the probability measure and expectation \emph{w.r.t.} the PDF $f_{\theta}$. For $\rho \in (0,1)$, let $\gamma_{\rho}(\mathcal{H}, \theta)$ denote the $(1-\rho)$-quantile of $\mathcal{H}(\mathsf{X})$ \emph{w.r.t.} the	 PDF $f_{\theta}$, \emph{i.e.},
\begin{equation}\label{eq:quantdef}
\gamma_{\rho}(\mathcal{H}, \theta) \triangleq \sup\{l: \P_{\theta}(\mathcal{H}(\mathsf{X}) \geq l) \geq \rho\}.
\end{equation}  
Let \hspace*{1mm}$supp(f) \triangleq \overline{\{x | f(x) \neq 0\}}$ denote the support of $f$ and interior($A$) be the \textit{interior} of the set $A$. Also $\lceil a \rceil$ denote the smallest integer greater than $a$. For $x \in \mathbb{R}^{m}$, let $\Vert x \Vert_{\infty}$ represent the sup-norm, \emph{i.e.}, $\Vert x \Vert_{\infty} = \max_i{\vert x_i \vert}$.

\section{Background and Motivation}
Recall that in this paper the continuous optimization problem that we consider is the following:
\begin{equation}\label{eqn:detoptprb}
	Find \hspace*{4mm} x^{*} \in \argmax_{x \in \mathcal{X} \subset \mathbb{R}^{m}}\mathcal{H}(x).  
\end{equation} Here $\mathcal{H}:\mathbb{R}^{m} \rightarrow \mathbb{R}$ is a deterministic, multi-modal, bounded real-valued function (\emph{i.e.}, $\mathcal{H}(x) \in [\mathcal{H}_{l}, \mathcal{H}_{u}], \forall x \in \mathcal{X}$, where $\mathcal{H}_{l}, \mathcal{H}_{u} \in \mathbb{R}$) and the solution space $\mathcal{X}$ is a compact subset of $\mathbb{R}^{m}$. We assume that $x^{*}$ is unique and $x^{*} \in \textrm{interior}(\mathcal{X})$. Note that the continuity of $\mathcal{H}$ implies that $\mathcal{H}(x^{*})$ is not an isolated point.

The cross entropy method seeks the best probability distribution which represents the global optimum from a given family of probability density functions $\{f_{\theta} \vert \theta \in \Theta\}$, where $f_{\theta}$ is a probability density function over the solution space $\mathcal{X}$ and $\Theta$ is a subset of a multi-dimensional Euclidean space.  A  family of distributions very commonly considered in this regard is the \textit{natural exponential family (NEF) of distributions}\cite{morris1982natural}. These probability distributions over $\bbbr^{m}$ are represented by  
\begin{equation}\label{eqn:nefdef}
\mathcal{F}_{exp} \triangleq \{f_{\theta}(x) = h(x)e^{\theta^{\top}\Gamma(x)-K(\theta)} \mid \theta\in \Theta \subset \bbbr^d\}, 
\end{equation}
where $h:\bbbr^{m} \longrightarrow \bbbr$, while $\Gamma:\bbbr^{m} \longrightarrow \bbbr^{d}$ is referred to as the \textit{sufficient statistic} and $K(\theta) = \log{\int{h(x)e^{\theta^{\top}\Gamma(x)}dx}}$ is called the \textit{cumulant function} of the family. The space $\Theta$ is chosen such that the cumulant function $K$ is finite, \emph{i.e.}, $\Theta = \{\theta \in \mathbb{R}^{d} \vert \hspace*{3mm}\vert K(\theta) \vert < \infty\}$. The space $\Theta$ is called the \textit{natural parameter space}. For a distribution belonging to NEF, there may exist multiple representations of the form (\ref{eqn:nefdef}). However, for the distribution, there definitely exists a representation where the components of the sufficient statistic are linearly independent and such a representation is referred to as \textit{minimal}. In this paper, we assume that the family is minimal. A few popular distributions which belong to the NEF family include Binomial, Poisson, Bernoulli, Gaussian, Geometric, Exponential distributions and their multivariate versions.

 A detailed description of the CE method is provided in \cite{kroese2013cross}. Here, we provide a succinct, yet comprehensive narrative of the CE method. If one observes the evolutionary nature of the CE method, then we  find that the algorithm generates a sequence of model parameters ${\{\theta_{t} \in \Theta\}}_{t \in \mathbb{N}}$ and an increasing sequence of thresholds ${\{\gamma_{t} \in \mathbb{R}\}}_{t \in \mathbb{N}}$ with efforts to direct the model sequence $\{\theta_t\}$ towards the degenerate distribution concentrated at $x^{*}$ and the threshold sequence $\{\gamma_t\}$ towards $\mathcal{H}(x^{*})$. A successful drift of the sequences towards the above mentioned limit points may or may not be achieved. It depends on the objective function and the quantile factor $\rho$. This particular characteristic of the algorithm will be discussed later in the paper. If one disregards other aesthetic aspects of the CE method, we find that the core component of the CE method is a recursion equation which is defined as follows:
\begin{equation}\label{eq:opt1}
	\theta_{t+1} = \argmax_{\theta \in \Theta}\mathbb{E}_{\theta_{t}}\left[S(\mathcal{H}(\mathsf{X}))\mathbb{I}_{\{\mathcal{H}(\mathsf{x}) \geq \gamma_{t+1}\}}\log{f_\theta(\mathsf{X})}\right],
\end{equation} 
where $S:\mathbb{R} \rightarrow \mathbb{R}_{+}$ is a positive and strictly monotonically increasing function. The most common choice for the threshold $\gamma_{t+1}$ is $\gamma_{\rho}(\mathcal{H}, \theta_{t})$: the $(1-\rho)$-quantile of $\mathcal{H}$ with respect to the PDF  $f_{\theta_{t}}$. Here, the quantile parameter $\rho \in (0,1)$ is set \textit{a priori} for the algorithm. Also, the parameter space $\Theta$ is assumed to be compact and it is chosen large enough so that the solution to the optimization problem (\ref{eq:opt1}) is contained in the $interior(\Theta)$ for all $t > 0$.

In this paper, we take the multivariate Gaussian distribution as the preferred choice for the distribution family $\{f_{\theta} \vert \theta \in \Theta\}$ of the CE method. In this case,
\begin{equation} \label{eq:gdist}
	f_{\theta}(x) = \frac{1}{\sqrt{(2\pi)^{m}|\Sigma|}}e^{-(x-\mu)^{\top}\Sigma^{-1}(x-\mu)/2},
\end{equation}
where $\mu \in \mathbb{R}^{m}$ is the  mean vector and $\Sigma \in \mathbb{R}^{m \times m}$ is the covariance matrix. Recall that multivariate Gaussian belongs to the natural exponential family (NEF) of distributions. So by letting
${\displaystyle h(x) = 1/(2\pi)^{m/2}}$ and $\Gamma(x) = (x, xx^{\top})^{\top}$, one obtains the natural NEF parametrization as  ${\displaystyle (\Sigma^{-1} \mu,\hspace*{1mm}-\frac{1}{2}\Sigma^{-1})^{\top}}$.

In the case of Gaussian based CE method, we let $\theta = (\mu, \Sigma)^{\top}$. Additionally, one can obtain a closed-form expression for $\theta_{t+1} = (\mu_{t+1}, \Sigma_{t+1})^{\top}$ by equating to $0$ the gradient of the objective function in Equation (\ref{eq:opt1}) with respect to the natural NEF parameter $(\Sigma^{-1} \mu, -\frac{1}{2}\Sigma^{-1})^{\top}$. Indeed, we obtain
\begin{flalign} 
	&\mu_{t+1} = \frac{\mathbb{E}_{\theta_{t}}\left[\mathsf{g_{1}}\big{(}\mathcal{H}(\mathsf{X}), \mathsf{X}, \gamma_{t+1}\big{)}\right]}{\mathbb{E}_{\theta_{t}}\left[\mathsf{g_{0}}(\mathcal{H}(\mathsf{X}), \gamma_{t+1})\right]} \triangleq \Upsilon_1(\theta_t, \gamma_{t+1}),\label{eq:sigmaideal1}\\
	&\Sigma_{t+1} = \frac{\mathbb{E}_{\theta_{t}}\left[\mathsf{g_{2}}\big{(}\mathcal{H}(\mathsf{X}), \mathsf{X}, \gamma_{t+1}, \Upsilon_{1}(\theta_t, \gamma_{t+1})\big{)}\right]}{\mathbb{E}_{\theta_{t}}\left[\mathsf{g_{0}}\big{(}\mathcal{H}(\mathsf{X}), \gamma_{t+1}\big{)}\right]} \triangleq \Upsilon_2(\theta_t, \gamma_{t+1}),\label{eq:sigmaideal2}
\end{flalign}
where
\begin{flalign}
	&\mathsf{g}_{0}(\mathcal{H}(x), \gamma) \triangleq S(\mathcal{H}(x))\mathbb{I}_{\{\mathcal{H}(x) \geq \gamma\}},\label{eqn:g0def}\\ &\mathsf{g_{1}}(\mathcal{H}(x), x, \gamma) \triangleq S(\mathcal{H}(x))\mathbb{I}_{\{\mathcal{H}(x) \geq \gamma\}}x  \textrm{ and } \label{eqn:g1def}\\ &\mathsf{g_{2}}(\mathcal{H}(x), x, \gamma, \mu) \triangleq S(\mathcal{H}(x))\mathbb{I}_{\{\mathcal{H}(x) \geq \gamma\}}(x-\mu)(x-\mu)^{\top}. \label{eqn:g2def}
\end{flalign}

\section{CE Method (Monte-Carlo Version)}
It is incredibly hard in general to compute the true value of the quantities $\mathbb{E}_{\theta_{t}}[\cdot]$ and $\gamma_{t+1}\hspace*{1mm} (=\gamma_{\rho}(\mathcal{H}, \theta_{t}))$ of Equations (\ref{eq:sigmaideal1}) and (\ref{eq:sigmaideal2}). To overcome this, a popular pragmatic approach is to employ their corresponding stochastic counterparts to track or estimate the corresponding true quantities. Here we maintain a user configured observation allocation rule $\{N_{t} \in \mathbb{N}\}_{t \in \mathbb{N}}$ to determine the sample size for each iteration of the CE method, where $N_{t}$ diverges to  $\infty$. In the Monte-Carlo version, the algorithm computes model sequences $\{\bar{\theta}_{t} = (\bar{\mu}_{t}, \bar{\Sigma}_{t})^{\top}\}_{t \in \mathbb{N}}$ and thresholds $\{\bar{\gamma}_{t} \in \bbbr\}_{t \in \mathbb{N}}$  using naive Monte-Carlo estimation. To elucidate further, at each iteration $t$, the Monte-Carlo version draws $N_{t}$ IID samples $\{\mathsf{X}_1, \mathsf{X}_2, \dots, \mathsf{X}_{N_{t}}\}$ from the solution space $\mathcal{X}$ using the PDF $f_{\bar{\theta}_{t}}$ and the threshold estimate $\bar{\gamma}_{t+1}$ is computed as follows: 
\begin{equation}\label{eqn:quantmcest}
	\bar{\gamma}_{t+1} = \mathcal{H}_{(\lceil(1-\rho)N_{t}\rceil)}, 
\end{equation}
where $\mathcal{H}_{(i)}$ is the $i$th-order statistic of $\{\mathcal{H}(\mathsf{X}_i)\}_{i=1}^{N_{t}}$.\\

The update of the model parameters also employs sample average estimates of $\Upsilon_1$ and $\Upsilon_2$ respectively. The model parameter $\bar{\theta}_{t+1} = (\bar{\mu}_{t+1}, \bar{\Sigma}_{t+1})^{\top}$  of the Monte-Carlo version is updated as follows:
\begin{flalign}
	&\bar{\mu}_{t+1} = \frac{\frac{1}{N_{t}}\sum_{i=1}^{N_{t}}\mathsf{g_{1}}(\mathcal{H}(\mathsf{X}_{i}), \mathsf{X}_{i}, \bar{\gamma}_{t+1})}{\frac{1}{N_{t}}\sum_{i=1}^{N_{t}}\mathsf{g_{0}}(\mathcal{H}(\mathsf{X}_{i}), \bar{\gamma}_{t+1})} \triangleq \bar{\Upsilon}_{1}(\bar{\theta}_{t}, \bar{\gamma}_{t+1}), \label{eqn:mcvers1}\\\nonumber\\
	&\bar{\Sigma}_{t+1} = \frac{\frac{1}{N_{t}}\sum_{i=1}^{N_{t}}\mathsf{g_{2}}(\mathcal{H}(\mathsf{X}_{i}), \mathsf{X}_{i}, \bar{\gamma}_{t+1}, \bar{\Upsilon}_{1}(\bar{\theta}_{t}, \bar{\gamma}_{t+1}))}{\frac{1}{N_{t}}\sum_{i=1}^{N_{t}}\mathsf{g_{0}}(\mathcal{H}(\mathsf{X}_{i}), \bar{\gamma}_{t+1})} \triangleq \bar{\Upsilon}_2(\bar{\theta}_{t}, \bar{\gamma}_{t+1}).\label{eqn:mcvers2}
\end{flalign}
Here we reuse the same IID samples $\{\mathsf{X}_{i}\}_{i=1}^{N_{t}}$  drawn for the quantile estimation.\\

The Monte-Carlo version of the CE method is illustrated in Algorithm \ref{algo:cemc}.\\\\

\hspace*{-6mm}\begin{minipage}{1.0\linewidth}
	\begin{algorithm}[H]
		\vspace*{5mm}
		\textbf{Initialization:} Choose an initial PDF $f_{\bar{\theta}_0}(\cdot)$ on $\mathcal{X}$, where $\bar{\theta}_{0} = (\bar{\mu}_{0}, \bar{\Sigma}_{0})^{\top}$; Fix an $\epsilon > 0$; Set $t = 0$; $\gamma_{0}^{*} = -\infty$;\vspace*{2mm}\\
		\textbf{Sampling Candidate Solutions:} Sample $N_{t}$  IID solutions $\{\mathsf{X}_1, \dots, \mathsf{X}_{N_{t}}\}$ from the solution space $\mathcal{X}$ using $f_{\bar{\theta}_{t}}$.\vspace*{3mm}\\
		\textbf{Threshold Evaluation:} Calculate the sample $(1-\rho)$-quantile $\bar{\gamma}_{t+1} = \mathcal{H}_{(\lceil (1-\rho)N_{t}\rceil)}$, where $\mathcal{H}_{(i)}$ is the $i$th-order statistic of the sequence $\{\mathcal{H}(\mathsf{X}_i)\}_{i=1}^{N_{t}}$.\vspace*{3mm}\\
		\nonl\textbf{Threshold Comparison:}\\
		\eIf{$\bar{\gamma}_{t+1} \geq \bar{\gamma}^{*}_{t}+\epsilon$}{
			$\bar{\gamma}^{*}_{t+1} = \bar{\gamma}_{t+1}$,
		}{
			$\bar{\gamma}^{*}_{t+1} = \bar{\gamma}^{*}_{t}$.
		}\vspace*{3mm}
		\textbf{Model Parameter Update:}  
		Update $\bar{\theta}_{t+1}  =  (\bar{\Upsilon}_{1}(\bar{\theta}_{t}, \bar{\gamma}^{*}_{t+1}), \bar{\Upsilon}_{2}(\bar{\theta}_{t}, \bar{\gamma}^{*}_{t+1}))^{\top}$.\vspace*{3mm}\\
		If the stopping rule is satisfied, then return $\bar{\theta}_{t+1}$ and terminate, else set $t=t+1$ and go to Step $2$.
		\vspace*{5mm}
		\caption{The Monte-Carlo CE Algorithm\label{algo:cemc}}
	\end{algorithm}
\end{minipage}
\subsection{Drawbacks of the Monte-Carlo CE Method}
\begin{enumerate}
	\item
	\textit{Inefficient use of prior information: }
	The naive approach of the Monte-Carlo CE does not utilize prior information efficiently while tracking the ideal CE method. Note that Monte-Carlo CE possesses a stateless behaviour. Indeed, at each iteration $t$, a completely new collection of samples is drawn from the solution space using the distribution $f_{\bar{\theta}_{t}}$. The collection of samples is then used to compute the estimates $\bar{\gamma}_{t+1}$ and $\bar{\theta}_{t+1}$. It is thus trivial to observe that the Monte-Carlo algorithm does not indeed utilize the estimates or samples generated prior to $t$.
	\item
	\textit{Computational limitations}:
	These arise due to the dependence of the algorithm on the sample size $N_{t}$. One does not know a priori  \textit{the best value for the $N_{t}$}. Higher values of $N_{t}$ while resulting in higher accuracy also require more computational resources. One often needs to apply brute force in order to obtain a good choice of $N_{t}$. Also as $m$, the dimension of the solution space $\mathcal{X}$, takes large values, more samples are required for better accuracy, making \textit{$N_{t}$ large} as well. This makes \textit{finding the $i$th-order statistic $\mathcal{H}_{(i)}$ in Step 3 harder}. Note that the order statistic $\mathcal{H}_{(i)}$ is obtained by sorting the list $\{\mathcal{H}(\mathsf{X}_1), \mathcal{H}(\mathsf{X}_2), \dots \mathcal{H}(\mathsf{X}_{N_{t}})\}$. The computational effort required in that case is at least $O(N_{t}\log{N_{t}})$ (the lower bound for sorting) which is indeed computationally expensive. Also note that $N_{t}$ diverges to infinity and hence this super linear relationship is computationally very expensive.
	\item
	\textit{Storage limitations}: The storage requirement at each iteration $t$ for storing the sample collection is $N_{t}m$. In situations when $m$ and $N_{t}$ are large, the storage requirement is a major concern.
\end{enumerate}

An illustration in Fig. \ref{fig:samplediffdet} demonstrates the dependency of the performance of Monte-Carlo CE on the sample size schedule $\{N_t, t \geq 0\}$. Here, we consider the Griewank function on $\mathbb{R}^{80}$, \emph{i.e.}, $\mathcal{H}(x) = -1-\frac{1}{4000}\sum_{i=1}^{80}x_{i}^{2}+\prod_{i=1}^{80}\cos{(x_i/\sqrt{i})}$. We take $N_{t+1} = \lceil\eta N_t\rceil$, where  $\eta > 1$. So a particular schedule can be identified by the pair $(N_0, \eta)$. Here we take $\eta = 1.005$ for all the schedules, however they differ in their initial value $N_0$. From the results plotted in Fig. \ref{fig:samplediffdet}, one can observe that the performance of the CE method improves as the initial sample size $N_0$ takes larger values, \emph{i.e.}, more samples are considered by the algorithm.
Different variants of the naive Monte-Carlo CE have been proposed in the literature, such as the gradient based Monte-Carlo cross entropy method (GMCCE) \cite{hu2012stochastic} and the modified Monte-Carlo cross entropy method \cite{wang2013parameter}. All the variants differ only in the model updating step, the other steps remain the same. Hence they also suffer from the above drawbacks.
\begin{figure}[h]
	\centering
	\includegraphics[width=85mm, height=60mm]{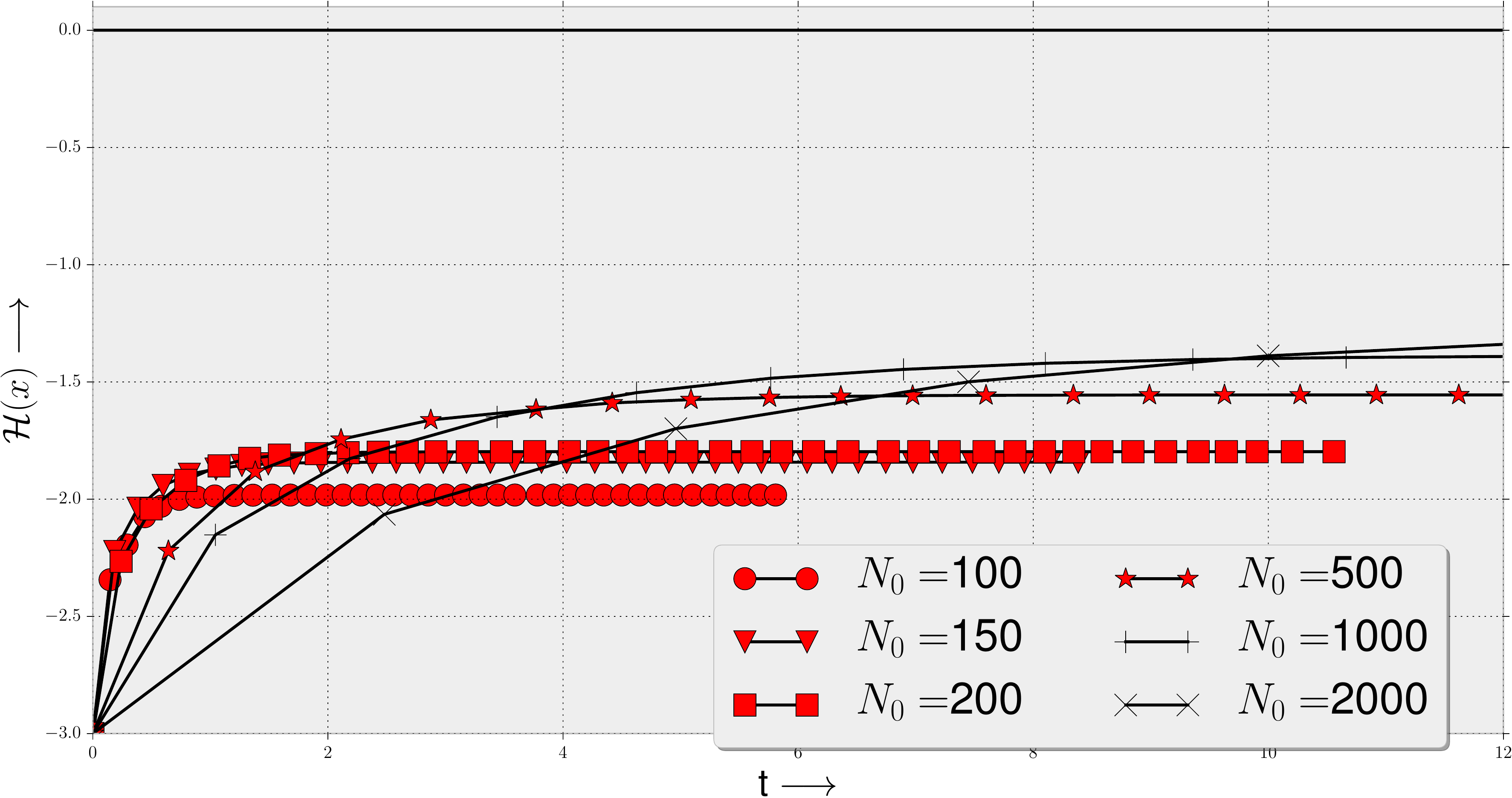}
	\caption{Plot of $\mathcal{H}(\bar{\mu}_{t})$, where $\mathcal{H}$ is the Griewank function. The plot shows the dependency of Monte-Carlo CE on the sample size schedules $\{N_t\}$.}\label{fig:samplediffdet}
\end{figure}
\section{Proposed Algorithm: CE2-ND}
The above mentioned drawbacks on the inefficient information utilization and the heavy cost on the space and computational requirements are primarily attributed to the non-incremental, batch based and stateless nature of the algorithm. In this paper, we resolve these shortcomings of the CE algorithm by remodeling the same under the stochastic approximation framework. We \textit{replace the sample averaging with a bootstrapping approach, \emph{i.e.}, derive new estimates using the current estimates} and thus in effect we achieve geometric averaging of the entire history of past estimates. The algorithm which we call CE2-ND (acronym for cross entropy $2$ with normal distribution) possesses  various features which we find desirable: 
\begin{enumerate}
	\item
	Stable, robust and easy to implement.
	\item
	Computational-wise and storage efficient.
	\item
	Limited restriction on the objective function, \emph{i.e.}, imposition of  very minimal structural restrictions on the objective function.
	\item
	Incremental in nature, \emph{i.e.},  evolves at each time instant according to the sample data (the function value $\mathcal{H}(\cdot)$) available at that particular instant. In other words the solution is built incrementally. 
	\item
	Efficient use of prior information, \emph{i.e.}, the algorithm adopts an adaptive nature where the function values $\mathcal{H}(\cdot)$ are requested only when required. The bootstrapping nature of the algorithm guarantees a continuous evolution (in contrast to the stateless nature of the Monte-Carlo version) and hence no data or prior information is under-utilized. 
	\item
	Convergence to the global optimum.
	A recent study \cite{hu2012stochastic,hu2007model} shows that the CE method is only a local improvement (local optimization) algorithm. In \cite{hu2012stochastic}, a few counter examples are also provided where the CE method fails to converge to the global optimum. But in many practical cases, the CE method is shown to produce high quality solutions. In this paper, we explore this dichotomy and propose a mixture model approach. We  provide a proof of convergence to the global optimum, for the novel mixture based CE method for a particular class of objective functions.
\end{enumerate}

\subsection{Anatomy of CE2-ND}\label{sec:anatomyce2nd}
The   suboptimal behaviour of the Monte-Carlo CE method both in terms of resource utilization and performance, especially in higher dimensional cases is primarily attributed to its batch based approach, \emph{i.e.}, processing batches of sample solutions at each iteration to compute estimates. We propose a novel approach, where we efficiently and effectively interleave the averaging of the samples to obtain an algorithm which not only asymptotically tracks the ideal CE method but also streamlines the whole procedure. In our approach, we  employ the well known and efficient stochastic approximation framework (discussed in Section \ref{sec:saframework}) to interleave the various averaging tasks. Our goal is to effectively track the ideal CE method using recursions of the kind (\ref{eq:strec}).

Note that the primary quantities of interest in the ideal CE method are $\gamma_t$, $\Upsilon_1$, $\Upsilon_2$ and $\theta_t$. In our algorithm, we track these quantities independently using stochastic recursions of the kind (\ref{eq:strec}). Thus we model our algorithm as a stochastic approximation algorithm containing multiple stochastic recursions operating in tandem to produce an equilibrium behaviour which is equivalent to the ideal CE method. Recall that a stochastic recursion is uniquely identified by its increment term, its initial value and the learning rate. We examine here the various recursions in great detail.\\\\
\textbf{$1.$ Tracking $\gamma_{\rho}(\mathcal{H}, \theta)$:} In our algorithm, we do not apply the naive order statistic method to estimate the $(1-\rho)$-quantile. Rather we employ an efficient stochastic recursion to estimate the $(1-\rho)$-quantile which is based on the following lemma:

The quantile problem is reformulated as an optimization problem in Lemma 1 of \cite{homem2007study}. The lemma provides a characterization of the $(1-\rho)$-quantile of a given real-valued function $\mathcal{H}$ with respect to a given probability measure $\P$. For better comprehension, we restate the lemma here:
\begin{lemma}\label{lma:qnopt} (Lemma 1 of \cite{homem2007study})
	The $(1-\rho)$-quantile of a bounded real valued function $\mathcal{H}(\cdot)$ $\Big(\textrm{with } \mathcal{H}(x) \in [\mathcal{H}_l, \mathcal{H}_u]\Big)$ with respect to the probability density function $f_{\theta}$ is reformulated as an optimization problem 
	\begin{eqnarray}\label{eqn:gmoptprob}
	\gamma_{\rho}(\mathcal{H}, \theta) = \argmin_{\gamma \in [\mathcal{H}_l, \mathcal{H}_u]}\mathbb{E}_{\theta}\left[\psi(\mathcal{H}(\mathsf{X}), \gamma)\right], 
	\end{eqnarray}
	where   $\mathsf{X} \sim f_{\theta}$, $\psi(\mathcal{H}(x), \gamma) = (1-\rho)(\mathcal{H}(x)-\gamma)\mathbb{I}_{\{\mathcal{H}(x) \geq \gamma\}} + \rho(\gamma-\mathcal{H}(x))\mathbb{I}_{\{\mathcal{H}(x) \leq \gamma \}}$ and $\mathbb{E}_{\theta}[\cdot]$ is the expectation with respect to the PDF $f_{\theta}$.
\end{lemma}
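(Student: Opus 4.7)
The plan is to exploit the convexity of the pinball/check loss $\psi(\cdot, \gamma)$ in its second argument and reduce the quantile characterization to a first-order optimality condition. First I would observe that, for each fixed $y = \mathcal{H}(x)$, the map $\gamma \mapsto \psi(y, \gamma)$ is piecewise linear with slope $-(1-\rho)$ on $(-\infty, y)$ and slope $\rho$ on $(y, \infty)$, hence convex with a single kink at $\gamma = y$. Integrating in $x$ preserves convexity, so $L(\gamma) \triangleq \mathbb{E}_{\theta}[\psi(\mathcal{H}(\mathsf{X}), \gamma)]$ is convex on $[\mathcal{H}_l, \mathcal{H}_u]$. Boundedness of $\mathcal{H}$ also implies that the integrand is dominated by $\max(1-\rho, \rho)(\mathcal{H}_u - \mathcal{H}_l)$, so $L$ is finite and continuous on $[\mathcal{H}_l, \mathcal{H}_u]$, and a minimizer exists on this interval.

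Next I would compute the derivative of $L$. Off the kink, $\partial_\gamma \psi(y, \gamma) = -(1-\rho)\mathbb{I}_{\{y > \gamma\}} + \rho\,\mathbb{I}_{\{y < \gamma\}}$, and since this gradient is uniformly bounded by $\max(\rho, 1-\rho)$, dominated convergence allows the exchange of expectation and differentiation, giving
\begin{equation*}
L'(\gamma) \;=\; -(1-\rho)\,\P_{\theta}\!\left(\mathcal{H}(\mathsf{X}) > \gamma\right) \;+\; \rho\,\P_{\theta}\!\left(\mathcal{H}(\mathsf{X}) < \gamma\right).
\end{equation*}
In the atomless case (which is the standing setting, since $\mathcal{H}$ is continuous and $\mathcal{H}(x^*)$ is not an isolated point), $\P_{\theta}(\mathcal{H}(\mathsf{X}) = \gamma) = 0$, so the two probabilities on the right sum to $1$. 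Setting $L'(\gamma) = 0$ therefore collapses by elementary algebra to $\P_{\theta}(\mathcal{H}(\mathsf{X}) > \gamma) = \rho$, which is precisely the $(1-\rho)$-quantile condition obtained from (\ref{eq:quantdef}). Convexity of $L$ promotes this critical point to a global minimizer, delivering $\gamma_{\rho}(\mathcal{H}, \theta) \in \argmin_{\gamma \in [\mathcal{H}_l, \mathcal{H}_u]} L(\gamma)$.

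The main subtlety I anticipate is handling possible atoms or plateaus in the distribution of $\mathcal{H}(\mathsf{X})$ under $f_\theta$ (in which case $L$ develops a genuine kink at the optimum and is no longer classically differentiable there). I would then work with the subdifferential and show that $0 \in \partial L(\gamma^*) = [L'_-(\gamma^*), L'_+(\gamma^*)]$ is equivalent to $\P_{\theta}(\mathcal{H}(\mathsf{X}) < \gamma^*) \leq 1-\rho \leq \P_{\theta}(\mathcal{H}(\mathsf{X}) \leq \gamma^*)$, which coincides with the $\sup$-based definition of the $(1-\rho)$-quantile in (\ref{eq:quantdef}). Finally, since $\mathcal{H}(\mathsf{X}) \in [\mathcal{H}_l, \mathcal{H}_u]$ almost surely under $f_\theta$, one verifies $L'_+(\mathcal{H}_l) \leq 0$ and $L'_-(\mathcal{H}_u) \geq 0$, so restricting $\gamma$ to $[\mathcal{H}_l, \mathcal{H}_u]$ does not exclude any unconstrained minimizer, completing the identification $\gamma_{\rho}(\mathcal{H}, \theta) = \argmin_{\gamma \in [\mathcal{H}_l, \mathcal{H}_u]} L(\gamma)$.
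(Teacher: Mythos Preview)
The paper does not actually prove this lemma; it simply restates Lemma~1 of \cite{homem2007study} and uses it as a black box. Your argument is the standard and correct proof of this classical fact (the pinball/check-loss characterization of quantiles): convexity of $\gamma \mapsto \psi(y,\gamma)$, exchange of expectation and (sub)differentiation via dominated convergence, and identification of the first-order condition with the defining inequalities of the $(1-\rho)$-quantile. One small caveat: continuity of $\mathcal{H}$ together with $\mathcal{H}(x^{*})$ not being isolated does \emph{not} by itself rule out atoms of $\mathcal{H}(\mathsf{X})$ under $f_\theta$ (e.g.\ $\mathcal{H}$ could be constant on a set of positive $f_\theta$-measure), so the subdifferential branch you sketch is genuinely needed rather than merely a technical aside; fortunately you already cover it correctly.
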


 In this paper, we maintain the time-dependent variable $\gamma_{t}$ to track  the true quantile $\gamma_{\rho}(\mathcal{H}, \cdot)$. 
 The increment term in the recursion is the sub-differential of $\psi$ with respect to $\gamma$ which is defined as follows:
\begin{equation}\label{eq:gmgradinc}
	\begin{aligned}
		\Delta \gamma_{t}(x) = -(1-\rho)\mathbb{I}_{\{\mathcal{H}(x) \geq \gamma_t\}}+\rho \mathbb{I}_{\{\mathcal{H}(x) \leq \gamma_t\}}.\\
	\end{aligned}
\end{equation}
The stochastic recursion which tracks $\gamma_{\rho}(\mathcal{H}, \cdot)$ is defined in Equation (\ref{eqn:ce2ndgamma}). The continuity relationship that holds between the $(1-\rho)$-quantile $\gamma_{\rho}(\mathcal{H}, \theta)$ and the model parameter $\theta$ is indeed beneficial since the evolutionary approach inherent in the stochastic approximation techniques effectively utilizes the relationship.\vspace*{2mm}\\
$2$. \textbf{Tracking $\Upsilon_1$ and $\Upsilon_2$}: In our algorithm, we completely avoid the sample averaging technique employed in the Monte-Carlo version to estimate $\Upsilon_1$ and $\Upsilon_2$. Rather, we employ the stochastic approximation recursion to track the above quantities. We maintain two time-dependent variables $\xi^{(0)}_t$ and $\xi^{(1)}_t$  to track $\Upsilon_1$ and $\Upsilon_2$ respectively. The increment functions used by their respective recursions are defined as follows:
\begin{flalign}
	&\Delta\xi^{(0)}_{t}(x) = \mathsf{g_{1}}(\mathcal{H}(x), x, \gamma_t) - \xi^{(0)}_{t}\mathsf{g_{0}}(\mathcal{H}(x), \gamma_t),\\\nonumber\\
	&\Delta\xi^{(1)}_{t}(x) = \mathsf{g_{2}}(\mathcal{H}(x), x, \gamma_t, \xi^{(0)}_{t}) - \xi^{(1)}_{t}\mathsf{g_{0}}(\mathcal{H}(x), \gamma_t).
\end{flalign}

The stochastic recursions which track $\Upsilon_1$ and $\Upsilon_2$ are defined in Equations (\ref{eqn:ce2ndxi0}) and (\ref{eqn:ce2ndxi1}) respectively. Note that the recursion of $\xi^{(0)}_t$ depends on $\gamma_t$, while that of $\xi^{(1)}_{t}$ depends on $\gamma_t$ and $\xi^{(0)}_{t}$. Also note that the recursion of $\gamma_t$ is independent of $\xi^{(0)}_{t}$ and $\xi^{(1)}_{t}$. This implies that there exists only a unilateral coupling between these quantities which thus enables us to use the same learning rate parameter for all the three  recursions. \vspace*{2mm}\\

\noindent
\textbf{$3$. Model Parameter Update: } In the ideal version of CE, we have $\theta_{t+1} = (\Upsilon_1(\theta_t, \dots), \Upsilon_2(\theta_t, \dots))^{\top}$. This can be a large discrete change from $\theta_t$ to $\theta_{t+1}$. But in our algorithm, we adopt a smooth update of the model parameters. The recursion is defined in Equation (\ref{eq:thupd}). This smooth update is practically significant since it guarantees a continuous evolution of the model parameters contrary to a discrete change which might cause large deviations. This further prevents premature convergence of the model sequence to any of the suboptimal solutions. A technical detail which has to be mentioned here is that the proposed algorithm does not update the model parameter $\theta_t$ at each time instant $t$, rather it is updated only along a subsequence of $\{t\}$. We block the update of model parameters by utilizing a delaying mechanism whose technical details we elaborate later in this paper. This delaying mechanism also enables us to reuse the same learning rate for the $\theta_t$ recursion as the one used for $\gamma_t$, $\xi^{(0)}_{t}$ and $\xi^{(1)}_{t}$.  This is indeed significant since the samples processed by the recursions of $\gamma_t$, $\xi^{(0)}_{t}$ and $\xi^{(1)}_{t}$ are controlled by $\theta_t$ and hence a direct implication reveals there is a bilateral coupling between $\theta_t$ and the rest of the quantities. However, because of the delaying mechanism, sufficient averaging of the quantities $\gamma_t$, $\xi^{(0)}_{t}$ and $\xi^{(1)}_{t}$ occurs between any two successive updates of the model parameter. So one can indeed reuse the same learning rate for the $\theta_t$ recursion. \vspace*{2mm}\\
\textbf{$4.$ Learning Rates: }Our algorithm uses a single learning rate $\{\beta_{t}\}$, which satisfies the following condition:
\begin{assm}\label{assm:lnrtce2nd}
	The step-size $\{\beta_{t}\}$ is a deterministic, positive and non-increasing sequence which satisfies
	\begin{equation}\label{eqn:ce2ndlearnrt}
		\sum_{t=1}^{\infty}\beta_{t} = \infty, \hspace*{5mm} \sum_{t=1}^{\infty}\beta^{2}_{t} < \infty.
	\end{equation}
\end{assm}
\noindent
\textbf{$5.$ Mixture Distribution:} The streamlined nature inherent in the stochastic approximation algorithms demands only a single sample $\mathsf{X}_{t+1}$ (generated in Equation (\ref{eq:samplegen})) per iteration.  This is a remarkable improvement in the sense that the algorithm learns by utilizing only a single sample $\mathsf{X}_{t+1}$ per iteration to evolve the variables involved and thus directing the model parameter $\theta_t$ towards the degenerate distribution concentrated on the optimum point $x^{*}$. In our algorithm, we use a mixture distribution $\widehat{f}_{\theta_{t}}$  to generate the sample $\mathsf{X}_{t+1}$, where $\widehat{f}_{\theta_t} = (1-\lambda)f_{\theta_t} + \lambda f_{\theta_0}$ with $\lambda \in [0,1)$ the mixing weight. The initial distribution parameter $\theta_0$ is chosen such that the density function $f_{\theta_0}$ is strictly positive on every point in the solution space $\mathcal{X}$, \emph{i.e.}, $f_{\theta_0}(x) > 0, \forall x \in \mathcal{X}$. The mixture approach in fact facilitates extensive exploration of the solution space and prevents the iterates from getting stranded in suboptimal solutions.\\

\noindent\textbf{Notation: } We denote by $\gamma_{\rho}(\mathcal{H}, \widehat{\theta}))$, the $(1-\rho)$-quantile of $\mathcal{H}(\cdot)$ w.r.t. the mixture distribution $\widehat{f}_{\theta}$ and let $E_{\widehat{\theta}}[\cdot]$ be the expectation w.r.t. the mixture PDF $\widehat{f}_{\theta}$. Also, $\P_{\widehat{\theta}}$ is the probability measure w.r.t the mixture PDF $\widehat{f}_{\theta}$, \emph{i.e.}, for a Borel set $A \subset \bbbr^{m}$, we have $\P_{\widehat{\theta}}(A) = \int_{A}\widehat{f}_{\theta}(x)dx$.\\\\
\noindent
\scalebox{0.9}{
	\begin{minipage}{1.12\linewidth}
		\begin{algorithm}[H]
			\textbf{Data: } $\epsilon_1 \in (0,1), \lambda, c_t \in (0,1)$, $\beta_{t}$, $\theta_{0} = (\mu_0, \Sigma_0)^{\top}$.\\
			\textbf{Init:} $\gamma_0 = 0$, $\xi^{(0)}_0=0_{m \times 1}$, $\xi^{(1)}_{0}=0_{m \times m}$, $T_{0}=0$, $\gamma^{p}_{0} = -\infty$, $c = c_0$, $t = 0$, $\theta^{p} = NULL$.\\\vspace*{4mm}
			\While{stopping criteria not satisfied}{\vspace*{1mm}
				{\setlength{\abovedisplayskip}{-14pt}\begin{flalign}\label{eq:samplegen}
						&\textbf{Sample generation:}\hspace*{11cm}\nonumber\\	
						&\mathsf{X}_{t+1}  \sim \widehat{f}_{\theta_{t}}(\cdot) \hspace*{2mm}\mathrm{ where }\hspace*{2mm} \widehat{f}_{\theta_{t}} = (1-\lambda)f_{\theta_t} + \lambda f_{\theta_0};
				\end{flalign}}\\
				{\setlength{\abovedisplayskip}{-14pt}\begin{flalign}\label{eqn:ce2ndgamma}
						&\textbf{Tracking the $(1-\rho)$-quantile of $\mathcal{H}(\cdot)$ w.r.t. $\widehat{f}_{\theta_{t}}$:}\hspace*{65mm}\nonumber\\
						&\gamma_{t+1} = \gamma_{t} - \beta_{t+1} \Delta \gamma_{t}(\mathsf{X}_{t+1}));
				\end{flalign}}\\
				{\setlength{\abovedisplayskip}{-14pt}\begin{flalign}\label{eqn:ce2ndxi0}
						&\textbf{Tracking $\Upsilon_1$ of Equation (\ref{eq:sigmaideal1}):}\hspace*{80mm}\nonumber\\
						&\xi^{(0)}_{t+1} = \xi^{(0)}_{t} + \beta_{t+1}\Delta \xi^{(0)}_t(\mathsf{X}_{t+1});
				\end{flalign}}\\
				{\setlength{\abovedisplayskip}{-14pt}\begin{flalign}\label{eqn:ce2ndxi1}
						&\textbf{Tracking $\Upsilon_2$ of Equation (\ref{eq:sigmaideal2}):}\hspace*{80mm}\nonumber\\
						&\xi^{(1)}_{t+1} = \xi^{(1)}_{t} + \beta_{t+1} \Delta \xi^{(1)}_t(\mathsf{X}_{t+1});
				\end{flalign}}
				\If{$\theta^{p}$ $\neq$ $NULL$}{
					{\setlength{\abovedisplayskip}{-14pt}\begin{flalign}\label{eqn:ce2ndgammap}
							&{\mathsf{X}}^{p}_{t+1}   \sim \widehat{f}_{\theta^{p}} \textrm{ where }
							\widehat{f}_{\theta^{p}} = (1-\lambda)f_{\theta^{p}} + \lambda f_{\theta_{0}};\nonumber\\
							&\gamma^{p}_{t+1} = \gamma^{p}_{t} - \beta_{t+1} \Delta \gamma^{p}_{t}(\mathsf{X}^{p}_{t+1});\\
							&\mathrm{\underline{Note:}}\hspace*{2mm} \Delta \gamma^{p}_{t}\hspace*{2mm} \mathrm{ is\hspace*{2mm} same\hspace*{2mm} as }\hspace*{2mm} \Delta \gamma_{t} \hspace*{2mm} \mathrm{except} \hspace*{2mm} \gamma^{p}_{t} \hspace*{2mm} \mathrm{ replacing}\hspace*{2mm} \gamma_{t}.\nonumber
					\end{flalign}}
				}
				{\setlength{\abovedisplayskip}{-14pt}\begin{flalign}\label{eqn:ce2ndTt}
						&\textbf{Threshold comparison:}\hspace*{100mm}\nonumber\\
						&T_{t+1} = T_{t} + c \left(\mathbb{I}_{\{\gamma_{t+1} > \gamma^{p}_{t+1}\}} - \mathbb{I}_{\{\gamma_{t+1} \leq \gamma^{p}_{t+1}\}} - T_{t}\right);
				\end{flalign}}
				\eIf{$T_{t+1} > \epsilon_1$}{
					{\setlength{\abovedisplayskip}{-10pt}\setlength{\belowdisplayskip}{4pt}\begin{flalign}
						\hspace*{-12mm}\textbf{Save previous model: } \theta^p = \theta_t;\hspace*{4mm} \gamma^{p}_{t+1} = \gamma_{t};
					\end{flalign}}\\
					{\setlength{\abovedisplayskip}{-12pt}\setlength{\belowdisplayskip}{4pt}\begin{flalign}\label{eq:thupd}
							\textbf{Update model: }
							\theta_{t+1} = \theta_{t} + \beta_{t+1}\left((\xi^{(0)}_{t}, \xi^{(1)}_{t})^{\top} - \theta_{t}\right);
					\end{flalign}}\\
					{\setlength{\abovedisplayskip}{-12pt}\setlength{\belowdisplayskip}{4pt}\begin{flalign}\label{eq:gmstarupd}
						\hspace*{-20mm}\textbf{Reset parameters: }
							\hspace{4mm} T_{t} = 0; \hspace*{4mm} c = c_{t};	
					\end{flalign}}
				}{
					\hspace*{2cm}$\gamma^{p}_{t+1} = \gamma^{p}_{t}$; \hspace{4mm} $\theta_{t+1} = \theta_{t}$;
				}
				$t = t+1$;}
			\caption{CE2-ND}\label{algo:ce2det-nd}
		\end{algorithm}
\end{minipage}}
To better comprehend the algorithm, a pictorial depiction of the algorithm CE2-ND is provided in Fig. \ref{fig:ce2nddet-visual}.
\begin{figure}
	\centering
	\includegraphics[height=52mm, width=120mm]{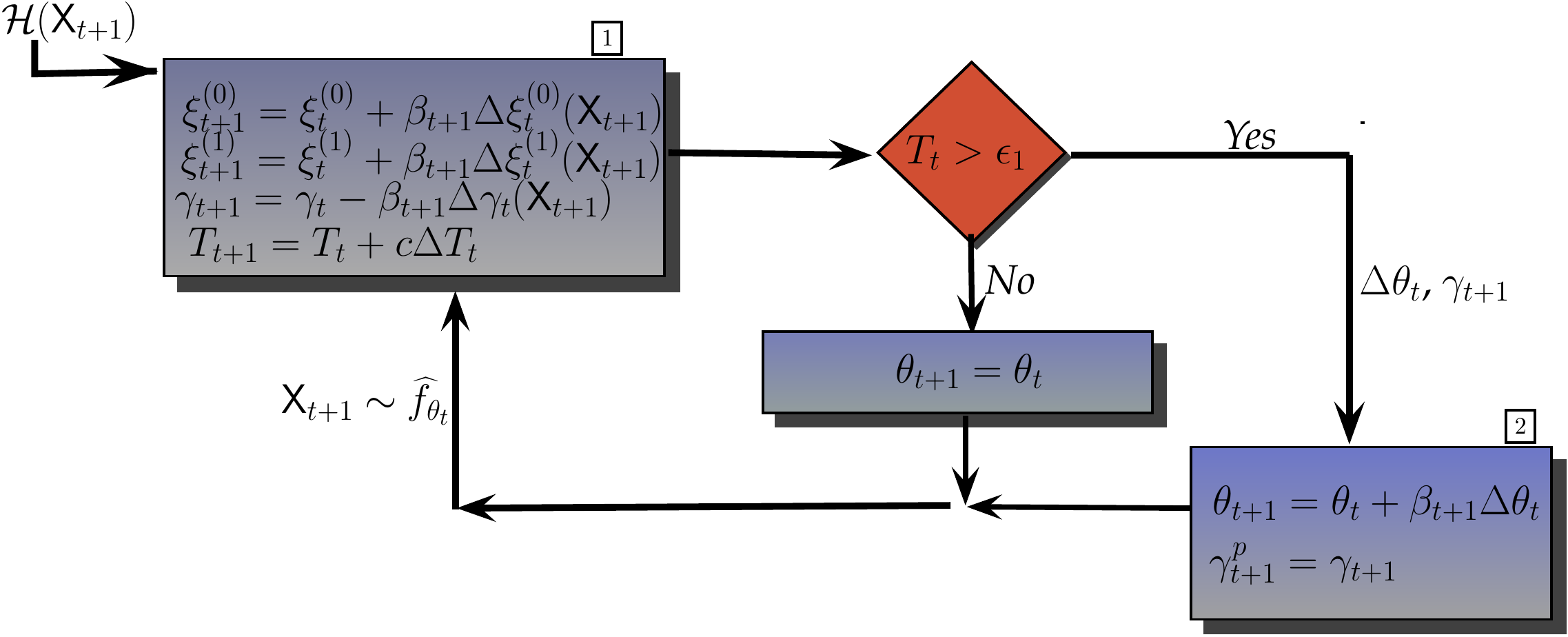}
	\caption{Flowchart representation of the algorithm CE2-ND.}\label{fig:ce2nddet-visual}
\end{figure}
Now it is important to note that the model parameter $\theta_t$ is not updated at each $t$. Rather it is updated every time $T_t$ hits $\epsilon_1$, where $0 < \epsilon_1 < 1$. So the update of $\theta_t$ only happens along a sub-sequence $\{t_{(n)}\}_{n \in \mathbb{N}}$ of $\{t\}_{t \in \mathbb{N}}$. This particular, yet important aspect of the algorithm is demonstrated in the time-line map given in Fig. \ref{fig:ce2nddet-timegraph}. So between $t=t_{(n)}$ and $t=t_{(n+1)}$, the variable $\gamma_{t}$ estimates the quantity $\gamma_{\rho}(\mathcal{H}, \widehat{\theta}_{t_{(n)}})$. Intuitively, one can think of the sequences $\{\theta_{t_{(n)}}\}$ and $\{\gamma_{t_{(n)}}\}$ to be tracking the ideal CE model sequence and the threshold levels respectively. We also maintain two book-keeping variables  $\gamma^{p}_{t}$ and $\theta^{p}$ which hold the previous threshold and the previous model parameter respectively. Thus $\gamma^{p}_{t_{(n)}}$ is the estimate of the $(1-\rho)$-quantile with respect to $\widehat{f}_{\theta_{t_{(n-1)}}}$ which is the previous model PDF. We also update the previous threshold $\gamma^{p}_{t}$ in recursion (\ref{eqn:ce2ndgammap}) using the previous PDF mixture $\widehat{f}_{\theta^{p}}$ to improve the accuracy of the previous threshold.
\begin{figure}
	\centering
	\includegraphics[height=52mm, width=120mm]{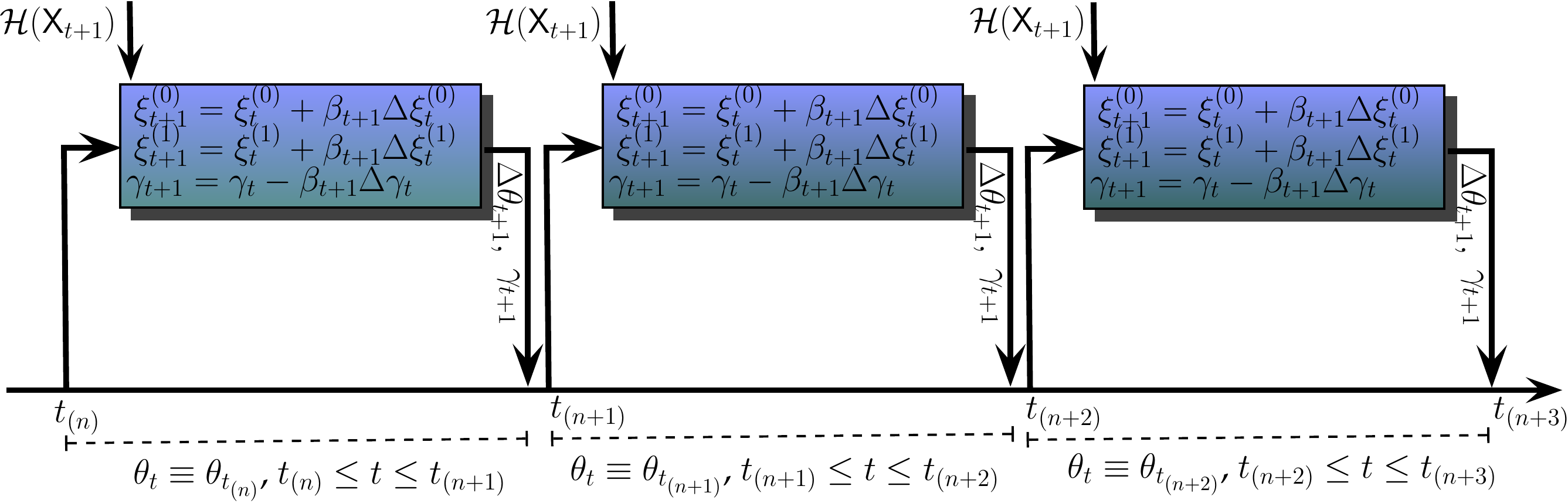}
	\caption{Timeline graph of the algorithm CE2-ND.}\label{fig:ce2nddet-timegraph}
\end{figure}

We maintain a variable $T_t$ and its recursion (\ref{eqn:ce2ndTt}) to determine the moment to update the model parameter. The recursion (\ref{eqn:ce2ndTt}) is an elegant trick to ensure that the current threshold estimate $\gamma_t$ eventually becomes greater than the previous threshold estimate $\gamma^{p}_{t_{(n)}}$, \emph{i.e.}, $\gamma_t > \gamma^{p}_{t_{(n)}}$ for all but finitely many $t$. We have shown in Lemma \ref{lmn:xiconv} that if $\gamma_{\rho}(\mathcal{H},\theta_t) > \gamma_{\rho}(\mathcal{H}, \theta^{p})$, then $T_{t} \rightarrow 1$ as $t \rightarrow \infty$. However, in practice, an algorithm cannot wait infinitely long to determine the order of the current and the previous thresholds. Hence we chose a confidence ceiling $\epsilon_1 \in (0,1)$  and the model parameters are updated when $T_t$ hits the ceiling. From the empirical studies we have conducted, we believe that $\epsilon_1$ in the range $[0.8,0.95]$ is sufficient to obtain a credible comparison of the thresholds. Also note that we reset $T_t$ in Equation (\ref{eq:gmstarupd}) during model parameter update to initiate a new comparison (since the model parameters are changed). Now to justify the comparison step $T_t > \epsilon_1$, one has to ensure that $\sup_{t}\vert T_t \vert < 1$ holds. 
It can be verified  that the random variable $T_t$ indeed belongs to $(-1,1)$, $\forall t > 0$. We state it as a proposition here.
\begin{figure}[!h]
	\centering
	\includegraphics[width=120mm, height=60mm]{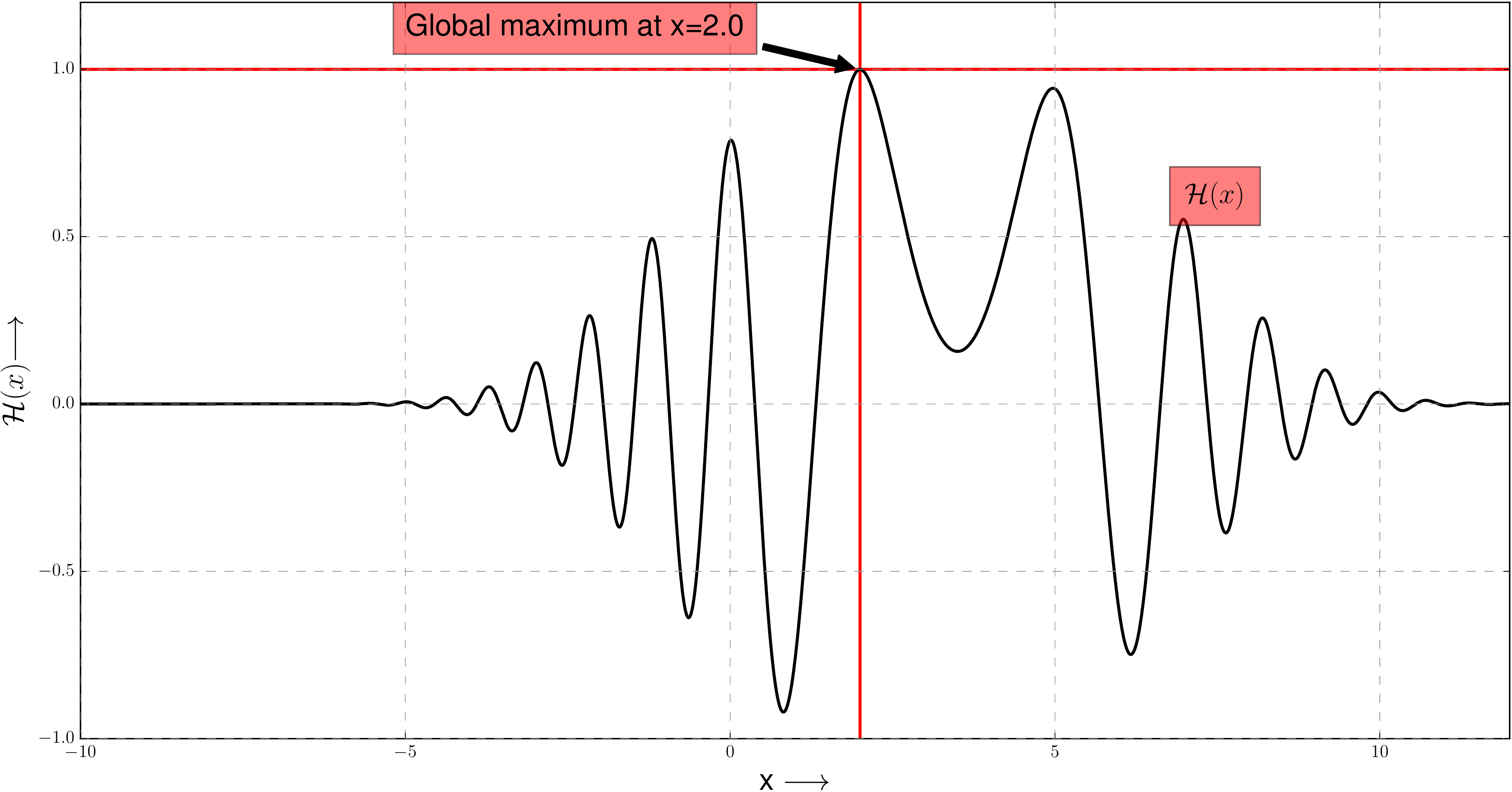}
	\caption{Plot of a real valued objective function defined over $\mathbb{R}$ whose global optimum $x^{*} = 2$.}\label{fig:cedemo1}
\end{figure}
\begin{figure}[!h]
	\centering
	\includegraphics[width=120mm, height=70mm]{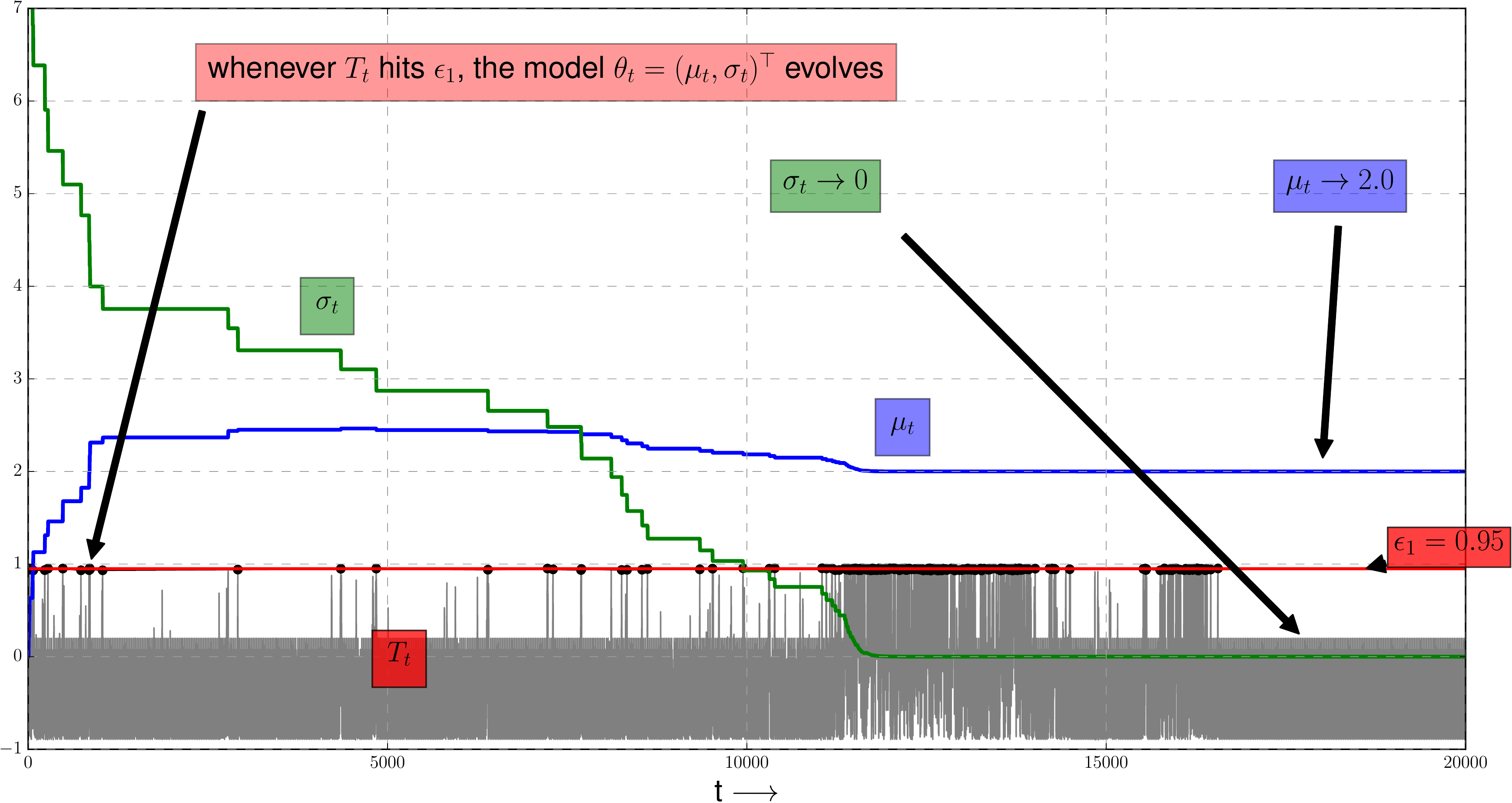}
	\caption{Plot of  $\mu_{t}$,  $\sigma_t$ and $T_t$ when CE2-ND is applied to the objective function from Fig. \ref{fig:cedemo1}. Note that the solution space is a subset of $\mathbb{R}$, and hence we have $\theta_t = (\mu_t, \sigma_t)^{\top} \in \mathbb{R}^{2}$. Now regarding the evolution of the various tracking variables, we have the mean $\mu_t$ converging to $x^{*} = 2$ and the variance $\sigma_t$ converging to $0$ which implies that the model sequence $\theta_t$ is indeed converging to the degenerate distribution concentrated at $x^{*}$. Now note that the $T_t$ variable controls the evolution of the model sequence $\{\theta_t\}$. Indeed the graph clearly illustrates that $\theta_t$ is updated only when $T_t$ hits $\epsilon_1 = 0.95$ ceiling.}\label{fig:cedemo2}
\end{figure}
\begin{proposition}
	For any $T_0 \in (0,1)$, $T_t$ in Equation (\ref{eqn:ce2ndTt}) belongs to $(-1,1)$, $\forall t > 0$.
\end{proposition}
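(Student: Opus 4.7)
The plan is a short, clean induction on $t$, after first rewriting recursion (\ref{eqn:ce2ndTt}) in a more transparent convex-combination form. First I would observe that, because the events $\{\gamma_{t+1} > \gamma^{p}_{t+1}\}$ and $\{\gamma_{t+1} \leq \gamma^{p}_{t+1}\}$ partition the sample space, the random quantity
\begin{equation*}
I_{t+1} \triangleq \mathbb{I}_{\{\gamma_{t+1} > \gamma^{p}_{t+1}\}} - \mathbb{I}_{\{\gamma_{t+1} \leq \gamma^{p}_{t+1}\}}
\end{equation*}
takes values in $\{-1, +1\}$ only. Substituting this into (\ref{eqn:ce2ndTt}) gives $T_{t+1} = (1-c)\, T_t + c\, I_{t+1}$, which exhibits $T_{t+1}$ as a convex combination (since $c \in (0,1)$) of $T_t$ and a point in $\{-1,+1\} \subset [-1,1]$.

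The induction hypothesis is $T_t \in (-1,1)$; the base case $T_0 \in (0,1) \subset (-1,1)$ is immediate from the initialization. For the inductive step I would bound $T_{t+1}$ above and below separately. Using $I_{t+1} \leq 1$ and $T_t < 1$,
\begin{equation*}
T_{t+1} = (1-c)\, T_t + c\, I_{t+1} < (1-c)\cdot 1 + c \cdot 1 = 1,
\end{equation*}
where strict inequality uses $1-c > 0$ together with $T_t < 1$. Symmetrically, $I_{t+1} \geq -1$ and $T_t > -1$ give $T_{t+1} > -1$. Hence $T_{t+1} \in (-1,1)$, closing the induction.

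Finally I would address the reset step in (\ref{eq:gmstarupd}), where $T_t$ is set to $0$ whenever $T_t$ crosses the ceiling $\epsilon_1$. Since $0 \in (-1,1)$, the reset trivially preserves the invariant, so the induction proceeds across resets without modification; the accompanying change of step-size from $c_0$ to $c_t$ is harmless since only $c \in (0,1)$ is used in the bounds. I do not anticipate any real obstacle here: the only subtlety to watch is that the strict inequalities are maintained, which is why it matters that $c < 1$ (otherwise $T_{t+1}$ could jump exactly to $\pm 1$); everything else is a one-line convex-combination argument.
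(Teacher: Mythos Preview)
Your proof is correct and, in fact, cleaner than the paper's. Both of you begin with the same key observation: rewriting (\ref{eqn:ce2ndTt}) as the convex combination $T_{t+1} = (1-c)T_t + cI_{t+1}$ with $I_{t+1}\in\{-1,+1\}$ and $c\in(0,1)$. From there, however, the paths diverge. The paper proceeds by a worst-case analysis: it fixes $I_{t+1}\equiv +1$ (respectively $-1$) for all $t$, sums the resulting geometric series explicitly to obtain $T_t = 1-(1-c)^t(1-T_0)$, and observes that this tends to $1$ while remaining strictly below it; for the mixed case it simply asserts that since both indicator events occur, $|T_t|<1$. Your route is a direct one-line induction on the convex-combination form, bounding each coordinate by the strict inequality $(1-c)T_t + c\cdot(\pm 1) \lessgtr \pm 1$ whenever $T_t\in(-1,1)$.

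What each buys: the paper's explicit formula gives slightly more information (the exact rate at which the worst-case trajectory approaches the boundary), but its treatment of the non-worst-case is informal---``both events occur with non-zero probability, hence $|T_t|<1$'' is not a derivation. Your induction is uniform over \emph{every} realization of the indicator sequence, handles the reset to $0$ and the change of step-size $c\mapsto c_t$ transparently, and needs no case split. It is the more rigorous of the two and I would not change it.
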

\begin{proof}
	Assume $T_0 \in (0,1)$. Now the Equation (\ref{eqn:ce2ndTt}) can be rearranged as
	\begin{gather*}
	T_{t+1} = \left(1-c\right)T_{t} + c(\mathbb{I}_{\{\gamma_{t+1} > \gamma^{p}_{t+1}\}} - \mathbb{I}_{\{\gamma_{t+1} \leq \gamma^{p}_{t+1}\}}),
	\end{gather*}
	where $c \in (0,1)$. At first, we consider the two worst case scenarios. In the worst case, either $\mathbb{I}_{\{\gamma_{t+1} > \gamma^{p}_{t+1}\}} = 1$, $\forall t$ or $\mathbb{I}_{\{\gamma_{t+1} \leq \gamma^{p}_{t+1}\}} = 1$, $\forall t$.  Since the two events  $\{\gamma_{t+1} > \gamma^{p}_{t+1}\}$ and $\{\gamma_{t+1} \leq \gamma^{p}_{t+1}\}$ are mutually exclusive, we will only consider the former event $\I_{\{\gamma_{t+1} > \gamma^{p}_{t+1}\}} = 1$, $\forall t$. In this case,
	\begin{equation*}
	\begin{aligned}
	\lim_{t \rightarrow \infty}T_{t} &= \lim_{t \rightarrow \infty}\left(c + c(1-c) + c(1-c)^{2} + \dots + c(1-c)^{t-1} + (1-c)^{t}T_0\right) \\
	&= \lim_{t \rightarrow \infty}\frac{c(1-(1-c)^{t})}{c} + T_0(1-c)^{t} \\&= \lim_{t \rightarrow \infty}(1-(1-c)^{t}) + T_0(1-c)^{t}  = 1.
	\end{aligned}
	\end{equation*}
	Similarly for the latter event $\mathbb{I}_{\{\gamma_{t+1} \leq \gamma^{p}_{t+1}\}} = 1$, $\forall t$, one can prove that \\$\lim_{t \rightarrow \infty} T_{t} = -1$.\\
	
	In cases other than the worst case scenarios, both  the events $\{\mathbb{I}_{\{\gamma_{t+1} > \gamma^{p}_{t+1}\}} = 1, t \in \N\}$ and $\{\mathbb{I}_{\{\gamma_{t+1} \leq \gamma^{p}_{t+1}\}} = 1, t \in \N\}$ occur with non-zero probability. Hence $ \vert T_t \vert < 1$. This completes the proof.
\end{proof}

\begin{remark}
	The recursion (\ref{eqn:ce2ndgamma}) of $\gamma_t$ might be slow since the increment term $\Delta \gamma_t$ is small. So to accelerate it one might need to multiply the increment term with a constant $K_{\gamma} > 1.0$. In most practical cases, one can easily induce $K_{\gamma}$ from the knowledge about the bounds ($\inf_{x}\mathcal{H}(x)$ and $\sup_{x}\mathcal{H}(x)$) of the objective function $\mathcal{H}$.
\end{remark}

\section{Convergence Analysis}
To analyze the asymptotic behaviour of the algorithm CE2-ND, we utilize the ODE based analysis which is intuitively pleasing while requiring very less restrictions.
\begin{assm}\label{assm:ce2ndgmbd}
	The sequence $\{\gamma_{t}\}_{t \in \mathbb{N}}$ in Equation (\ref{eqn:ce2ndgamma}) satisfies $\sup_{t \in \N}{\vert \gamma_{t} \vert} < \infty$ with probability one.
\end{assm}
\begin{remark}\label{rem:gmbd1}
	Note that this is a technical requirement to prove the convergence. However, it is not straightforward, but is a prerequisite to establish convergence. Thus, one needs to show that this holds, \emph{i.e.}, the iterates remain uniformly bounded. In most pragmatic scenarios, one imposes this requirement by projecting the updates to a compact and convex set. However, in the case of the quantile estimation recursion (\ref{eqn:ce2ndgamma}),  note that the objective function $\mathcal{H}$ is bounded (\emph{i.e.}, $\mathcal{H}(x) \in [\mathcal{H}_{l}, \mathcal{H}_{u}], \forall x)$ and hence the true quantile $\gamma_{\rho}(\mathcal{H}, \cdot)$ belongs to the closed and bounded interval $[\mathcal{H}_{l}, \mathcal{H}_{u}]$. Therefore, one can easily guarantee the above assumption by  projecting the iterates $\gamma_t$ back to the above interval if they drift too far away from the interval.  The ODE analysis in such a case follows roughly along the same lines as below. For more see \cite{kushner2012stochastic}.\\\\\
\end{remark}

Define the filtration $\{\mathcal{F}_{t}\}_{t \in \mathbb{N}}$, where $\mathcal{F}_{t} \triangleq \sigma(\gamma_0, \gamma_i, \mathsf{X}_{i}, 1 \leq i \leq t)$ is the $\sigma$-field  generated by $\gamma_0$, $\gamma_i$ and $\mathsf{X}_{i}$, $1 \leq i \leq t$.\\

As mentioned earlier, the model parameter $\theta_{t}$ is updated only along a subsequence $\{t_{(n)}\}_{n \in \mathbb{N}}$ of $\{t\}_{t \in \mathbb{N}}$. Between $t = t_{(n)}$ and $t = t_{(n+1)}$, the model parameter $\theta_t$ remains constant. So we can analyze the limiting behaviour of $\gamma_{t}$, $\xi^{(0)}_{t}$ and $\xi^{(1)}_{t}$ by keeping $\theta_{t}$ fixed. We now have the following result for recursion (\ref{eqn:ce2ndgamma}):\\
\begin{lemma} \label{lmn:ce2nd-gmconv}
	Assume $\theta_{t} \equiv \theta, \forall t$. Let Assumption \ref{assm:ce2ndgmbd} hold and also let the learning rate $\{\beta_t\}_{t \in \N}$ satisfy Assumption \ref{assm:lnrtce2nd}. Then the sequence $\{\gamma_t\}_{t \in \N}$ defined in Equation (\ref{eqn:ce2ndgamma}) satisfies
	$\gamma_{t} \rightarrow \gamma_{\rho}(\mathcal{H}, \widehat{\theta})$ as $t \rightarrow \infty$ with probability one, where $\widehat{f}_{\theta} = (1-\lambda)f_{\theta} + \lambda f_{\theta_0}$.
\end{lemma}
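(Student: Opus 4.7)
The recursion (\ref{eqn:ce2ndgamma}) is a classical stochastic approximation scheme with a bounded, discrete-valued increment, so the plan is to invoke the ODE method of Borkar/Kushner--Clark. First I would split the increment into its mean field plus a martingale difference noise. Since $\theta_t \equiv \theta$, the sample $\mathsf{X}_{t+1}$ is drawn from the fixed mixture density $\widehat{f}_{\theta}$, and $\gamma_t$ is $\mathcal{F}_t$-measurable, so a direct computation gives
\begin{equation*}
h(\gamma) \triangleq \mathbb{E}\!\left[\Delta\gamma_t(\mathsf{X}_{t+1}) \mid \mathcal{F}_t, \gamma_t = \gamma \right] = -(1-\rho)\,\P_{\widehat{\theta}}(\mathcal{H}(\mathsf{X}) \geq \gamma) + \rho\,\P_{\widehat{\theta}}(\mathcal{H}(\mathsf{X}) \leq \gamma).
\end{equation*}
Writing $M_{t+1} \triangleq \Delta\gamma_t(\mathsf{X}_{t+1}) - h(\gamma_t)$, we get the standard form $\gamma_{t+1} = \gamma_t - \beta_{t+1}(h(\gamma_t) + M_{t+1})$, where $\{M_{t+1}\}$ is a martingale difference with respect to $\{\mathcal{F}_t\}$ and is bounded (since $\vert \Delta\gamma_t(\cdot)\vert \le 1$), so $\sum \beta_{t+1}M_{t+1}$ converges a.s. by the $L^2$ martingale convergence theorem.

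Next I would identify $-h(\gamma)$ as an element of the subdifferential with respect to $\gamma$ of the convex quantile loss $\gamma \mapsto \mathbb{E}_{\widehat{\theta}}[\psi(\mathcal{H}(\mathsf{X}),\gamma)]$ appearing in Lemma \ref{lma:qnopt} (applied to $\widehat{f}_\theta$). Consequently the associated mean-field ODE $\dot{\gamma}(s) = -h(\gamma(s))$ is the subgradient descent flow of this convex function, and its set of equilibria is precisely the set of minimizers, which by Lemma \ref{lma:qnopt} equals $\{\gamma_{\rho}(\mathcal{H}, \widehat{\theta})\}$ (assumed unique under a mild absolute continuity condition on the law of $\mathcal{H}(\mathsf{X})$ under $\widehat{f}_\theta$, since $\mathcal{H}$ is continuous and $\widehat{f}_\theta$ has full support containing $x^*$). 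Thus $\gamma_\rho(\mathcal{H},\widehat{\theta})$ is a globally asymptotically stable equilibrium of the ODE.

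Finally, Assumption \ref{assm:ce2ndgmbd} gives the essential stability hypothesis $\sup_t \vert\gamma_t\vert < \infty$ a.s., Assumption \ref{assm:lnrtce2nd} provides the standard Robbins--Monro step-size conditions, the noise is a bounded martingale difference, and $h$ is monotone non-increasing in $\gamma$ (it is $-1$ times a subgradient of a convex function, so the "$h$ pushes toward the root" property holds). These are exactly the hypotheses of the ODE-based convergence theorem (e.g., Theorem 2 of Chapter 2 in Borkar, or Kushner--Clark), so we conclude $\gamma_t \to \gamma_\rho(\mathcal{H},\widehat{\theta})$ almost surely.

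The main obstacle is the discontinuity of $h$ at points where $\P_{\widehat{\theta}}(\mathcal{H}(\mathsf{X}) = \gamma) > 0$: strictly speaking $h$ is only upper semicontinuous and one must either invoke a differential inclusion version of the ODE method or verify that the law of $\mathcal{H}(\mathsf{X})$ under $\widehat{f}_\theta$ has no atom at $\gamma_\rho(\mathcal{H},\widehat{\theta})$ so that $h$ is continuous in a neighbourhood of the target. The latter follows because $\widehat{f}_\theta$ dominates $f_{\theta_0}$, which is strictly positive on $\mathcal{X}$, combined with the continuity of $\mathcal{H}$ and the fact that $\mathcal{H}(x^*)$ is not isolated; this suffices to rule out an atom and makes the classical ODE argument go through cleanly.
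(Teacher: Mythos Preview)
Your overall strategy---split into mean field plus martingale difference noise, identify the mean field with the subgradient of the convex quantile loss from Lemma~\ref{lma:qnopt}, and invoke an ODE/Lyapunov argument---is exactly the route the paper takes. The paper uses the same Lyapunov function $V(\gamma)=\mathbb{E}_{\widehat{\theta}}[\psi(\mathcal{H}(\mathsf{X}),\gamma)]-\mathbb{E}_{\widehat{\theta}}[\psi(\mathcal{H}(\mathsf{X}),\gamma^{*})]$ implicit in your ``subgradient descent flow'' remark.

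The one substantive divergence is in how the non-smoothness of $h$ is handled. The paper commits from the start to the differential inclusion framework (Theorem~2 and Corollary~4 of Chapter~5 in Borkar), explicitly writing the set-valued map $h^{(1,0)}(\gamma)=-\mathbb{E}_{\widehat{\theta}}[\partial_{\gamma}\psi(\mathcal{H}(\mathsf{X}),\gamma)]$ and verifying that it has convex compact values, satisfies the linear growth bound, and is upper semicontinuous. You instead try to fall back on the classical Chapter~2 theorem by arguing that the law of $\mathcal{H}(\mathsf{X})$ under $\widehat{f}_{\theta}$ has no atom at $\gamma_{\rho}(\mathcal{H},\widehat{\theta})$. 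That argument has two gaps. First, strict positivity of $\widehat{f}_{\theta}$ and continuity of $\mathcal{H}$ do \emph{not} preclude atoms of $\mathcal{H}(\mathsf{X})$: any level set of $\mathcal{H}$ with positive Lebesgue measure produces one (the paper even uses a Plateau function among its test cases). Second, even if $h$ were continuous, Borkar's Chapter~2 theorem needs $h$ Lipschitz, which a CDF-type function need not be. So the atom-free shortcut does not close the argument; you should take the differential inclusion route you yourself flagged, which is precisely what the paper does.
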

\begin{proof}
	First we recall recursion (\ref{eqn:ce2ndgamma}) here:
		{\setlength{\abovedisplayskip}{0pt}\begin{equation}\label{eqn:preq2}
		\begin{aligned}
		\gamma_{t+1} = \gamma_{t} - \beta_{t+1} \Big(-(1-\rho)\mathbb{I}_{\{\mathcal{H}(\mathsf{X}_{t+1}) \geq \gamma_{t}\}} +  \rho \mathbb{I}_{\{\mathcal{H}(\mathsf{X}_{t+1}) \leq \gamma_{t} \}}\Big),
		\end{aligned}
		\end{equation}}
	where $\mathsf{X}_{t+1} \sim \widehat{f}_{\theta}$.\\\\	
	The above equation can be rewritten as
	\begin{equation}\label{eqn:preq1}
	\gamma_{t+1} = \gamma_{t} - \beta_{t+1}\Delta \gamma_{t}(\mathsf{X}_{t+1}),
	\end{equation}
	where $\Delta\gamma_{t}$ is defined in Equation (\ref{eq:gmgradinc}).\\\\	
	The above equation can be further viewed as,
	\begin{flalign*}
	\gamma_{t+1} &= \gamma_{t} - \beta_{t+1} \Delta \gamma_{t}(\mathsf{X}_{t+1})\\
	&= \gamma_{t} + \beta_{t+1}\Big(-\Delta \gamma_{t}(\mathsf{X}_{t+1}) + \mathbb{E}\left[\Delta\gamma_{t} (\mathsf{X}_{t+1}) \vert \mathcal{F}_{t}\right] - \mathbb{E}\left[\Delta\gamma_{t} (\mathsf{X}_{t+1}) \vert \mathcal{F}_{t}\right]\Big)\\	
	&= \gamma_{t} + \beta_{t+1}\left(\mathbb{M}^{(1,0)}_{t+1} - \mathbb{E}\left[\Delta\gamma_{t} (\mathsf{X}_{t+1}) \vert \mathcal{F}_{t}\right]\right),
	\end{flalign*}
	where 
	\begin{flalign}\label{eq:noisetermmt}
	\mathbb{M}^{(1,0)}_{t+1} &\triangleq  \mathbb{E}\left[\Delta{\gamma}_{t} (\mathsf{X}_{t+1}) \vert \mathcal{F}_{t}\right]-\Delta{\gamma}_{t}(\mathsf{X}_{t+1})\nonumber\\
	&= \mathbb{E}_{\widehat{\theta}}\left[\Delta{\gamma}_{t}( \mathsf{X}_{t+1})\right] - \Delta{\gamma}_{t}(\mathsf{X}_{t+1}).
	\end{flalign}
	The above equality follows since $\{\mathsf{X}_{t+1}, t \in \N\}$ is IID.\vspace*{2mm}\\
	Also,
	{\setlength{\abovedisplayskip}{2pt}\begin{flalign}
		-\mathbb{E}\left[\Delta\gamma_{t}(\mathsf{X}_{t+1}) \vert \mathcal{F}_{t}\right] &\in -\mathbb{E}\left[\partial_{\gamma} \psi(\mathcal{H}(\mathsf{X}_{t+1}), \gamma_t) \vert \mathcal{F}_{t}\right] \nonumber\\&=  -\mathbb{E}_{\widehat{\theta}}\left[\partial_{\gamma} \psi(\mathcal{H}(\mathsf{X}_{t+1}), \gamma_t) \right],
		\end{flalign}}
	\noindent
	where $\psi(\cdot,\cdot)$ is as in Lemma \ref{lma:qnopt} and $\partial_{\gamma}\psi$ (the sub-differential of $\psi(\cdot,\gamma)$ \emph{w.r.t.} $\gamma$) is a set function and is defined as follows:
	\begin{equation}\label{eq:psisubdiff}
	\hspace*{-3mm}\partial_{\gamma}\psi(\mathcal{H}(x), \gamma) = \left\{
	\begin{array}{ll}
	\{-(1-\rho)\I_{\{\mathcal{H}(x) \geq \gamma\}} + \rho \I_{\{\mathcal{H}(x) \leq \gamma\}}, 
	\hspace{1mm} \textrm{for } \hspace{0mm} \gamma \neq \mathcal{H}(x),\vspace*{5mm}\\
	\left[  -(1-\rho), \rho\ \right], \hspace{1mm} \textrm{for } \hspace{0mm} \gamma = \mathcal{H}(x),
	\end{array}
	\right.
	\end{equation}
	
			For brevity, let $h^{(1,0)}(\gamma) \triangleq -\mathbb{E}_{\widehat{\theta}}\left[\partial_{\gamma} \psi(\mathcal{H}(\mathsf{X}), \gamma) \right]$, where $\mathsf{X} \sim \widehat{f}_{\theta}$ (Note that we consider the random variable $\mathsf{X}$ instead of $\mathsf{X}_{t+1}$ for notational convenience. This indeed makes sense, since $\{\mathsf{X}_t, t \in \N\}$ is IID and $\mathsf{X}_{t+1} \sim \widehat{f}_{\theta}$). The set function $h^{(1,0)}:[\mathcal{H}_{l}, \mathcal{H}_{u}] \rightarrow \{$subsets of $\mathbb{R}\}$ satisfies the following properties:
	\begin{enumerate}
		\item For each $\gamma \in [\mathcal{H}_{l}, \mathcal{H}_{u}]$, $h^{(1,0)}(\gamma)$ is convex and compact.\vspace*{2mm}\\		
		\hspace*{4mm}Indeed, it follows directly from Equation (\ref{eq:psisubdiff}).  For each $\gamma \in [\mathcal{H}_{l}, \mathcal{H}_{u}]$, note that $-h^{(1,0)}(\gamma)$ is either a singleton or the closed interval $[-(1-\rho),\rho]$.\\
		\item For each $\gamma \in [\mathcal{H}_{l}, \mathcal{H}_{u}]$, we have \\
		$\sup_{y \in h^{(1,0)}(\gamma)} \vert y \vert < K_{1,0}(1+\vert \gamma \vert)$, for some  $0 < K_{1,0}  < \infty$. \vspace*{2mm}\\
		\hspace*{4mm}Indeed, for each $\gamma \in [\mathcal{H}_{l}, \mathcal{H}_{u}]$, note that $-h^{(1,0)}(\gamma)$ is either the scalar \\ $\mathbb{E}_{\widehat{\theta}}\left[-(1-\rho)\I_{\{\mathcal{H}(\mathsf{X}) \geq \gamma\}} +  \rho \I_{\{\mathcal{H}(\mathsf{X}) \leq \gamma\}}\right]$ or the  bounded closed interval $[-(1-\rho),\rho]$. Hence the above bound exists.\\
		\item $h^{(1,0)}$ is upper semi-continuous.\vspace*{2mm}\\	
		\hspace*{4mm}To prove this, one has to show the following: if the sequence $\{\gamma_n\}$ converges to $\bar{\gamma}$ and $\{y_n\}$ converges to $\bar{y}$ with $y_n \in h^{(1,0)}(\gamma_n)$, then $\bar{y} \in h^{(1,0)}(\bar{\gamma})$. 
		Note that for each $\gamma \in [\mathcal{H}_{l}, \mathcal{H}_{u}]$, there are two possibilities for $-h^{(1,0)}(\gamma)$. It is either $\mathbb{E}_{\widehat{\theta}}\left[-(1-\rho)\I_{\{\mathcal{H}(\mathsf{X}) \geq \gamma\}} +  \rho \I_{\{\mathcal{H}(\mathsf{X}) \leq \gamma\}}\right]$ or the closed interval $[-(1-\rho), \rho]$. Also,
		\begin{flalign}\label{eq:h10limcd}
		\mathbb{E}_{\widehat{\theta}}&\left[-(1-\rho)\I_{\{\mathcal{H}(\mathsf{X}) \geq \gamma\}} +  \rho \I_{\{\mathcal{H}(\mathsf{X}) \leq \gamma\}}\right]\nonumber\\ &= -(1-\rho)\P_{\widehat{\theta}}(\mathcal{H}(\mathsf{X}) \geq \gamma) + \rho \P_{\widehat{\theta}}(\mathcal{H}(\mathsf{X}) \leq \gamma)\nonumber\\ 
		&\in [-(1-\rho), \rho].
		\end{flalign}
		Now consider the case when $-y_n = -h^{(1,0)}(\gamma_n) =$\\ $\mathbb{E}_{\widehat{\theta}}\left[-(1-\rho)\I_{\{\mathcal{H}(\mathsf{X}) \geq \gamma_n\}} +  \rho \I_{\{\mathcal{H}(\mathsf{X}) \leq \gamma_n\}}\right]$, then $-y_n = -(1-\rho)\P_{\widehat{\theta}}(\mathcal{H}(\mathsf{X}) \geq \gamma_n) + \rho \P_{\widehat{\theta}}(\mathcal{H}(\mathsf{X}) \leq \gamma_n)$ converges to $-\bar{y} = -(1-\rho)\P_{\widehat{\theta}}(\mathcal{H}(\mathsf{X}) \geq \bar{\gamma}) + \rho \P_{\widehat{\theta}}(\mathcal{H}(\mathsf{X}) \leq \bar{\gamma})$. This follows from the continuity of probability measures. Now from Equation (\ref{eq:h10limcd}), we have $-\bar{y} \in -h^{(1,0)}(\bar{\gamma})$, \emph{i.e.}, $\bar{y} \in h^{(1,0)}(\bar{\gamma})$.\\
		
		Now consider the case when $-y_n \in $ $[-(1-\rho), \rho]$ and $-\bar{y} = $ \\ $\mathbb{E}_{\widehat{\theta}}\left[-(1-\rho)\I_{\{\mathcal{H}(\mathsf{X}) \geq \bar{\gamma}\}} +  \rho \I_{\{\mathcal{H}(\mathsf{X}) \leq \bar{\gamma}\}}\right]$.
		This implies that $\psi(\cdot, \gamma)$ is differentiable at $\gamma = \bar{\gamma}$, while only sub-differentials exist at $\gamma = \gamma_n, \forall n \in \N$. This particular scenario is not possible. The reason being $\psi$ is piece-wise linear in $\gamma$ and $\psi(\cdot, \gamma)$ is differentiable at $\gamma = \bar{\gamma}$.  Therefore, there exists a neighbourhood around $\bar{\gamma}$  such that $\psi(\cdot, \gamma)$ is linear. However, by hypothesis $\{\gamma_n\} \rightarrow \bar{\gamma}$ which is impossible due to the linear behaviour of $\psi$ around $\bar{\gamma}$ and the non-differentiability of $\psi$ at each $\gamma_n$.\\\\
	\end{enumerate}	
Now, regarding the noise term $\mathbb{M}^{(1,0)}_{t}$ (defined in Equation (\ref{eq:noisetermmt})), observe that $\mathbb{M}^{(1,0)}_{t}$ is $\mathcal{F}_{t}$-measurable $\forall t \in \N \setminus 
\{0\}$ and is integrable. Also, it is not hard to verify that $\{\mathbb{M}^{(1,0)}_{t}$, $t \in \N \setminus \{0\} \}$ is a martingale difference noise sequence. Indeed, almost surely,
{\setlength{\belowdisplayskip}{4pt}\begin{flalign*}
\mathbb{E}[\mathbb{M}^{(1,0)}_{t+1} | \mathcal{F}_{t}] &= \mathbb{E}\Big[\mathbb{E}_{\widehat{\theta}}\left[\Delta{\gamma}_{t} (\mathsf{X}_{t+1})\right] - \Delta{\gamma}_{t}(\mathsf{X}_{t+1}) \Big| \mathcal{F}_{t}\Big]\\
&= \mathbb{E}\Big[\mathbb{E}_{\widehat{\theta}}[\Delta{\gamma}_{t} (\mathsf{X}_{t+1})] \Big| \mathcal{F}_{t}\Big] - \mathbb{E}\left[\Delta{\gamma}_{t}(\mathsf{X}_{t+1})\Big| \mathcal{F}_{t}\right] \\
&= \mathbb{E}_{\widehat{\theta}}[\Delta{\gamma}_{t}(\mathsf{X}_{t+1})] -\mathbb{E}_{\widehat{\theta}}[\Delta{\gamma}_{t}(\mathsf{X}_{t+1})] \\
&= 0.
\end{flalign*}}
The third equality above holds since $\{\mathsf{X}_{t+1}$, $t \in \N\}$ is IID.\vspace*{2mm}\\
Also, since $\Delta{\gamma}_{t}(\mathsf{X}_{t+1})$ is bounded almost surely, we find that $\Delta{\gamma}_{t}(\mathsf{X}_{t+1})$ has finite first and second order moments. Hence, 
{\setlength{\belowdisplayskip}{4pt}\begin{flalign}\label{eqn:bdeq}
\mathbb{E}\left[\vert \mathbb{M}^{(1,0)}_{t+1} \vert^{2}|\mathcal{F}_{t}\right] 
&= \mathbb{E}\Big[\Big(\mathbb{E}_{\widehat{\theta}}\left[\Delta{\gamma}_{t}(\mathsf{X}_{t+1})\right] - \Delta{\gamma}_{t}(\mathsf{X}_{t+1}) \Big)^{2} \Big| \mathcal{F}_{t}\Big] \nonumber\\
&\leq K_{1,1}(1+\vert \gamma_{t} \vert^{2}),
\end{flalign}}
for some  $0 < K_{1,1} < \infty$.

Also, the stability of the sequence $\{\gamma_t\}$ is guaranteed by  Assumption \ref{assm:ce2ndgmbd}, where we assume the almost sure boundedness of the sequence $\{\gamma_{t}\}$. Now, by appealing to Theorem $2$  in Chapter $5$ of \cite{borkar2008stochastic}, we deduce that the stochastic sequence $\{\gamma_{t}\}$ asymptotically tracks the following differential inclusion 
\begin{flalign} \label{eqn:dtode}
\frac{d}{dt}\gamma(t) \in h^{(1,0)}(\gamma(t)) &= -\mathbb{E}_{\widehat{\theta}}\left[\partial_{\gamma} \psi(\mathcal{H}(\mathsf{X}), \gamma(t))\right] \nonumber\\ &= -\partial_{\gamma}\mathbb{E}_{\widehat{\theta}}\left[\psi(\mathcal{H}(\mathsf{X}), \gamma(t))\right].
\end{flalign}
Note that the interchange of $\mathbb{E}_{\widehat{\theta}}[\cdot]$ and $\partial_{\gamma}$ in the above differential inclusion follows by appealing to the Dominated Convergence Theorem.\\

Now we analyze the  stability of the above differential inclusion. Note that by Lemma \ref{lma:qnopt}, we know that $\gamma^{*} \triangleq \gamma_{\rho}(\mathcal{H}, \widehat{\theta})$ is the unique root of the function $h^{(1,0)}(\cdot)$ and hence it is a fixed point of the flow induced by the above differential inclusion. Now, define $V(\gamma) \triangleq \mathbb{E}_{\widehat{\theta}}\left[\psi(\mathcal{H}(\mathsf{X}), \gamma)\right] - \mathbb{E}_{\widehat{\theta}}\left[\psi(\mathcal{H}(\mathsf{X}), \gamma^{*})\right]$. It is easy to verify that $V$ is continuously differentiable and  $\mathbb{E}_{\widehat{\theta}}\left[\psi(\mathcal{H}(\mathsf{X}), \gamma)\right]$ is a convex function and hence $\gamma^{*}$ is its global minimum. Hence $V(\gamma) > 0$, $\forall \gamma \in \mathbb{R} \backslash \{\gamma^{*}\}$. Further $V(\gamma^{*}) = 0$ and $V(\gamma) \rightarrow \infty$ as $\vert \gamma \vert \rightarrow \infty$. So $V(\cdot)$ is a Lyapunov function. Also note that $\nabla V(\gamma)^{\top}h^{(1,0)}(\gamma) \leq 0$. So $\gamma^{*}$ is the global attractor of the flow induced by the differential inclusion defined in Equation (\ref{eqn:dtode}). Thus by appealing to Corollary 4 in Chapter 5 of \cite{borkar2008stochastic}, we obtain that the iterates $\gamma_{t}$ converge almost surely to $\gamma^{*} = \gamma_{\rho}(\mathcal{H}, \widehat{\theta})$. This completes the proof of Lemma \ref{lmn:ce2nd-gmconv}.
			
\end{proof}
Lemma \ref{lmn:ce2nd-gmconv} claims that if the model parameter is held constant, i.e., $\theta_{t} \equiv \theta, \forall t$, then $\gamma_{t}$ successfully tracks $\gamma_{\rho}(\mathcal{H}, \widehat{\theta})$: the $(1-\rho)$-quantile of $\mathcal{H}$ w.r.t. the mixture PDF $\widehat{f}_{\theta}$.

Now we define the filtration $\{\mathcal{F}_{t}\}_{t \in \mathbb{N}}$ where the $\sigma$-field \\$\mathcal{F}_t$ = $\sigma\left(\gamma_i, \gamma^{p}_i, \xi^{(0)}_i, \xi^{(1)}_i, \theta_i, 0 \leq i \leq t; \mathsf{X}_{i}, 1 \leq i \leq t\right)$, $t \in \mathbb{N}$. 

\begin{lemma}\label{lmn:xiconv}
	Assume $\theta_{t} \equiv \theta, \forall t$.  Let Assumptions  \ref{assm:lnrtce2nd} and \ref{assm:ce2ndgmbd} hold. Then almost surely,
	\begin{flalign}
	&1.\hspace*{4mm}\lim_{t \rightarrow \infty} \xi^{(0)}_{t} = \xi^{(0)}_{*} = \frac{\mathbb{E}_{\widehat{\theta}}\left[ \mathsf{g}_{1}\left(\mathcal{H}(\mathsf{X}), \mathsf{X}, \gamma_{\rho}(\mathcal{H}, \widehat{\theta})\right)\right]}{\mathbb{E}_{\widehat{\theta}}\left[\mathsf{g}_{0}\left(\mathcal{H}(\mathsf{X}), \gamma_{\rho}(\mathcal{H}, \widehat{\theta})\right)\right]},\hspace*{3cm}\label{eqn:xi0limit}\\
	&2.\hspace*{4mm}\lim_{t \rightarrow \infty} \xi^{(1)}_{t} = \xi^{(1)}_{*} =  \frac{\mathbb{E}_{\widehat{\theta}}\left[\mathsf{g}_{2}\left(\mathcal{H}(\mathsf{X}), \mathsf{X}, \gamma_{\rho}(\mathcal{H}, \widehat{\theta}), \xi^{(0)}_{*}\right)\right]}{\mathbb{E}_{\widehat{\theta}}\left[\mathsf{g}_{0}\left(\mathcal{H}(\mathsf{X}), \gamma_{\rho}(\mathcal{H}, \widehat{\theta})\right)\right]}.\label{eqn:xi1limit}
	\end{flalign}
	
	$3$. If $\gamma_{\rho}(\mathcal{H}, \widehat{\theta}) > \gamma_{\rho}(\mathcal{H}, \widehat{\theta^{p}})$, then $\{T_{t}\}_{t \in \mathbb{N}}$ in Equation (\ref{eqn:ce2ndTt}) satisfies \\\hspace*{10mm} $\lim_{t \rightarrow \infty} T_{t} = 1$ a.s.
\end{lemma}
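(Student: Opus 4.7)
The plan is to analyze the three claims via the ODE-based stochastic approximation framework used in Lemma \ref{lmn:ce2nd-gmconv}, exploiting the fact that under $\theta_t \equiv \theta$ the recursions are unilaterally coupled: $\gamma_t$ does not depend on $\xi^{(0)}_t, \xi^{(1)}_t$, and $\xi^{(0)}_t$ does not depend on $\xi^{(1)}_t$. Since Lemma \ref{lmn:ce2nd-gmconv} already delivers $\gamma_t \to \gamma^{*} \triangleq \gamma_{\rho}(\mathcal{H}, \widehat{\theta})$ a.s., the idea is to propagate this convergence downstream, treating $\gamma_t$ (and later $\xi^{(0)}_t$) as asymptotically vanishing parameter perturbations driving the subsequent linear recursions.

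For part 1, I would rewrite \eqref{eqn:ce2ndxi0} in the standard Robbins--Monro form
\begin{equation*}
\xi^{(0)}_{t+1} = \xi^{(0)}_{t} + \beta_{t+1}\bigl(h^{(0)}(\xi^{(0)}_{t}, \gamma^{*}) + \mathbb{M}^{(2,0)}_{t+1} + \epsilon^{(0)}_{t+1}\bigr),
\end{equation*}
where $h^{(0)}(\xi, \gamma) \triangleq \mathbb{E}_{\widehat{\theta}}[\mathsf{g}_{1}(\mathcal{H}(\mathsf{X}), \mathsf{X}, \gamma)] - \xi\, \mathbb{E}_{\widehat{\theta}}[\mathsf{g}_{0}(\mathcal{H}(\mathsf{X}), \gamma)]$, $\mathbb{M}^{(2,0)}_{t+1}$ is a martingale-difference term analogous to $\mathbb{M}^{(1,0)}_{t+1}$ of Lemma \ref{lmn:ce2nd-gmconv} (with square integrability verified as in \eqref{eqn:bdeq}), and $\epsilon^{(0)}_{t+1} \triangleq h^{(0)}(\xi^{(0)}_{t}, \gamma_t) - h^{(0)}(\xi^{(0)}_{t}, \gamma^{*}) \to 0$ a.s.\ by $\gamma_t \to \gamma^{*}$ combined with continuity of $h^{(0)}(\xi, \cdot)$ at $\gamma^{*}$, which follows from the no-atom property $\P_{\widehat{\theta}}(\mathcal{H}(\mathsf{X}) = \gamma^{*}) = 0$ (a consequence of the continuity of $\mathcal{H}$ and the non-degeneracy of the mixture density $\widehat{f}_{\theta}$). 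The associated limiting ODE $\dot{\xi} = \mathbb{E}_{\widehat{\theta}}[\mathsf{g}_{1}(\mathcal{H}(\mathsf{X}), \mathsf{X}, \gamma^{*})] - \xi\, \mathbb{E}_{\widehat{\theta}}[\mathsf{g}_{0}(\mathcal{H}(\mathsf{X}), \gamma^{*})]$ is linear with a unique, globally exponentially stable equilibrium $\xi^{(0)}_{*}$ of \eqref{eqn:xi0limit}, since its decay coefficient $\mathbb{E}_{\widehat{\theta}}[\mathsf{g}_{0}(\mathcal{H}(\mathsf{X}), \gamma^{*})]$ is strictly positive (as $S > 0$ and $\P_{\widehat{\theta}}(\mathcal{H}(\mathsf{X}) \geq \gamma^{*}) \geq \rho$). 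Invoking the perturbed version of Theorem 2 of Chapter 2 of \cite{borkar2008stochastic} then yields $\xi^{(0)}_t \to \xi^{(0)}_{*}$ a.s. Part 2 is structurally identical: $\gamma_t$ and $\xi^{(0)}_t$ are both treated as converging inputs to recursion \eqref{eqn:ce2ndxi1}, and the resulting componentwise matrix-valued linear ODE has the unique globally asymptotically stable equilibrium $\xi^{(1)}_{*}$ of \eqref{eqn:xi1limit}.

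For part 3, I would first apply Lemma \ref{lmn:ce2nd-gmconv} to the auxiliary recursion \eqref{eqn:ce2ndgammap}, which coincides in form with \eqref{eqn:ce2ndgamma} driven by samples from $\widehat{f}_{\theta^{p}}$ with the constant parameter $\theta^{p}$, to conclude $\gamma^{p}_t \to \gamma_{\rho}(\mathcal{H}, \widehat{\theta^{p}})$ a.s. Combined with $\gamma_t \to \gamma_{\rho}(\mathcal{H}, \widehat{\theta})$ and the strict inequality hypothesis, there exists an almost-surely finite random $t_0$ such that $\gamma_{t+1} > \gamma^{p}_{t+1}$ for every $t \geq t_0$. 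From that point \eqref{eqn:ce2ndTt} collapses to the deterministic linear recursion $T_{t+1} = (1-c)T_t + c$, whence $T_t - 1 = (1-c)^{t-t_0}(T_{t_0}-1) \to 0$ as $c \in (0,1)$, establishing $T_t \to 1$ a.s.

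The principal technical difficulty lies in parts 1 and 2: since all recursions share a single learning rate, a genuine two-timescale decoupling is unavailable, and one must justify viewing $\gamma_t$ (resp.\ $\xi^{(0)}_t$) as an asymptotically vanishing perturbation of the drift. The continuity of $h^{(0)}(\xi, \cdot)$ at $\gamma^{*}$ needed for this is the delicate ingredient, because $\mathsf{g}_{0}, \mathsf{g}_{1}, \mathsf{g}_{2}$ all contain indicators at level $\gamma$; the no-atom property of $\mathcal{H}(\mathsf{X})$ under $\widehat{f}_{\theta}$ at $\gamma^{*}$ is what rules out a jump discontinuity. A secondary point is the almost-sure stability of the $\xi^{(\cdot)}$ iterates, which follows from the same bounded-drift structure employed in Lemma \ref{lmn:ce2nd-gmconv} but should be noted explicitly (or enforced by projection, as in Remark \ref{rem:gmbd1}).
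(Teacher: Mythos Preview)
Your proposal is correct and follows essentially the same route as the paper: exploit the unilateral coupling to replace $\gamma_t$ (and then $\xi^{(0)}_t$) by their limits plus an $o(1)$ perturbation, reduce each recursion to a linear ODE $\dot{\xi}=A\xi+b$ with $A=-\mathbb{E}_{\widehat{\theta}}[\mathsf{g}_0(\cdot,\gamma^{*})]\,I$ negative definite and hence a unique globally asymptotically stable equilibrium, and for part~3 argue that the indicator in \eqref{eqn:ce2ndTt} is eventually constantly~$1$. The only point of divergence is stability: the paper obtains $\sup_t\|\xi^{(\cdot)}_t\|<\infty$ a.s.\ via the Borkar--Meyn $\infty$-system criterion (Theorem~7, Chapter~3 of \cite{borkar2008stochastic}), whereas you defer this as a secondary remark---note that the ``bounded-drift structure employed in Lemma~\ref{lmn:ce2nd-gmconv}'' is not quite the right reference, since stability there was \emph{assumed} (Assumption~\ref{assm:ce2ndgmbd}) rather than derived, so you should invoke Borkar--Meyn (which the linear drift here permits) or projection directly.
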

\begin{proof}
	$1.$ First, we recall Equation (\ref{eqn:ce2ndxi0}) below:
	\begin{equation}\label{eq:c0e1}
	\begin{aligned}
	\xi^{(0)}_{t+1} = \xi^{(0)}_{t} +  \beta_{t+1} \Big(\mathsf{g}_{1}(\mathcal{H}(\mathsf{X}_{t+1}), \mathsf{X}_{t+1}, \gamma_{t})& - \xi^{(0)}_{t}\mathsf{g}_{0}\left(\mathcal{H}(\mathsf{X}_{t+1}), \gamma_{t}\right)\Big),\\
	&\textrm{ where } \mathsf{X}_{t+1} \sim \widehat{f}_{\theta_{t}}.
	\end{aligned}
	\end{equation}
	Note that the above recursion of $\xi^{(0)}_{t}$ depends on $\gamma_{t}$, but not the other way. This implies that we can replace $\gamma_{t}$ by its limit point $\gamma_{\rho}(\mathcal{H}, \widehat{\theta})$ and a bias term which goes to zero as $t \rightarrow \infty$. We denote the decaying bias term using the notation $o(1)$.
	Further, using the hypothesis that $\theta_{t} = \theta$ and from Equation (\ref{eq:c0e1}), we get,
	\begin{equation} \label{eq:c0}
	\xi^{(0)}_{t+1} = \xi^{(0)}_{t} + \beta_{t+1} \left(h^{(2,0)}(\xi^{(0)}_{t}) + \mathbb{M}^{(2,0)}_{t+1} + o(1)\right),\hspace*{3cm}
	\end{equation}
	\begin{equation} \label{eq:hm0}
	\begin{aligned}
	\mathrm{where} \hspace{1mm}
	h^{(2,0)}(x) \triangleq -\mathbb{E}\left[x \mathsf{g}_{0}\left(\mathcal{H}(\mathsf{X}_{t+1}), \gamma_{\rho}(\mathcal{H}, \widehat{\theta})\right) \Big| \mathcal{F}_{t}\right] + \\ \mathbb{E}\left[ \mathsf{g}_{1}\left(\mathcal{H}(\mathsf{X}_{t+1}), \mathsf{X}_{t+1}, \gamma_{\rho}(\mathcal{H}, \widehat{\theta})\right) \Big|  \mathcal{F}_{t}\right],
	\end{aligned}
	\end{equation}
	\begin{equation*}
	\begin{aligned}
	\mathbb{M}^{(2,0)}_{t+1} &\triangleq \mathsf{g}_{1}\left(\mathcal{H}(\mathsf{X}_{t+1}), \mathsf{X}_{t+1}, \gamma_{\rho}(\mathcal{H}, \widehat{\theta})\right) - \\ \mathbb{E}&\left[\mathsf{g}_{1}\left(\mathcal{H}(\mathsf{X}_{t+1}), \mathsf{X}_{t+1}, \gamma_{\rho}(\mathcal{H}, \widehat{\theta})\right) \Big|  \mathcal{F}_{t}\right] - \xi^{(0)}_{t}\mathsf{g}_{0}\left(\mathcal{H}(\mathsf{X}_{t+1}), \gamma_{\rho}(\mathcal{H}, \widehat{\theta})\right) + \\  \mathbb{E}&\left[\xi^{(0)}_{t} \mathsf{g}_{0}\left(\mathcal{H}(\mathsf{X}_{t+1}), \gamma_{\rho}(\mathcal{H}, \widehat{\theta})\right) \Big| \mathcal{F}_{t}\right] \textrm{ and } \mathsf{X}_{t+1} \sim \widehat{f}_{\theta}.
	\end{aligned}
	\end{equation*}
	Since $\mathsf{X}_{t+1}$ is independent of the $\sigma$-field $\mathcal{F}_{t}$,
	the function $h^{(2,0)}(\cdot)$ in Equation (\ref{eq:hm0}) can be rewritten as
	\begin{equation*}
	\begin{aligned}
	h^{(2,0)}(x) = -\mathbb{E}_{\widehat{\theta}}\left[x \mathsf{g}_{0}\left(\mathcal{H}(\mathsf{X}), \gamma_{\rho}(\mathcal{H}, \widehat{\theta})\right)\right] + \mathbb{E}_{\widehat{\theta}}\left[\mathsf{g}_{1}\left(\mathcal{H}(\mathsf{X}), \mathsf{X}, \gamma_{\rho}(\mathcal{H}, \widehat{\theta})\right)\right],
	\end{aligned}
	\end{equation*}
	where $\mathsf{X} \sim \widehat{f}_{\theta}$. It is easy to verify that $\mathbb{M}^{(2,0)}_{t}$, $t \in \mathbb{N}$ is a martingale difference sequence, \emph{i.e.}, $\mathbb{M}^{(2,0)}_{t}$ is $\mathcal{F}_{t}$-measurable, integrable and $\mathbb{E}[\mathbb{M}^{(2,0)}_{t+1} | \mathcal{F}_t] = 0$ \emph{a.s.}, $\forall t \in \mathbb{N}$. It is also easy to verify that $h^{(2,0)}(\cdot)$ is Lipschitz continuous. Also since $S(\cdot)$ is bounded above and $\widehat{f}_{\theta}(\cdot)$ has finite first and second moments we have almost surely,
	\[\mathbb{E}\left[\Vert \mathbb{M}^{(2,0)}_{t+1} \Vert^{2} \vert \mathcal{F}_{t}\right] \leq K_{2,0}(1+\Vert \xi^{(0)}_t \Vert^{2}), \forall t \geq 0, \textrm{ for some } 0 < K_{2,0} < \infty. \]
	Now consider the ODE
	\begin{equation}\label{eq:xi0ode2}
	\frac{d}{dt}\xi^{(0)}(t) = h^{(2,0)}(\xi^{(0)}(t)).
	\end{equation} 
	We may rewrite the above ODE as,
	\[\frac{d}{dt}\xi^{(0)}(t) = A\xi^{(0)}(t) + b^{(0)},\]
	where $A$ is a  diagonal matrix with $A_{ii} = -\mathbb{E}_{\widehat{\theta}}\left[\mathsf{g}_{0}\left(\mathcal{H}(\mathsf{X}), \gamma_{\rho}(\mathcal{H}, \widehat{\theta})\right)\right]$, $0 \leq i < m$ and  $b^{(0)} = \mathbb{E}_{\widehat{\theta}}\left[\mathsf{g}_{1}\left(\mathcal{H}(\mathsf{X}), \mathsf{X}, \gamma_{\rho}(\mathcal{H}, \widehat{\theta})\right)\right]$. In \cite{borkar2000ode} and Chapter $3$ of \cite{borkar2008stochastic}, an ODE based analysis has been developed to assure the stability (boundedness almost surely) of stochastic approximation recursions under general conditions. We apply the result from there for our case. Indeed, consider the following ODE: 
	\begin{flalign}\label{eqn:xi0infty}
	\frac{d}{dt}\xi^{(0)}(t) = \lim_{\eta \rightarrow \infty}\frac{h^{(2,0)}(\eta\xi^{(0)}(t))}{\eta} = A \xi^{(0)}(t).
	\end{flalign}
	Since the matrix $A$ has the same value for all the diagonal elements, $A$ has only one eigenvalue: $\lambda(A)$ = $-\mathbb{E}_{\widehat{\theta}}\left[\mathsf{g}_{0}\left(\mathcal{H}(\mathsf{X}), \gamma_{\rho}(\mathcal{H}, \widehat{\theta})\right)\right]$ with multiplicity $m$. Also observe that $\lambda(A) < 0$. Hence the ODE (\ref{eqn:xi0infty}) is  globally asymptotically stable to the origin. Using Theorem 7, Chapter 3 of \cite{borkar2008stochastic}, the iterates $\{\xi^{(0)}_{t}\}_{t \in \mathbb{N}}$ are stable \emph{a.s.}, \emph{i.e.}, $\sup_{t \in \mathbb{N}}{\Vert \xi^{(0)}_{t} \Vert} < \infty$ \emph{a.s.}   
	
	Again, by using the earlier argument that the eigenvalues $\lambda(A)$ of $A$ are negative and identical, the point $-A^{-1}b^{(0)}$ can be seen to be a globally asymptotically stable equilibrium of the ODE (\ref{eq:xi0ode2}). By using Corollary 4, Chapter 2 of \cite{borkar2008stochastic}, we can conclude that
	\begin{equation*}
	\lim_{t \rightarrow \infty} \xi^{(0)}_{t} = -A^{-1}b^{(0)} \emph{a.s.} = \frac{E_{\widehat{\theta}}\left[\mathsf{g}_{1}\left(\mathcal{H}(\mathsf{X}), \mathsf{X}, \gamma_{\rho}(\mathcal{H}, \widehat{\theta})\right)\right]}{E_{\widehat{\theta}}\left[\mathsf{g}_{0}\left(\mathcal{H}(\mathsf{X}), \gamma_{\rho}(\mathcal{H}, \widehat{\theta})\right)\right]}\hspace*{4mm} \emph{a.s.}
	\end{equation*}
	This completes the proof of Equation (\ref{eqn:xi0limit}).\\\\
	$2.$ First, we recall the matrix recursion (\ref{eqn:ce2ndxi1}) below:
	\begin{equation}\label{eq:c0e2}
	\begin{aligned}
	\xi^{(1)}_{t+1} = \xi^{(1)}_{t} + \beta_{t+1}\Big(\mathsf{g}_{2}(\mathcal{H}(\mathsf{X}_{t+1}), \mathsf{X}_{t+1}, \gamma_{t}, \xi^{(0)}_t)& - \xi^{(1)}_{t}\mathsf{g}_{0}\left(\mathcal{H}(\mathsf{X}_{t+1}), \gamma_{t}\right)\Big)\\
	&\textrm{ where } \mathsf{X}_{t+1} \sim \widehat{f}_{\theta_{t}}.
	\end{aligned}
	\end{equation}
	As in the earlier proof, we also assume $\theta_{t} = \theta$. Also note that $\xi^{(1)}_{t}$, $\xi^{(0)}_{t}$ and $\gamma_{t}$ are on the same timescale. However, the recursion of $\gamma_{t}$ proceeds independently and in particular does not depend on $\xi^{(0)}_{t}$ and $\xi^{(1)}_{t}$. Also, there is a unilateral coupling of $\xi^{(1)}_{t}$ on $\xi^{(0)}_{t}$ and $\gamma_{t}$, but not the other way. Hence, while analyzing recursion (\ref{eq:c0e2}), one may replace $\gamma_t$ and $\xi^{(0)}_{t}$ in Equation (\ref{eq:c0e2}) with their limit points $\gamma_{\rho}(\mathcal{H}, \theta)$  and $\xi^{(0)}_{*}$ respectively and a decaying bias term which is $o(1)$. Now, by considering all the above observations, we rewrite the Equation (\ref{eq:c0e2}) as, 
	\begin{equation} \label{eqn:c1}
	\begin{aligned}
	\xi^{(1)}_{t+1} = \xi^{(1)}_{t} + \beta_{t+1} \left(h^{(2,1)}(\xi^{(1)}_{t}) + \mathbb{M}^{(2,1)}_{t+1} + o(1)\right),\hspace*{3cm}
	\end{aligned}
	\end{equation}
	\begin{equation}\label{eqn:hmc1}
	\begin{aligned}
	\hspace*{-6mm}\textrm{ where }  h^{(2,1)}(x) \triangleq
	\mathbb{E}\left[\mathsf{g}_{2}\left(\mathcal{H}(\mathsf{X}_{t+1}), \mathsf{X}_{t+1}, \gamma_{\rho}(\mathcal{H}, \widehat{\theta}), \xi^{(0)}_{*}\right) \Big| \mathcal{F}_{t}\right] -\\ \mathbb{E}\left[ x \mathsf{g}_{0}\left(\mathcal{H}(\mathsf{X}_{t+1}), \gamma_{\rho}(\mathcal{H},\widehat{\theta})\right) \Big|  \mathcal{F}_{t}\right] \textrm{ and } 
	\end{aligned}
	\end{equation}
	\begin{equation}
	\begin{aligned}
	\mathbb{M}^{(2,1)}_{t+1}& \triangleq \mathbb{E}\left[\xi^{(1)}_{t}\mathsf{g}_{0}\left(\mathcal{H}(\mathsf{X}_{t+1}), \gamma_{\rho}(\mathcal{H},\widehat{\theta})\right) \Big\vert \mathcal{F}_t\right] - \\ \xi^{(1)}_{t}&\mathsf{g}_{0}\left(\mathcal{H}(\mathsf{X}_{t+1}), \gamma_{\rho}(\mathcal{H},\widehat{\theta})\right)   -  \mathbb{E}\left[\mathsf{g}_{2}\left(\mathcal{H}(\mathsf{X}_{t+1}), \mathsf{X}_{t+1}, \gamma_{\rho}(\mathcal{H}, \widehat{\theta}), \xi^{(0)}_{*}\right) \Big| \mathcal{F}_{t}\right] + \\ &\mathsf{g}_{2}\left(\mathcal{H}(\mathsf{X}_{t+1}), \mathsf{X}_{t+1}, \gamma_{\rho}(\mathcal{H}, \widehat{\theta}), \xi^{(0)}_{*}\right), \textrm{ where } \mathsf{X}_{t+1} \sim \widehat{f}_{\theta}.
	\end{aligned}
	\end{equation}
	Since $\mathsf{X}_{t+1}$ is independent of the $\sigma$-field $\mathcal{F}_{t}$,
	the function $h^{(2,1)}(\cdot)$ in Equation (\ref{eqn:hmc1}) can be rewritten as
	\begin{equation}
	\begin{aligned}
	h^{(2,1)}(x) = 
	\mathbb{E}_{\widehat{\theta}}\left[\mathsf{g}_{2}\left(\mathcal{H}(\mathsf{X}), \mathsf{X}, \gamma_{\rho}(\mathcal{H}, \widehat{\theta}), \xi^{(0)}_{*}\right)\right] - \mathbb{E}_{\widehat{\theta}}\left[x \mathsf{g}_{0}\left(\mathcal{H}(\mathsf{X}), \gamma_{\rho}(\mathcal{H},\widehat{\theta})\right)\right],
	\end{aligned}
	\end{equation}
	where $\mathsf{X} \sim \widehat{f}_{\theta}(\cdot)$. It is not difficult to verify that $\mathbb{M}^{(2,1)}_{t+1}$, $t \in \mathbb{N}$ is a martingale difference noise sequence and $h^{(2,1)}(\cdot)$ is Lipschitz continuous. Also since $S(\cdot)$ is bounded and $\widehat{f}_{\theta}(\cdot)$ has finite first and second moments we get,
	\[\mathbb{E}\left[\Vert \mathbb{M}^{(2,1)}_{t+1} \Vert^{2} \vert \mathcal{F}_{t}\right] \leq K_{2,1}(1+\Vert \xi^{(1)}_t \Vert^{2}), \forall t \in \mathbb{N}, \textrm{ for some } 0 < K_{2,1} < \infty. \]
	Now consider the ODE given by
	\begin{equation}\label{eq:ode3}
	\begin{aligned}
	\frac{d}{dt}\xi^{(1)}(t) = h^{(2,1)}(\xi^{(1)}(t)), \hspace*{5mm} t \in \bbbr_{+}.
	\end{aligned}
	\end{equation} 
	By rewriting the above equation we get,
	\[\frac{d}{dt}\xi^{(1)}(t) = A\xi^{(1)}(t) + b^{(1)}, \hspace*{5mm} t \in \bbbr_{+},\]
	where $A$ is a diagonal matrix as before, \emph{i.e.},  $A_{ii} = -\mathbb{E}_{\widehat{\theta}}\left[\mathsf{g}_{0}\left(\mathcal{H}(\mathsf{X}), \gamma_{\rho}(\mathcal{H}, \widehat{\theta})\right)\right]$, $\forall i, 0 \leq i < k$ and $b^{(1)} = \mathbb{E}_{\widehat{\theta}}\left[\mathsf{g}_{2}\left(\mathcal{H}(\mathsf{X}), \mathsf{X}, \gamma_{\rho}(\mathcal{H}, \widehat{\theta}), \xi^{(0)}_{*}\right)\right]$. Now consider the ODE in the $\infty$-system 
	\[\frac{d}{dt}{\xi}^{(1)}(t) = \lim_{\eta \rightarrow \infty}\frac{1}{\eta}h^{(2,1)}(\eta{\xi}^{(1)}(t)) = A{\xi}^{(1)}(t).\]
	Again, the eigenvalue $\lambda(A)$ =  $-\mathbb{E}_{\widehat{\theta}}\left[\mathsf{g}_{0}\left(\mathcal{H}(\mathsf{X}), \gamma_{\rho}(\mathcal{H}, \widehat{\theta})\right)\right]$ of $A$ is negative and is of multiplicity $m$ and hence origin is the unique globally asymptotically stable equilibrium of the $\infty$-system. Therefore it follows that the iterates $\{{\xi}^{(1)}_{t}\}_{t \in \mathbb{N}}$ are almost surely stable, \emph{i.e.}, $\sup_{t \in \mathbb{N}}{\Vert {\xi}^{(0)}_{t} \Vert} < \infty$ \emph{a.s.}, see Theorem 7, Chapter 3 of \cite{borkar2008stochastic}.
	
	Again, by using the earlier argument that the eigenvalues $\lambda(A)$ of $A$ are negative and identical, the point $-A^{-1}b^{(1)}$ can be seen to be a globally asymptotically stable equilibrium of the ODE (\ref{eq:ode3}). By Corollary 4, Chapter 2 of \cite{borkar2008stochastic},  it follows that 
	\begin{equation*}
	\lim_{t \rightarrow \infty}{\xi}^{(1)}_t = -A^{-1}b^{(1)}\hspace*{2mm}a.s. = \frac{\mathbb{E}_{\widehat{\theta}}\left[ \mathsf{g}_{2}\left(\mathcal{H}(\mathsf{X}), \mathsf{X}, \gamma_{\rho}(\mathcal{H}, \widehat{\theta}), \xi^{(0)}_{*}\right)\right]}{\mathbb{E}_{\widehat{\theta}}\left[\mathsf{g}_{0}\left(\mathcal{H}(\mathsf{X}), \gamma_{\rho}(\mathcal{H}, \widehat{\theta})\right)\right]}\hspace*{2mm} a.s.\\
	\end{equation*}
	This completes the proof of Equation (\ref{eqn:xi1limit}).\\\\
	$3.$ Here also we assume $\theta_t \equiv \theta$. Then $\gamma_{t}$ in recursion (\ref{eqn:ce2ndgamma}) and $\gamma^{p}_{t}$ in recursion (\ref{eqn:ce2ndgammap}) converge to $\gamma_{\rho}(\mathcal{H}, \widehat{\theta})$  and  $\gamma_{\rho}(\mathcal{H}, \widehat{\theta^{p}})$ respectively. So if $\gamma_{\rho}(\mathcal{H}, \widehat{\theta}) > \gamma_{\rho}(\mathcal{H}, \widehat{\theta^{p}})$, then $\gamma_{t} > \gamma^{p}_{t}$ eventually, \emph{i.e.}, $\gamma_{t} > \gamma^{p}_{t}$ for all but finitely many $t$. So almost surely $T_t$ in  Equation (\ref{eqn:ce2ndTt}) will converge to $\mathbb{E}\left[\mathbb{I}_{\{\gamma_{t+1} > \gamma^{p}_{t+1}\}} - \mathbb{I}_{\{\gamma_{t+1} \leq \gamma^{p}_{t+1}\}}\right]$ = $\P(\gamma_{t+1} > \gamma^{p}_{t+1}) - \P(\gamma_{t+1} \leq \gamma^{p}_{t+1}) = 1-0 = 1$.
\end{proof}

\noindent\textbf{Notation: }For the subsequence $\{t_{(n)}\}_{n > 0}$ of $\{t\}_{t \geq 0}$, we denote $t^{-}_{(n)} \triangleq t_{(n)}-1$ for $n > 0$.\\
As mentioned earlier, $\theta_t$ is updated only along a subsequence $\{t_{(n)}\}_{n \geq 0}$ of $\{t\}_{t \geq 0}$ with $t_0 = 0$ as follows:
\begin{equation}\label{eqn:thetareal}
\theta_{t_{(n+1)}} = \theta_{t_{(n)}} + \beta_{t_{(n+1)}}\left(({\xi}^{(0)}_{t^{-}_{(n+1)}}, {\xi}^{(1)}_{t^{-}_{(n+1)}})^{\top} - \theta_{t_{(n)}}\right).
\end{equation} 
Now we define $\Psi(\theta) = (\Psi_1(\theta), \Psi_2(\theta))^{\top}$, where 
\begin{flalign}\label{eqn:psice2-nd}
&\Psi_1(\theta) \triangleq \frac{\mathbb{E}_{\widehat{\theta}}\left[ \mathsf{g}_{1}\left(\mathcal{H}(\mathsf{X}), \mathsf{X}, \gamma_{\rho}(\mathcal{H}, \widehat{\theta})\right)\right]}{\mathbb{E}_{\widehat{\theta}}\left[\mathsf{g}_{0}\left(\mathcal{H}(\mathsf{X}), \gamma_{\rho}(\mathcal{H}, \widehat{\theta})\right)\right]},\\
&\Psi_2(\theta) \triangleq  \frac{\mathbb{E}_{\widehat{\theta}}\left[\mathsf{g}_{2}\left(\mathcal{H}(\mathsf{X}), \mathsf{X}, \gamma_{\rho}(\mathcal{H}, \widehat{\theta}), \Psi_1(\theta)\right)\right]}{\mathbb{E}_{\widehat{\theta}}\left[\mathsf{g}_{0}\left(\mathcal{H}(\mathsf{X}), \gamma_{\rho}(\mathcal{H}, \widehat{\theta})\right)\right]}.
\end{flalign}

We now state our main theorem. The theorem states that the model sequence $\{\theta_{t}\}$ generated by Algorithm \ref{algo:ce2det-nd} converges to $\theta^{*} = (x^{*}, 0_{m \times m})^{\top}$, which is the degenerate distribution concentrated at $x^{*}$.
\begin{theorem}\label{thm:ce2nddetmain}
	Let $S(x) = exp(rx)$, $r \in \mathbb{R}_{+}$.  Let $\rho \in (0,1)$ and $\lambda \in (0,1)$. Let $\theta_0 = (\mu_0, q\I_{m \times m})^{\top}$, where $q \in \mathbb{R}_{+}$. Let the step-size sequence $\{\beta_t\}$ satisfy the Assumption \ref{assm:lnrtce2nd}. Also let $c_t \rightarrow 0$ as $t \rightarrow \infty$. Assume that both the solution space $\mathcal{X}$ and the parameter space $\Theta$ are compact. Let $\{\theta_t = (\mu_t, \Sigma_t)^{\top}\}_{t \in \mathbb{N}}$ be the sequence generated by CE2-ND (Algorithm \ref{algo:ce2det-nd}) and assume $\theta_{t} \in \Theta$, $\forall t \in \mathbb{N}$. Also, let  Assumption \ref{assm:ce2ndgmbd} hold. Assume that the objective function $\mathcal{H} \in \mathcal{C}^{2}$. Further, we assume that there exists a continuously differentiable function $V:\Theta \rightarrow \mathbb{R}_{+}$ s.t. $\nabla V(\theta)^{\top}\Psi(\theta) < 0$, $\forall \theta \in \Theta\smallsetminus\{\theta^{*}\}$ and $\nabla V(\theta^{*})^{\top}\Psi(\theta^{*}) = 0$. Then, there exists $q^{*} \in \mathbb{R}_{+}$ and $r^{*} \in \mathbb{R}_{+}$ s.t. $\forall q > q^{*}$ and $\forall r > r^{*}$,
	\begin{gather*}
	\hspace{1mm} \lim_{t \rightarrow \infty} \mathcal{H}(\mu_{t}) = \mathcal{H}(x^{*})  \hspace{3mm} and \hspace{2mm} \lim_{t \rightarrow \infty}\theta_{t} = \theta^{*} = (x^{*}, 0_{m \times m})^{\top} \textrm{ almost surely},
	\end{gather*}
	where $x^{*}$ is defined in Equation (\ref{eqn:detoptprb}).
\end{theorem}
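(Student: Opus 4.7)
The plan is to view Algorithm \ref{algo:ce2det-nd} as a two--timescale stochastic approximation in which the model parameter $\theta_t$ plays the role of the slow component (updated only along the subsequence $\{t_{(n)}\}$) while $\gamma_t$, $\xi^{(0)}_t$, $\xi^{(1)}_t$ evolve on the fast timescale. The delaying mechanism governed by $T_t$ and the shrinking sensitivity $c_t\to 0$ will be exploited to show that, between two consecutive updates of $\theta$, the fast iterates have enough time to equilibrate. The proof then has four conceptual pieces: (i) tracking of the fast quantities for frozen $\theta$, (ii) reduction of the $\theta_t$ recursion to a limiting ODE, (iii) identification of the fixed points of that ODE for large $r$ and $q$, and (iv) a Lyapunov argument to close out global convergence.

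First, for a fixed $\theta$ the content of Lemmas \ref{lmn:ce2nd-gmconv} and \ref{lmn:xiconv} gives $\gamma_t\to\gamma_\rho(\mathcal{H},\widehat\theta)$, $\xi^{(0)}_t\to\Psi_1(\theta)$, $\xi^{(1)}_t\to\Psi_2(\theta)$ almost surely. I would argue that along the actual run of the algorithm the number of inner iterations $t_{(n+1)}-t_{(n)}$ between two successive model updates is large enough (because $c_t\to 0$ forces $T_t$ to drift more and more slowly towards $\epsilon_1$) so that the fast iterates sit in a vanishing neighbourhood of $(\Psi_1(\theta_{t_{(n)}}),\Psi_2(\theta_{t_{(n)}}))$ by the time the $(n+1)$st update is triggered. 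Combining this with part 3 of Lemma \ref{lmn:xiconv}, which says $T_t\to 1$ whenever the current quantile strictly exceeds the stored one, will ensure that the update rule (\ref{eqn:thetareal}) can be rewritten as
\begin{equation*}
\theta_{t_{(n+1)}}=\theta_{t_{(n)}}+\beta_{t_{(n+1)}}\bigl(\Psi(\theta_{t_{(n)}})-\theta_{t_{(n)}}+\varepsilon_n\bigr),\qquad \varepsilon_n\to 0\text{ a.s.}
\end{equation*}
This is a standard Robbins--Monro scheme (on the re--indexed clock $n$), the step sizes inherit $\sum\beta_{t_{(n)}}=\infty,\ \sum\beta_{t_{(n)}}^2<\infty$ from Assumption \ref{assm:lnrtce2nd}, and by the ODE method (Theorem 2, Ch.~2 of \cite{borkar2008stochastic}) it tracks
\begin{equation*}
\dot{\theta}(t)=\Psi(\theta(t))-\theta(t).
\end{equation*}

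The core analytic step, which I expect to be the main obstacle, is the identification of $\theta^{*}=(x^{*},0_{m\times m})^{\top}$ as the unique asymptotically stable equilibrium of this ODE when $r>r^{*}$ and $q>q^{*}$. Because $S(x)=\exp(rx)$, the map $\Psi_1$ is a Boltzmann/Gibbs average of $\mathsf{X}$ weighted by $e^{r\mathcal{H}(\mathsf{X})}\mathbb{I}_{\{\mathcal{H}(\mathsf{X})\ge\gamma_\rho\}}$ under the mixture $\widehat f_\theta$; by a Laplace-type argument, as $r\to\infty$ this weighted measure concentrates on $\arg\max \mathcal{H}=\{x^{*}\}$, so $\Psi_1(\theta)\to x^{*}$ and $\Psi_2(\theta)\to 0$ uniformly in $\theta\in\Theta$. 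Here the mixing weight $\lambda$ and the large-variance component $f_{\theta_0}$ with $q>q^{*}$ play a critical role: they guarantee $\widehat f_\theta(x^{*})>0$ for every $\theta\in\Theta$, which is exactly what keeps the denominators in $\Psi_1,\Psi_2$ bounded away from zero and keeps $\mathcal{H}\in\mathcal{C}^2$ from localising near a non-global critical point. I would have to quantify these two limits jointly, choosing $r^{*}$ and $q^{*}$ so that $\theta^{*}$ is the unique fixed point of $\Psi$ in $\Theta$ and that the hypothesized Lyapunov function $V$ really decreases along $\Psi(\theta)-\theta$ (interpreting the stated hypothesis $\nabla V(\theta)^{\top}\Psi(\theta)<0$ in a neighbourhood where $\theta$ can be absorbed into the same sign by compactness of $\Theta$).

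Once $\theta^{*}$ is shown to be the unique globally asymptotically stable equilibrium of $\dot{\theta}=\Psi(\theta)-\theta$ via the Lyapunov function $V$, the ODE tracking result from step (ii) yields $\theta_{t_{(n)}}\to\theta^{*}$ almost surely. Because $\theta_t$ is piecewise constant between the $t_{(n)}$, the full sequence satisfies $\theta_t\to\theta^{*}$, i.e.\ $\mu_t\to x^{*}$ and $\Sigma_t\to 0_{m\times m}$. Continuity of $\mathcal{H}$ (implied by $\mathcal{H}\in\mathcal{C}^{2}$) then gives $\mathcal{H}(\mu_t)\to\mathcal{H}(x^{*})$, finishing the proof. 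The two most delicate pieces of bookkeeping, beyond the Laplace-type concentration argument above, will be (a) justifying the interchange of the fast and slow limits so that the $\varepsilon_n$ above is genuinely $o(1)$ almost surely (this is where the compactness of $\Theta$, boundedness of $\mathcal{H}$, and $c_t\to 0$ are all used), and (b) verifying the Lyapunov inequality globally on $\Theta\setminus\{\theta^{*}\}$ using the stated hypothesis on $V$.
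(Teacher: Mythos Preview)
Your high--level decomposition (fast/slow separation, $o(1)$ reduction of the $\theta$ update via $c_t\to 0$, ODE tracking of $\dot\theta=\Psi(\theta)-\theta$, Lyapunov closure) coincides with the paper's proof. The substantive divergence is in step~(iii), the identification of the fixed points, and here your route is genuinely different from the paper's.

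The paper does \emph{not} use a Laplace/Boltzmann concentration argument. It writes the fixed--point equations $\Psi_1(\theta)=\mu$, $\Psi_2(\theta)=\Sigma$ and then exploits the Gaussian structure through Stein's identity (``integration by parts''): $\mathbb{E}_\theta[(\mathsf{X}-\mu)g(\mathsf{X})]=\Sigma\,\mathbb{E}_\theta[\nabla g(\mathsf{X})]$ and its second--order analogue. Applied once to the $\mu$--equation and twice to the $\Sigma$--equation this produces, after discarding lower--order terms by a baseline argument, an equation of the form
\[
(1-\lambda)\Sigma^{2}\,\mathbb{E}_{\theta}\bigl[S(\mathcal{H}(\mathsf{X}))G^{r}(\mathsf{X})\mathbb{I}_{\{\cdot\}}\bigr]+\lambda q^{2}\,\mathbb{E}_{\theta_0}\bigl[S(\mathcal{H}(\mathsf{X}))G^{r}(\mathsf{X})\mathbb{I}_{\{\cdot\}}\bigr]=0,
\]
with $G^{r}(x)=r^{2}\nabla\mathcal{H}(x)\nabla\mathcal{H}(x)^{\top}+r\nabla^{2}\mathcal{H}(x)$. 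The thresholds $r^{*}$ and $q^{*}$ now have a concrete algebraic meaning: $r>r^{*}$ makes the diagonal of $G^{r}$ strictly positive (the $r^{2}(\nabla_i\mathcal{H})^{2}$ term dominates $r\nabla^{2}_{ii}\mathcal{H}$), and $q>q^{*}$ makes the $\theta_0$ contribution large enough that the sum of the two diagonal blocks cannot vanish unless each vanishes separately. This forces $\Sigma=0$ and $\gamma_\rho(\widehat\theta)=\mathcal{H}(x^{*})$ \emph{exactly}; a separate quantile/continuity argument then rules out any degenerate $\theta_{x'}$ with $x'\ne x^{*}$. This is also where $\mathcal{H}\in\mathcal{C}^{2}$ is actually used.

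Your Laplace argument shows only that $\Psi(\theta)\to(x^{*},0)$ as $r\to\infty$ uniformly in $\theta$; for any fixed finite $r$ it places all fixed points in a small ball around $\theta^{*}$ but does not single out $\theta^{*}$ exactly. You would need an additional isolation argument or to lean entirely on the assumed Lyapunov function. More importantly, your reading of $q^{*}$ (``keeps the denominators bounded away from zero'') is not the mechanism in the paper: $\widehat f_\theta(x^{*})>0$ holds for every $q>0$ since $f_{\theta_0}$ is Gaussian, so no threshold on $q$ is needed for that. In the paper $q$ enters as the covariance factor that Stein's lemma pulls out of the $f_{\theta_0}$ term, and $q>q^{*}$ is a sign/domination condition on the resulting diagonal, not a nondegeneracy condition on the mixture. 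If you want your argument to match the conclusion of the theorem (existence of finite $r^{*},q^{*}$ giving the exact limit $\theta^{*}$), the Stein--identity route is what delivers it.
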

\begin{proof}
	Rewriting the Equation (\ref{eq:thupd}) along the subsequence $\{t_{(n)}\}_{n \in \mathbb{N}}$, we have for $n \in \mathbb{N}$,
	\begin{equation}\label{eqn:thetap1}
	\theta_{t_{(n+1)}} = \theta_{t_{(n)}} + \beta_{t_{(n+1)}}\left(({\xi}^{(0)}_{t^{-}_{(n+1)}}, {\xi}^{(1)}_{t^{-}_{(n+1)}})^{\top} - \theta_{t_{(n)}}\right).
	\end{equation} 
	The iterates $\theta_{t_{(n)}}$ are stable, \emph{i.e.}, $\sup_{n}{\Vert \theta_{t_{(n)}}\Vert} < \infty$ \emph{a.s.} It is directly implied from the hypothesis that $\theta_{t_{(n)}} \in \Theta$ and $\Theta$ is a compact set.\\\\
	Rearranging the Equation (\ref{eqn:thetap1}) we get, for $n \in \mathbb{N}$,
	\begin{equation}
	\theta_{t_{(n+1)}} = \theta_{t_{(n)}} + \beta_{t_{(n+1)}}\left(\Psi(\theta_{t_{(n)}}) + \mathit{o}(1)\right).
	\end{equation}
	This easily follows from the fact that, for $t_{(n)} < t \leq t_{(n+1)}$, the random variables ${\xi}^{(0)}_t$ and ${\xi}^{(1)}_t$ estimate the quantities $\Psi_1(\theta_{t_{(n)}})$ and $\Psi_2(\theta_{t_{(n)}})$ respectively. Since $c_t \rightarrow 0$, the estimation error decays to $0$. This accounts for the $\mathit{o}(1)$ term.\\
	
	The limit points of the above recursion are indeed the roots of $\Psi$. Hence by equating $\Psi_1(\theta)$ to $0_{m \times 1}$, we get,
	\begin{flalign*} 
	&\frac{\mathbb{E}_{\widehat{\theta}}\left[\mathsf{g_{1}}\bm{\big{(}}\mathcal{H}(\mathsf{X}), \mathsf{X},  \gamma_{\rho}(\mathcal{H}, \widehat{\theta})\bm{\big{)}}\right]}{\mathbb{E}_{\widehat{\theta}}\left[\mathsf{g_{0}}\bm{\big{(}}\mathcal{H}(\mathsf{X}), \gamma_{\rho}(\mathcal{H}, \widehat{\theta})\bm{\big{)}}\right]} - \mu = 0_{m \times 1}.\\
	&\Rightarrow (1-\lambda)\mathbb{E}_{\theta}\left[\mathsf{g_{1}}\bm{\big{(}}\mathcal{H}(\mathsf{X}), \mathsf{X},  \gamma_{\rho}(\mathcal{H}, \widehat{\theta})\bm{\big{)}}\right] + \lambda\mathbb{E}_{\theta_0}\left[\mathsf{g_{1}}\bm{\big{(}}\mathcal{H}(\mathsf{X}), \mathsf{X},  \gamma_{\rho}(\mathcal{H}, \widehat{\theta})\bm{\big{)}}\right] - \\&\hspace*{10mm}(1-\lambda)\mu\mathbb{E}_{\theta}\left[\mathsf{g_{0}}\bm{\big{(}}\mathcal{H}(\mathsf{X}), \gamma_{\rho}(\mathcal{H}, \widehat{\theta})\bm{\big{)}}\right] - \lambda\mu\mathbb{E}_{\theta_0}\left[\mathsf{g_{0}}\bm{\big{(}}\mathcal{H}(\mathsf{X}), \gamma_{\rho}(\mathcal{H}, \widehat{\theta})\bm{\big{)}}\right] = 0_{m \times 1}.\\
	&\Rightarrow (1-\lambda)\mathbb{E}_{\theta}\left[(X - \mu)\mathsf{g_{0}}\bm{\big{(}}\mathcal{H}(\mathsf{X}), \mathsf{X},  \gamma_{\rho}(\mathcal{H}, \widehat{\theta})\bm{\big{)}}\right] + \\&\hspace*{20mm}\lambda\big(\mathbb{E}_{\theta_0}\left[\mathsf{g_{1}}\bm{\big{(}}\mathcal{H}(\mathsf{X}), \mathsf{X},  \gamma_{\rho}(\mathcal{H}, \widehat{\theta})\bm{\big{)}}\right] - \mu\mathbb{E}_{\theta_0}\left[\mathsf{g_{0}}\bm{\big{(}}\mathcal{H}(\mathsf{X}), \gamma_{\rho}(\mathcal{H}, \widehat{\theta})\bm{\big{)}}\right]\big) = 0_{m \times 1}.
	\end{flalign*}
	By applying the ``integration by parts" rule for multivariate Gaussian, we obtain
	\begin{flalign*}
	&(1-\lambda)\Sigma\mathbb{E}_{\theta}\left[S(\mathcal{H}(\mathsf{X}))\nabla\mathcal{H}(\mathsf{X}))\mathbb{I}_{\{\mathcal{H}(\mathsf{X}) \geq \gamma_{\rho}(\mathcal{H}, \widehat{\theta})\}}\right] + \\&\hspace*{10mm}\lambda\Big(q\mathbb{E}_{\theta_0}\left[S(\mathcal{H}(\mathsf{X}))\nabla\mathcal{H}(\mathsf{X}))\mathbb{I}_{\{\mathcal{H}(\mathsf{X}) \geq \gamma_{\rho}(\mathcal{H}, \widehat{\theta})\}}\right] - \mu\mathbb{E}_{\theta_0}\left[\mathsf{g_{0}}\bm{\big{(}}\mathcal{H}(\mathsf{X}), \gamma_{\rho}(\mathcal{H}, \widehat{\theta})\bm{\big{)}}\right]\Big) = 0_{m \times 1}.
	\end{flalign*}\\
	For brevity, we define 
	\begin{equation}\label{eq:quanstar}
	\gamma_{\rho}(\theta) \triangleq \gamma_{\rho}(\mathcal{H}, \theta) \hspace*{2mm} \textrm{and }\mathsf{\hat{g}_{0}}(x, \theta) \triangleq \mathsf{g_{0}}\bm{\big{(}}\mathcal{H}(x), \gamma_{\rho}(\theta)\big{)}.
	\end{equation}
	Therefore, the above equation becomes
	\begin{flalign}\label{eq:withbsline}
	&(1-\lambda)\Sigma\mathbb{E}_{\theta}\left[S(\mathcal{H}(\mathsf{X}))\nabla\mathcal{H}(\mathsf{X}))\mathbb{I}_{\{\mathcal{H}(\mathsf{X}) \geq \gamma_{\rho}(\widehat{\theta})\}}\right] + \nonumber\\&\hspace*{20mm}\lambda\Big(q\mathbb{E}_{\theta_0}\left[S(\mathcal{H}(\mathsf{X}))\nabla\mathcal{H}(\mathsf{X}))\mathbb{I}_{\{\mathcal{H}(\mathsf{X}) \geq \gamma_{\rho}(\widehat{\theta})\}}\right] - \mu\mathbb{E}_{\theta_0}\left[\mathsf{\hat{g}_{0}}(\mathsf{X}, \widehat{\theta})\right]\Big) = 0_{m \times 1}.
	\end{flalign}\\
	By adding the baseline $\xi^{(0)}_t\mathsf{\widehat{g}_{0}}(\mathsf{X}, \widehat{\theta})$ with $\mathsf{X} \sim f_{\theta_0}$ to the recursion of $\xi^{(0)}_t$, one can drop the component $\mu\mathbb{E}_{\theta_0}\left[\mathsf{\widehat{g}_{0}}(\mathsf{X}, \widehat{\theta})\right]$ from the above equation. This indeed simplifies the analysis and does not affect the asymptotic behaviour. However, in practical cases, instead of adding the baseline, one can choose $\lambda$ small enough and $q$ large enough to nullify the effect of the component $\mu\mathbb{E}_{\theta_0}\left[\mathsf{\widehat{g}_{0}}(\mathsf{X}, \widehat{\theta})\right]$. Hence, in the analysis, we consider the following equation instead of Equation (\ref{eq:withbsline}), \emph{i.e.},
	\begin{flalign}\label{eq:wobsline}
	(1-\lambda)\Sigma\mathbb{E}_{\theta}&\left[S(\mathcal{H}(\mathsf{X}))\nabla\mathcal{H}(\mathsf{X}))\mathbb{I}_{\{\mathcal{H}(\mathsf{X}) \geq \gamma_{\rho}(\widehat{\theta})\}}\right] + \nonumber\\&\lambda q\mathbb{E}_{\theta_0}\left[S(\mathcal{H}(\mathsf{X}))\nabla\mathcal{H}(\mathsf{X}))\mathbb{I}_{\{\mathcal{H}(\mathsf{X}) \geq \gamma_{\rho}(\widehat{\theta})\}}\right] = 0_{m \times 1}.
	\end{flalign}\\\\
	Similarly, by equating $\Psi_2(\theta)$ to $\mathbb{O}$ $(= 0_{m \times m})$, we get,
	\begin{flalign}\label{eqn:sgmsol2}
	&\frac{\mathbb{E}_{\widehat{\theta}}\left[\mathsf{g_{2}}\big{(}\mathcal{H}(\mathsf{X}), \mathsf{X}, \gamma_{\rho}(\mathcal{H},\widehat{\theta}), \mu\big{)}\right]}{\mathbb{E}_{\widehat{\theta}}\left[\mathsf{g_{0}}\big{(}\mathcal{H}(\mathsf{X}), \gamma_{\rho}(\mathcal{H}, \widehat{\theta})\big{)}\right]} -  \Sigma = \mathbb{O}.\nonumber\\
	&\Longrightarrow (1-\lambda)\mathbb{E}_{\theta}\left[\mathsf{g_{2}}\big{(}\mathcal{H}(\mathsf{X}), \mathsf{X}, \gamma_{\rho}(\mathcal{H},\widehat{\theta}), \mu\big{)}\right] + \lambda{E}_{\theta_0}\left[\mathsf{g_{2}}\big{(}\mathcal{H}(\mathsf{X}), \mathsf{X}, \gamma_{\rho}(\mathcal{H},\widehat{\theta}), \mu\big{)}\right] - \nonumber\\&\hspace*{10mm} (1-\lambda)\Sigma\mathbb{E}_{\theta}\left[\mathsf{g_{0}}\big{(}\mathcal{H}(\mathsf{X}), \gamma_{\rho}(\mathcal{H}, \widehat{\theta})\big{)}\right] - \lambda\Sigma\mathbb{E}_{\theta_0}\left[\mathsf{g_{0}}\big{(}\mathcal{H}(\mathsf{X}), \gamma_{\rho}(\mathcal{H}, \widehat{\theta})\big{)}\right]= \mathbb{O}.\nonumber\\
	&\Longrightarrow (1-\lambda)\mathbb{E}_{\theta}\left[(\mathsf{X}-\mu)(\mathsf{X}-\mu)^{\top}\widehat{\mathsf{g_{0}}}(\mathsf{X}, \widehat{\theta})\right] + \nonumber\\&\hspace*{10mm}  \lambda{E}_{\theta_0}\left[\mathsf{g_{2}}\big{(}\mathcal{H}(\mathsf{X}), \mathsf{X}, \gamma_{\rho}(\mathcal{H},\widehat{\theta}), \mu\big{)}\right] - \Sigma\mathbb{E}_{\widehat{\theta}}\left[\mathsf{g_{0}}\big{(}\mathcal{H}(\mathsf{X}), \gamma_{\rho}(\mathcal{H}, \widehat{\theta})\big{)}\right] = \mathbb{O}.\nonumber\\
	&\Longrightarrow (1-\lambda)\mathbb{E}_{\theta}\left[(\mathsf{X}-\mu)(\mathsf{X}-\mu)^{\top}\widehat{\mathsf{g_{0}}}(\mathsf{X}, \widehat{\theta})\right] + \lambda{E}_{\theta_0}\left[\mathsf{g_{2}}\big{(}\mathcal{H}(\mathsf{X}), \mathsf{X}, \gamma_{\rho}(\mathcal{H},\widehat{\theta}), \mu\big{)}\right] - \nonumber\\&\hspace*{20mm} (1-\lambda)\mathbb{E}_{\theta}\left[(\mathsf{X}-\mu)(\mathsf{X}-\mu)^{\top}\right]\mathbb{E}_{\theta}\left[\widehat{\mathsf{g_{0}}}(\mathsf{X}, \widehat{\theta})\right] - \lambda\Sigma\mathbb{E}_{\theta_0}\left[\widehat{\mathsf{g_{0}}}(\mathsf{X}, \widehat{\theta})\right]= \mathbb{O}.\nonumber\\
	&\Longrightarrow (1-\lambda)\mathbb{E}_{\theta}\left[(\mathsf{X}-\mu)(\mathsf{X}-\mu)^{\top}\Big(\widehat{\mathsf{g_{0}}}(\mathsf{X}, \widehat{\theta})-\mathbb{E}_{\theta}\left[\widehat{\mathsf{g_{0}}}(\mathsf{X}, \widehat{\theta})\right]\Big)\right] + \nonumber\\&\hspace*{15mm}\lambda{E}_{\theta_0}\left[\mathsf{X}\mathsf{X}^{\top}\Big(\widehat{\mathsf{g_{0}}}(\mathsf{X}, \widehat{\theta})-\mathbb{E}_{\theta_0}\left[\widehat{\mathsf{g_{0}}}(\mathsf{X}, \widehat{\theta})\right]\Big)\right] - \nonumber\\&\hspace*{20mm} \lambda\Bigg(\mu\mathbb{E}_{\theta_0}\left[\mathsf{X}\Big(\widehat{\mathsf{g_{0}}}(\mathsf{X}, \widehat{\theta})-\mathbb{E}_{\theta_0}\left[\widehat{\mathsf{g_{0}}}(\mathsf{X}, \widehat{\theta})\right]\Big)\right]^{\top} + \nonumber\\&\hspace*{30mm} \mathbb{E}_{\theta_0}\left(\mathsf{X}\Big(\widehat{\mathsf{g_{0}}}(\mathsf{X}, \widehat{\theta})-\mathbb{E}_{\theta_0}\left[\widehat{\mathsf{g_{0}}}(\mathsf{X}, \widehat{\theta})\right]\Big)\right]\mu^{\top}\Bigg) = \mathbb{O}.
	\end{flalign}
	Similar to the earlier case, by arguing along the same line, one can discard the term $\lambda\Bigg(\mu\mathbb{E}_{\theta_0}\left[\mathsf{X}\Big(\widehat{\mathsf{g_{0}}}(\mathsf{X}, \widehat{\theta})-\mathbb{E}_{\theta_0}\left[\widehat{\mathsf{g_{0}}}(\mathsf{X}, \widehat{\theta})\right]\Big)\right]^{\top} + \mathbb{E}_{\theta_0}\left(\mathsf{X}\Big(\widehat{\mathsf{g_{0}}}(\mathsf{X}, \widehat{\theta})-\mathbb{E}_{\theta_0}\left[\widehat{\mathsf{g_{0}}}(\mathsf{X}, \widehat{\theta})\right]\Big)\right]\mu^{\top}\Bigg)$  and consider the following equation instead of Equation  (\ref{eqn:sgmsol2}) for analysis, \emph{i.e.},
	\begin{flalign}\label{eq:wobssigma}
	&(1-\lambda)\mathbb{E}_{\theta}\left[(\mathsf{X}-\mu)(\mathsf{X}-\mu)^{\top}\Big(\widehat{\mathsf{g_{0}}}(\mathsf{X}, \widehat{\theta})-\mathbb{E}_{\theta}\left[\widehat{\mathsf{g_{0}}}(\mathsf{X}, \widehat{\theta})\right]\Big)\right] + \nonumber\\&\hspace*{15mm}\lambda{E}_{\theta_0}\left[\mathsf{X}\mathsf{X}^{\top}\Big(\widehat{\mathsf{g_{0}}}(\mathsf{X}, \widehat{\theta})-\mathbb{E}_{\theta_0}\left[\widehat{\mathsf{g_{0}}}(\mathsf{X}, \widehat{\theta})\right]\Big)\right] = \mathbb{O}.
	\end{flalign}
	Again, by applying the ``integration by parts" rule for multivariate Gaussian in Equation (\ref{eq:wobssigma}) and using the hypothesis $S(x) = exp(rx)$, we obtain
	\begin{flalign}\label{eqn:zeq}
	&(1-\lambda)\Sigma^{2}\mathbb{E}_{\theta}\left[\nabla_{x}^{2}\mathsf{\hat{g}_{0}}(\mathsf{X}, \widehat{\theta})\right] + \lambda q^{2}\mathbb{E}_{\theta_{0}}\left[\nabla_{x}^{2}\mathsf{g_{0}}(\mathsf{X}, \widehat{\theta})\right] = \mathbb{O}.\nonumber\\
	&\Longrightarrow \hspace*{4mm}(1-\lambda)\Sigma^{2} \mathbb{E}_{\theta}\left[S(\mathcal{H}(\mathsf{X}))G^{r}(\mathsf{X})\mathbb{I}_{\{\mathcal{H}(\mathsf{X}) \geq \gamma_{\rho}(\widehat{\theta})\}}\right] + \nonumber\\&\hspace*{3cm}\lambda q^{2} \mathbb{E}_{\theta_0}\left[S(\mathcal{H}(\mathsf{X}))G^{r}(\mathsf{X})\mathbb{I}_{\{\mathcal{H}(\mathsf{X}) \geq \gamma_{\rho}(\widehat{\theta})\}}\right] = \mathbb{O},
	\end{flalign}
	where $G^{r}(x) \triangleq r^{2}\nabla\mathcal{H}(x)\nabla\mathcal{H}(x)^{\top} + r\nabla^{2}\mathcal{H}(x)$. Note that for each $x \in \mathcal{X}$, $G^{r}(x) \in \mathbb{R}^{m \times m}$. Hence we denote $G^{r}(x)$ as $\left[G^{r}_{ij}(x)\right]_{i=1,j=1}^{m}$. For brevity, we also define
	\begin{gather}
	F^{r, \rho}(x, \theta) \triangleq S(\mathcal{H}(x))G^{r}(x)\mathbb{I}_{\{\mathcal{H}(x) \geq \gamma_{\rho}(\theta)\}},
	\end{gather}
	where $F^{r, \rho}(x, \theta) \in \mathbb{R}^{m \times m}$ which is also denoted as $\left[F^{r, \rho}_{ij}(x)\right]_{i=1,j=1}^{m}$.\\
	Hence Equation (\ref{eqn:zeq}) becomes,
	\begin{flalign}\label{eqn:zeq2}
	(1-\lambda)\Sigma^{2} \mathbb{E}_{\theta}\left[F^{r, \rho}(\mathsf{X}, \widehat{\theta})\right] + \lambda q^{2} \mathbb{E}_{\theta_0}\left[F^{r, \rho}(\mathsf{X}, \widehat{\theta})\right] = \mathbb{O}.
	\end{flalign}
	Note that $(\nabla_{i}\mathcal{H})^{2} \geq 0$. Hence we can find an $r^{*} > 0$ such that $G^{r}_{ii}(x) > 0$, $\forall r > r^{*}$, $1 \leq i \leq m$, $\forall x \in \mathcal{X}$. Also, by hypothesis, $\Theta$ is compact. Hence we can find $q^{*} > 0$ such that 
	\begin{flalign}
	&(1-\lambda)\left(\Sigma^{2}\mathbb{E}_{\theta}\left[F^{r, \rho}(\mathsf{X}, \widehat{\theta})\right]\right)_{ii} + \lambda q^{2} \mathbb{E}_{\theta_0}\left[F^{r, \rho}_{ii}(\mathsf{X}, \widehat{\theta})\right] \geq 0,\\&\hspace*{32mm}\forall r > r^{*}, \forall q > q^{*}, \forall \theta \in \Theta, 1 \leq \forall i \leq m.\nonumber
	\end{flalign}
	This contradicts the equality in Equation (\ref{eqn:zeq2}) for $q > q^{*}$ and $r > r^{*}$.  Hence for such choices of $q$ and $r$, each of the terms in Equation (\ref{eqn:zeq2}) is $0$, \emph{i.e.},
	\begin{flalign}
	&\Sigma^{2} \mathbb{E}_{\theta}\left[F^{r, \rho}(\mathsf{X}, \widehat{\theta})\right] = \mathbb{O}\hspace*{10mm}\textrm{ and }\label{eq:zeq3}\\ 
	&q^{2} \mathbb{E}_{\theta_0}\left[F^{r, \rho}(\mathsf{X}, \widehat{\theta})\right]  = \mathbb{O}.\label{eq:zeq4}
	\end{flalign}
	Now from Equation (\ref{eq:zeq4}), we have  
	\begin{equation}\label{eq:degdistres}
	\begin{aligned}
	\mathbb{E}_{\theta_0}\left[F^{r, \rho}(\mathsf{X}, \widehat{\theta})\right]  = \mathbb{O} \hspace*{3mm} \Longrightarrow& \hspace*{3mm}\gamma_{\rho}(\widehat{\theta}) = \mathcal{H}(x^{*}),\\ &\hspace*{2mm} \forall r > r^{*}, \forall \rho \in(0, \rho^{*}), \forall q > q^{*}.
	\end{aligned}
	\end{equation}
	The above implication is trivial, since for all thresholds $\gamma \in \left[\inf_{x \in \mathcal{X}}\mathcal{H}(x), \sup_{x \in \mathcal{X}}\mathcal{H}(x)\right]$ which are strictly less that $\mathcal{H}(x^{*}), $ we have $\left(\mathbb{E}_{\theta_0}\left[S(\mathcal{H}(\mathsf{X}))G^{r}(\mathsf{X})\mathbb{I}_{\{\mathcal{H}(\mathsf{X}) \geq \gamma\}}\right]\right)_{ii} > 0$, $\forall 1 \leq i \leq m$, $\forall r > r^{*}$.
	
	Now from Equation (\ref{eq:zeq3}), we have  $\Sigma = \mathbb{O}$. Indeed, if  $\Sigma \neq \mathbb{O}$, then $\Sigma^{2}$ is invertible (follows because $\Sigma^{2}$ is positive definite). Hence we have $\mathbb{E}_{\theta}\left[F^{r, \rho}(\mathsf{X}, \widehat{\theta})\right] = \mathbb{O}$. However, this is a contradiction, since $\forall r > r^{*}$, we have $F^{r, \rho}_{ii}(x, \widehat{\theta}) > 0$. 
	
	It is also easy to verify that the solution so obtained, \emph{i.e.}, $\Sigma = \mathbb{O}$ and $\gamma_{\rho}(\widehat{\theta}) = \mathcal{H}(x^{*})$ also satisfies the initial Equation  (\ref{eq:wobsline}). 
	
	This shows that for any $x \in \mathcal{X}$, the degenerate distribution concentrated on $x$ given by $\theta_x = (x, 0_{m \times m})^{\top}$ is a potential limit point of the recursion (\ref{eqn:thetap1}).\vspace*{0mm}\\
	
	Now we prove the following claim which effectively establishes that the only limiting distribution of the recursion (\ref{eqn:thetap1}) is indeed the degenerate distribution concentrated on $x^{*}$.\vspace*{2mm}\\
	\textbf{Claim (C1):} The only degenerate distribution which satisfies the condition $\gamma_{\rho}(\widehat{\theta}) = \mathcal{H}(x^{*})$ is $\theta^{*} = (x^{*}, 0_{m \times m})^{\top}$.\\
	The above claim can be verified as follows: if there exists $x^{\prime} (\in \mathcal{X}) \neq x^{*}$ s.t. $\gamma_{\rho}(\widehat{\theta}_{x^{\prime}}) = \mathcal{H}(x^{*})$ is satisfied, then from the definition of $\gamma_{\rho}(\cdot)$ in Equation (\ref{eq:quanstar}) and from Equation (\ref{eq:quantdef}), we can find an increasing sequence $\{l_i\}$, where $l_i > \mathcal{H}(x^{\prime})$ \emph{s.t.} the following property is satisfied:
	\begin{equation}\label{eq:sqeq}
	\lim_{i \rightarrow \infty}l_i = \mathcal{H}(x^{*}) \textrm{ and } \P_{\widehat{\theta_{x^{\prime}}}}(\mathcal{H}(\mathsf{X}) \geq l_i) \geq \rho.
	\end{equation}
	But $\P_{\widehat{\theta_{x^{\prime}}}}(\mathcal{H}(\mathsf{X}) \geq l_i) = (1-\lambda)\P_{\theta_{x^{\prime}}}(\mathcal{H}(\mathsf{X}) \geq l_i) + \lambda \P_{\theta_{0}}(\mathcal{H}(\mathsf{X}) \geq l_i)$ and $\P_{\theta_{x^{\prime}}}(\mathcal{H}(\mathsf{X}) \geq l_i) = 0$, $\forall i$. Therefore from Equation (\ref{eq:sqeq}), we get,
	\begin{gather*}
	\begin{aligned}
	&\P_{\widehat{\theta_{x^{\prime}}}}(\mathcal{H}(\mathsf{X}) \geq l_i) \geq \rho\\
	&\Rightarrow (1-\lambda)\P_{\theta_{x^{\prime}}}(\mathcal{H}(\mathsf{X}) \geq l_i) + \lambda \P_{\theta_{0}}(\mathcal{H}(\mathsf{X}) \geq l_i) \geq \rho\\
	&\Rightarrow \lambda \P_{\theta_{0}}(\mathcal{H}(\mathsf{X}) \geq l_i) \geq \rho\\
	&\Rightarrow \P_{\theta_{0}}(\mathcal{H}(\mathsf{X}) \geq l_i) \geq \frac{\rho}{\lambda}.
	\end{aligned}
	\end{gather*}
	In fact
	\begin{flalign}\label{eq:fcontr}
	\P_{\theta_{0}}(\mathcal{H}(\mathsf{X}) \geq l_i) \geq \min{(\frac{\rho}{\lambda}, 1)}.
	\end{flalign}
	Recall that $l_i \rightarrow \mathcal{H}(x^{*})$. Hence, by the continuity of probability measures and from Equation (\ref{eq:fcontr}), we get 
	\begin{equation*}
	0 = \P_{\theta_{0}}(\mathcal{H}(\mathsf{X}) \geq \mathcal{H}(x^{*})) = \lim_{i \rightarrow \infty}\P_{\theta_{0}}(\mathcal{H}(\mathsf{X}) \geq l_i) \geq \min{(\frac{\rho}{\lambda}, 1)} > 0,
	\end{equation*}
	which is a contradiction. This proves the Claim (C1). Now the only remaining task is to prove that $\theta^{*}$ is a stable attractor. This easily follows from the hypothesis regarding the existence of the Lyapunov function $V$ in the statement of the theorem.
\end{proof}
\subsection{Exogesis of Theorem \ref{thm:ce2nddetmain}}
Theorem \ref{thm:ce2nddetmain} provides a few insights into the nature of the algorithm CE2-ND. The theorem is more existential in nature which fundamentally claims the existence of lower bounds $q^{*}$ and $r^{*}$ for the parameters $q$ (the covariance parameter of the initial distribution) and $r$ (the scaling parameter of the weight function $S(x) = \exp{(rx)}$) respectively, which successfully drive the algorithm towards the global optimum. At first, we explore the nature of our algorithm by contrasting it with respect to the standard CE algorithm. Note that in the standard CE method, which is the initially proposed CE method, the weight function $S(\cdot)$ is literally not involved, \emph{i.e.}, $S \equiv 1$. In this case, the update procedure is given by
\begin{flalign}\label{eq:stdce}
&\mu_{t+1} = \frac{\sum_{i=1}^{N}\mathbb{I}_{\{\mathcal{H}(\mathsf{X}_i) \geq \gamma_{t+1}\}}\mathsf{X}_{i}}{\sum_{i=1}^{N}\mathbb{I}_{\{\mathcal{H}(\mathsf{X}_i) \geq \gamma_{t+1}\}}},\\
&\Sigma_{t+1} = \frac{\sum_{i=1}^{N}\mathbb{I}_{\{\mathcal{H}(\mathsf{X}_i) \geq \gamma_{t+1}\}}(\mathsf{X}_{i}-\mu_{t+1})(\mathsf{X}_{i}-\mu_{t+1})^{\top}}{\sum_{i=1}^{N}\mathbb{I}_{\{\mathcal{H}(\mathsf{X}_i) \geq \gamma_{t+1}\}}}.
\end{flalign}
To demonstrate the differences more vividly, we consider the following example:
\begin{example}\label{exp:exothmce2mn}
	\[  \mathcal{H}(x) = \left\{
	\begin{array}{ll}
	0 & x < -\delta \\          \frac{3}{\delta}x + 3 & -\delta \leq x \leq 0 \\          -\frac{3}{\delta}x+3 & 0 < x \leq \delta \\
	0 & x > \delta,
	\end{array} 
	\right. \]
	where $\delta > 0$.
	\begin{figure}[h]
		\centering
		\includegraphics[scale=0.85]{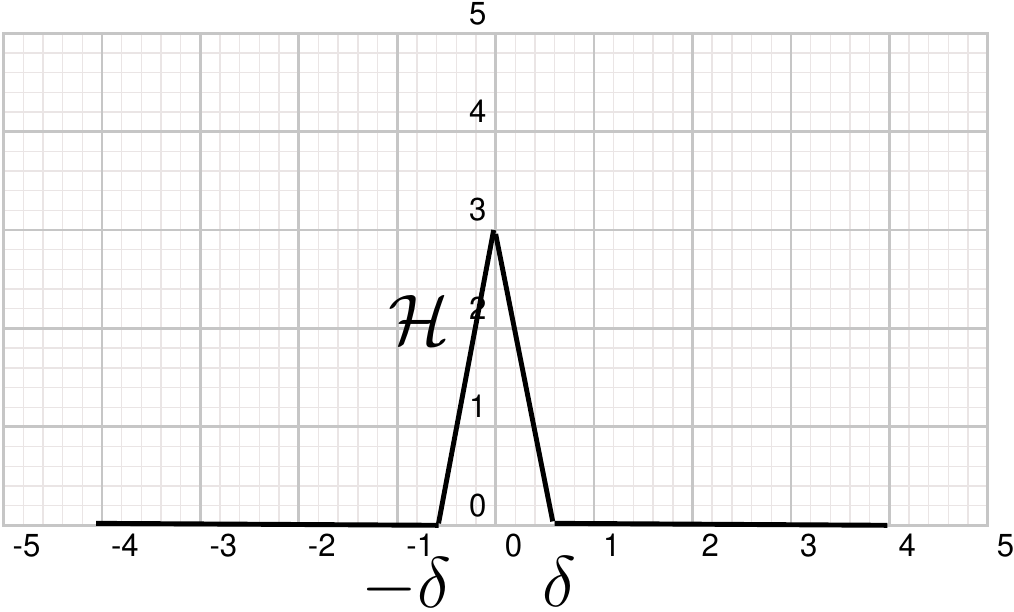}
	\end{figure}
\end{example}
Let us assume that the initial density parameter $\theta_0$ and the quantile parameter $\rho$ are already chosen. Now consider a $\delta > 0$ such that $\P_{\theta_0}\left(-\delta \leq \mathsf{X} \leq \delta\right) < \rho$. For such choice of $\delta$, we have $\gamma_{\rho}(\mathcal{H}, \theta_0) = 0$ and hence $\{x \vert \mathcal{H}(x) \geq \gamma_{\rho}(\mathcal{H}, \theta_0)\} = \mathcal{X}$. This follows directly from the definition of $\gamma_{\rho}(\cdot, \cdot)$. The above condition can be satisfied by taking $\rho=0.1$ and $\delta = 0.4$. In this situation, for the standard CE, we get $\theta_t = \theta_0$, $\forall t > 0$, \emph{i.e.}, the density parameters remain constant. So one has to be careful in choosing the quantile parameter $\rho$ to avoid such scenarios since the shape of the objective function is critical in seeking the global optimum. Thus there is a strong dependency between the standard CE and the quantile parameter $\rho$. So the primary objective for incorporating $S(\mathcal{H}(x))$ into the update rule of CE2-ND is to decouple this dependency. Thus the model parameter is updated by additionally conferring due consideration to the shape of the objective function. This non-dependency of our approach on $\rho$ is also corroborated by Theorem \ref{thm:ce2nddetmain} which does not propose any bounds on the quantile parameter $\rho$. Now consider the earlier example with $\gamma_1 = 0$ and same $\rho$ as before. We find that CE2-ND does show remarkable performance as illustrated in Fig. \ref{fig:rdiff1}.
\begin{figure}[h]
	\centering
	\includegraphics[scale=0.4]{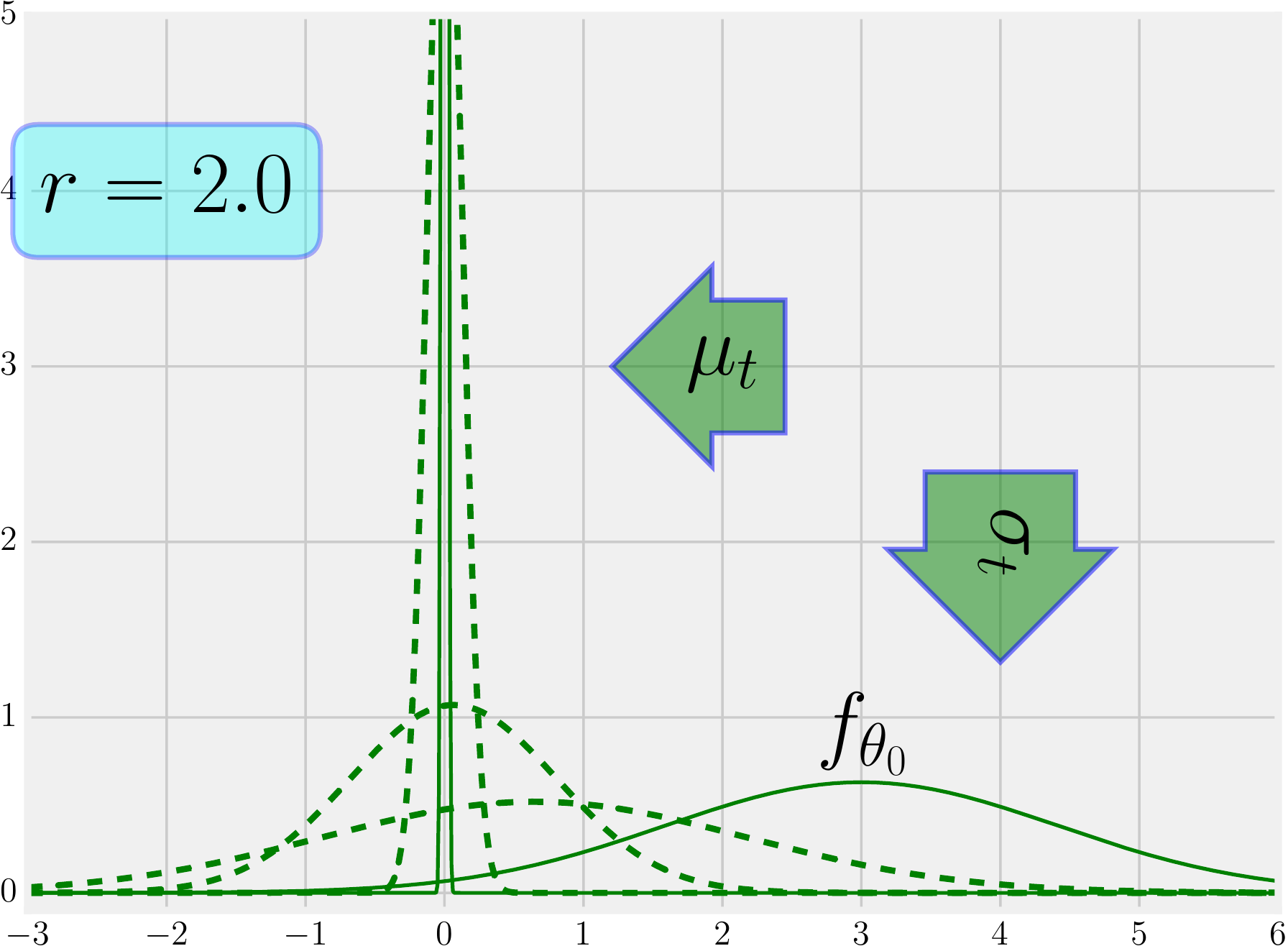}
	\caption{Example which illustrates the non-dependency of CE2-ND on $\rho$. Here, CE2-ND converges to the global optimum even with $\gamma_t = 0$, $\forall t$.}\label{fig:rdiff1}
\end{figure}

Even though the dependency on $\rho$ is being relaxed, we find that with $S(x) = \exp{(rx)}$, an additional dependency on the scaling parameter $r$ has emerged which seems to influence the evolutionary trajectory of the model parameters. This fact is already highlighted in Theorem \ref{thm:ce2nddetmain}, where the existence of the lower bound $r^{*}$ is emphasized. Additionally, we empirically illustrate this particular aspect of the algorithm. This is demonstrated in Fig. \ref{fig:rdiff2}, where the objective is same as earlier, but $r=1.0$. This choice of $r$ is in contrast to the earlier optimal behaviour shown in Fig. \ref{fig:rdiff1}, where $r=2.0$.
\begin{figure}[h]
	\centering
	\includegraphics[scale=0.4]{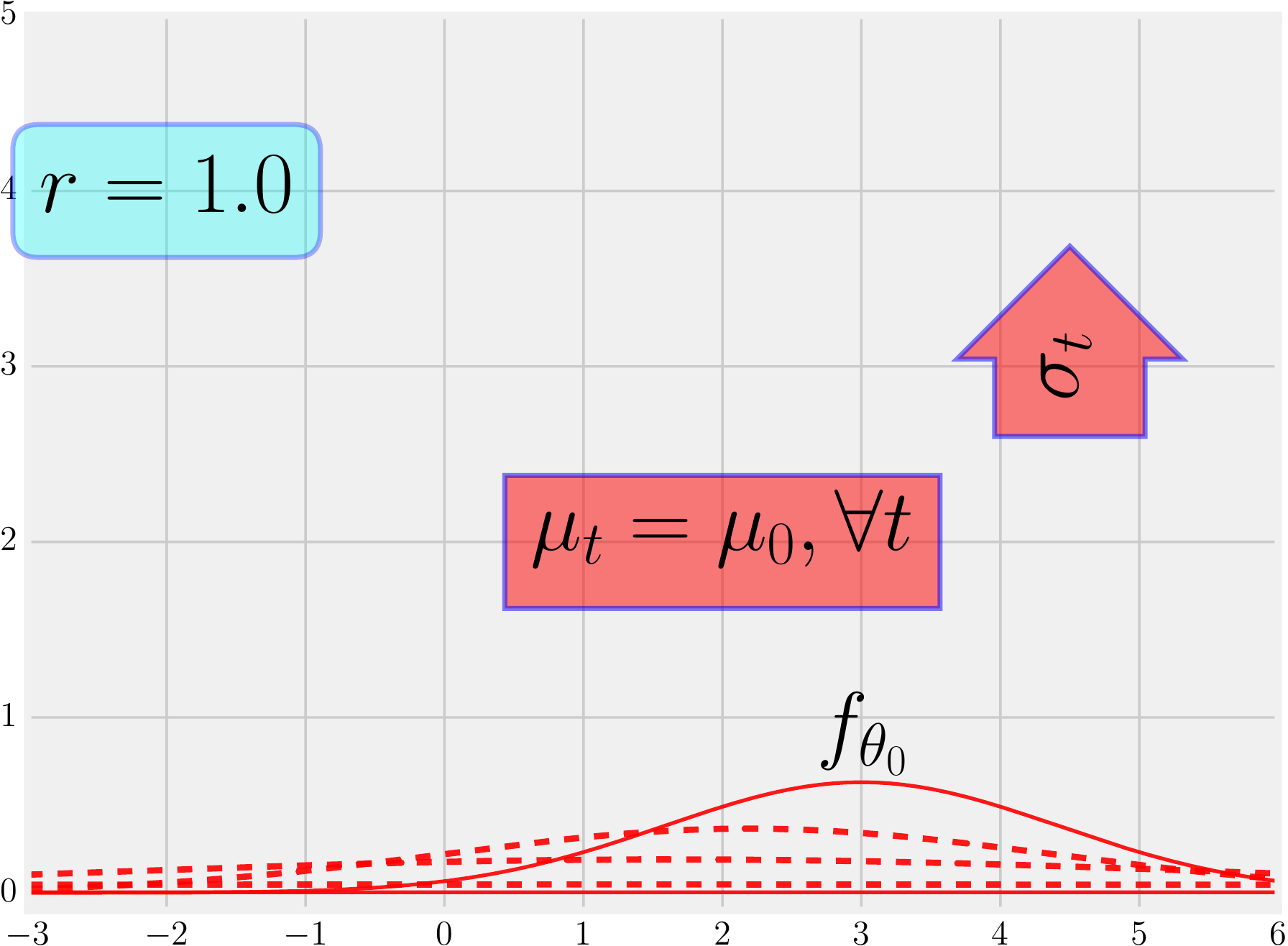}
	\caption{Example which illustrates the dependency of CE2-ND on $r$. Here, with $r=1.0$, CE2-ND explodes.}\label{fig:rdiff2}
\end{figure}

Another pertinent parameter highlighted in Theorem \ref{thm:ce2nddetmain} is the initial distribution parameter $\theta_0$. We assume the initial distribution to be a zero-mean Gaussian distribution with the co-ordinates being mutually independent and the covariance matrix is of the form $q\mathbb{I}_{m \times m}$, where $q > 0$. The theorem emphasizes the existence of a lower bound $q^{*}$ for $q$ to successfully seek the global optimum. This is intuitive since the initial distribution which is being mixed with the current model distribution during each iteration of CE2-ND is necessary to promote sufficient exploration of the solution space $\mathcal{X}$ and this prevents the algorithm from the premature convergence to any of the sub-optimal solutions. We also illustrate it empirically in Fig. \ref{fig:qdiff1}. Here, we again consider the same setting from  Example \ref{exp:exothmce2mn}. Here, with $q = 0.8$,  the algorithm CE2-ND exhibits sub-optimal behaviour. This is in contrast to the optimal behaviour illustrated in Fig. \ref{fig:rdiff1}, where $q = 1.0$. 
\begin{figure}[h]
	\centering
	\includegraphics[scale=0.4]{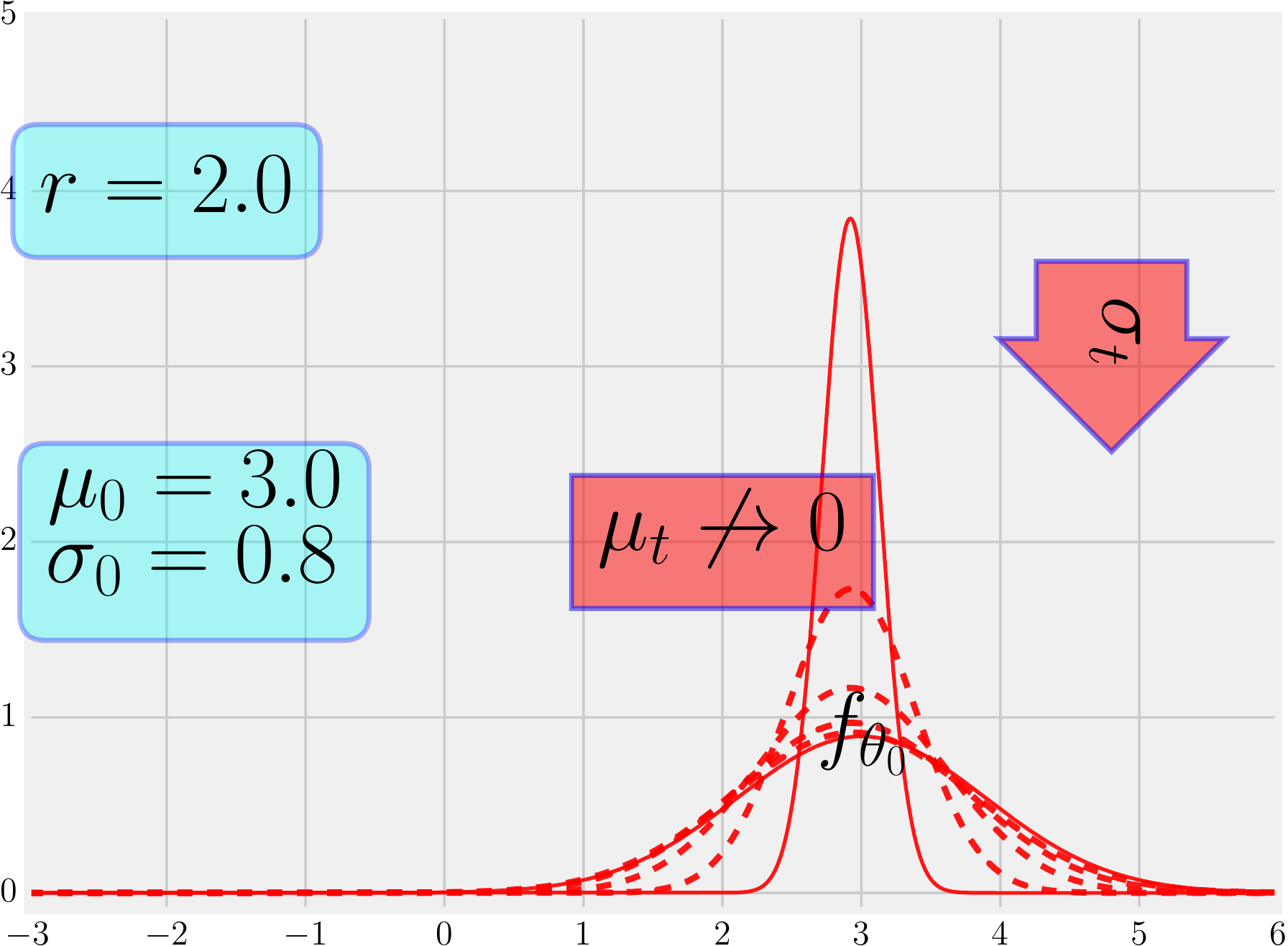}
	\caption{Example which illustrates the dependency of CE2-ND on $q$. Here, with $q=0.8$, CE2-ND converges to a sub-optimal solution.}\label{fig:qdiff1}
\end{figure}

However, the choice of $\rho$ does indeed affect the rate of convergence of the algorithm since it influences the rate of contraction of the search space. Indeed, one can easily observe that during a successful search, the search space contracts probabilistically during iterations and finally converges to the singleton $\{x^{*}\}$. So, for a given non-degenerate PDF $f_{\theta}$,  the threshold which is the $(1-\rho)$-quantile of $\mathcal{H}$ w.r.t $f_{\theta}$ is monotonically decreasing with respect to the parameter $\rho$ and hence for larger values of $\rho$ ($\rho$ close to $1$), the thresholds might rise very slowly which inversely affects the contraction rate of the search space. Also, for small values of $\rho$ ($\rho$  close to $0$) with the PDF $f_{\theta}$ being heavy tailed, it might occur that the variance might be very high in estimating $\gamma_{\rho}(\mathcal{H}, \theta)$ and this will negatively impact the optimal evolution of the model sequence. So an intermediate value of $\rho$ is always recommended for the optimal performance of CE2-ND. This is empirically demonstrated in the experimental section. 

Another important parameter even though non-tunable is the threshold levels $\gamma_{t_{(n)}}$, whose role is critical while updating the model parameter. Recall that the samples whose function values are greater than the current threshold are only considered for updating the model parameter. In CE2-ND, the value of $\gamma_{t_{(n)}}$ is the $(1-\rho)$-quantile of $\mathcal{H}$ w.r.t. the PDF $f_{\theta_{t_{(n)}}}$ (disregarding the mixture distribution for the time being). For brevity, let us drop the sub-sequence notation and use $t$ instead of $t_{(n)}$. In order to better comprehend the dynamics of the algorithm, it is imperative to explore the nature of the evolution of $\gamma_t$. One might intuitively think that the sequence $\{\gamma_t\}$ should be monotonically increasing, since the evolution of the model sequence $\{\theta_t\}$ is primarily guided towards increasing the probability of the high quality solutions. But one will need to rigorously prove this claim. Before we do that, we provide some existing results from the literature which will be of assistance to the analysis. For the standard CE method (update rule defined in Equation (\ref{eq:stdce})), we know from Lemma 4 of \cite{hu2007model} that
\begin{equation}\label{eq:stdcegminc}
\P_{\theta_{t+1}}(\mathcal{H}(\mathsf{X}) \geq \gamma_{\rho}(\mathcal{H}, \theta_t)) \geq \P_{\theta_t}((\mathcal{H}(\mathsf{X}) \geq \gamma_{\rho}(\mathcal{H}, \theta_t)) \geq \rho, \hspace*{5mm} \forall t \geq 0.
\end{equation}
This further implies that $\gamma_{\rho}(\mathcal{H}, \theta_{t+1}) \geq \gamma_{\rho}(\mathcal{H}, \theta_t)$, $\forall t \geq 0$. This establishes the monotonically ascending nature of the threshold sequence $\{\gamma_t\}$ for the standard CE. However in CE2-ND, the update of model parameters involves weighting with $S(\mathcal{H}(x))$. Also for the weighted case, we have the following result from Theorem 2 of \cite{hu2007model}.
\begin{equation}\label{eq:excegminc}
\mathbb{E}_{\theta_{t+1}}\left[S(\mathcal{H}(\mathsf{X}))\mathbb{I}_{\{\mathcal{H}(\mathsf{X}) \geq \gamma_{\rho}(\mathcal{H}, \theta_t)\}}\right] \geq \mathbb{E}_{\theta_{t}}\left[S(\mathcal{H}(\mathsf{X}))\mathbb{I}_{\{\mathcal{H}(\mathsf{X}) \geq \gamma_{\rho}(\mathcal{H}, \theta_t)\}}\right].
\end{equation}
The result shows that the expected behaviour of the subsequent model $\theta_{t+1}$ in the region $\{\mathcal{H}(x) \geq \gamma_{\rho}(\mathcal{H}, \theta_t)\}$ is superior to the expected behaviour of the current model $\theta_t$ in the same region. Even though this result provides quite an insight into the expected behaviour of the model sequence $\{\theta_t\}$, it is not trivial to deduce whether the threshold sequence $\{\gamma_t\}$ should improve over subsequent iterations. It requires slightly  deeper analysis which we provide here.
\begin{proposition}
	Let $\{\theta_t\}$ be the model sequence generated by the update rule (\ref{eq:opt1}). Further assume that both $S$ and $\mathcal{H}$ are Borel measurable. Then
	$\gamma_{\rho}(\mathcal{H}, \theta_{t+1}) \geq \gamma_{\rho}(\mathcal{H}, \theta_{t})$.
\end{proposition}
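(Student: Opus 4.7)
The target inequality $\gamma_{\rho}(\mathcal{H}, \theta_{t+1}) \geq \gamma_{\rho}(\mathcal{H}, \theta_{t})$ is equivalent, via the supremum characterization (\ref{eq:quantdef}) of the $(1-\rho)$-quantile, to the probability bound $\P_{\theta_{t+1}}(\mathcal{H}(\mathsf{X}) \geq \gamma^{*}) \geq \rho$ with $\gamma^{*} := \gamma_{\rho}(\mathcal{H}, \theta_{t})$. The baseline $\P_{\theta_{t}}(\mathcal{H} \geq \gamma^{*}) \geq \rho$ holds by definition of $\gamma^{*}$, so the proposition amounts to propagating this mass bound through one application of (\ref{eq:opt1}). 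My plan is to derive it by combining the first-order optimality of $\theta_{t+1}$ with Jensen's inequality, closely paralleling the unweighted argument of Lemma~4 in \cite{hu2007model}.

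Concretely, I would first record the optimality consequence of (\ref{eq:opt1}): evaluating its objective at the feasible competitor $\theta_{t}$ and rearranging gives
\begin{equation*}
\mathbb{E}_{\theta_{t}}\!\left[S(\mathcal{H}(\mathsf{X}))\mathbb{I}_{\{\mathcal{H}(\mathsf{X}) \geq \gamma^{*}\}} \log\frac{f_{\theta_{t+1}}(\mathsf{X})}{f_{\theta_{t}}(\mathsf{X})}\right] \geq 0.
\end{equation*}
This is precisely the inequality that, via Jensen on $\log$ with the normalized measure $d\nu(x) \propto S(\mathcal{H}(x))\mathbb{I}_{\{\mathcal{H}(x)\geq\gamma^{*}\}} f_{\theta_{t}}(x)\,dx$, delivers the $S$-weighted expectation monotonicity (\ref{eq:excegminc}). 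To reach a probability statement instead, I would run Jensen a second time against the unweighted conditional measure $d\nu'(x) \propto \mathbb{I}_{\{\mathcal{H}(x)\geq\gamma^{*}\}} f_{\theta_{t}}(x)\,dx$, which yields
\begin{equation*}
\log\frac{\P_{\theta_{t+1}}(\mathcal{H} \geq \gamma^{*})}{\P_{\theta_{t}}(\mathcal{H} \geq \gamma^{*})} \;\geq\; \mathbb{E}_{\nu'}\!\left[\log\frac{f_{\theta_{t+1}}}{f_{\theta_{t}}}\right],
\end{equation*}
so everything reduces to showing that the right-hand side is nonnegative, since then $\P_{\theta_{t+1}}(\mathcal{H} \geq \gamma^{*}) \geq \P_{\theta_{t}}(\mathcal{H} \geq \gamma^{*}) \geq \rho$ and the claim follows.

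The main obstacle is exactly this transfer: the optimality of $\theta_{t+1}$ delivers nonnegativity of the $S$-weighted log-ratio expectation (under $\nu$), whereas the above inversion wants nonnegativity of the unweighted version (under $\nu'$). To close the gap I would exploit strict positivity and monotonicity of $S$: on the good set one has $0 < S(\gamma^{*}) \leq S(\mathcal{H}(x)) \leq S(\mathcal{H}_{u})$, so the two measures $\nu$ and $\nu'$ are mutually absolutely continuous with Radon--Nikodym derivative bounded between $\P_{\theta_{t}}(\mathcal{H}\geq\gamma^{*})\,S(\gamma^{*})/Z_{t}$ and $\P_{\theta_{t}}(\mathcal{H}\geq\gamma^{*})\,S(\mathcal{H}_{u})/Z_{t}$. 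A careful sign-splitting of the integrand $\log(f_{\theta_{t+1}}/f_{\theta_{t}})$ into its positive and negative parts, weighted by these bounds, should let me pass from $\mathbb{E}_{\nu}[\log(f_{\theta_{t+1}}/f_{\theta_{t}})] \geq 0$ to $\mathbb{E}_{\nu'}[\log(f_{\theta_{t+1}}/f_{\theta_{t}})] \geq 0$. If this bound is too crude, a fallback is to invoke the NEF structure of $\{f_{\theta}\}$ and the moment-matching characterization of the KL projection of the reweighted density $g_{t} \propto S(\mathcal{H})\mathbb{I}_{\{\mathcal{H}\geq\gamma^{*}\}}f_{\theta_{t}}$ onto the family; since $g_{t}$ is supported on $\{\mathcal{H}\geq\gamma^{*}\}$, Pinsker-type bounds on the total-variation distance $\|f_{\theta_{t+1}}-g_{t}\|_{TV}$ should recover the probability inequality. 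I expect this weighted-to-unweighted transfer to be the genuinely delicate step in the argument.
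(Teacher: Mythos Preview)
Your proposal correctly isolates the crux --- passing from the $S$-weighted optimality inequality to an unweighted probability bound --- but neither of your proposed resolutions closes the gap. The sign-splitting argument does not work: knowing $\mathbb{E}_{\nu}[g] \geq 0$ with $c \leq d\nu/d\nu' \leq C$ only yields $c\,\mathbb{E}_{\nu'}[g^{+}] - C\,\mathbb{E}_{\nu'}[g^{-}] \leq \mathbb{E}_{\nu}[g]$, which says nothing about the sign of $\mathbb{E}_{\nu'}[g]$ when $c < C$. (Concretely, one can always cook up an integrand whose positive part sits where the weight is large and whose negative part sits where the weight is small, so that the weighted integral is nonnegative while the unweighted one is negative.) The Pinsker fallback is also not viable as stated: the KL divergence $D_{KL}(g_{t}\,\|\,f_{\theta_{t+1}})$ is minimized over $\Theta$ but is not small in any quantitative sense, and even a good TV bound would not by itself deliver $\P_{\theta_{t+1}}(\mathcal{H}\geq\gamma^{*}) \geq \rho$, since $g_{t}$ puts \emph{all} its mass on $\{\mathcal{H}\geq\gamma^{*}\}$ and you would need a lower bound, not an upper bound, on the discrepancy.

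The paper sidesteps the Jensen route entirely. It approximates the weight $S\circ\mathcal{H}$ by simple functions $s_{n}=\sum_{i}a_{i}\mathbb{I}_{A_{i}}$ with disjoint Borel $A_{i}$, applies the expectation-monotonicity inequality (\ref{eq:excegminc}) with $L=s_{n}$ to obtain $\sum_{i}a_{i}\P_{\theta^{s_{n}}_{t+1}}(\{\mathcal{H}\geq\gamma\}\cap A_{i}) \geq \sum_{i}a_{i}\P_{\theta^{s_{n}}_{t}}(\{\mathcal{H}\geq\gamma\}\cap A_{i})$, then drops the coefficients $a_{i}$ to recover a plain probability inequality on the disjoint union, and finally passes to the limit $n\to\infty$ via bounded convergence and concavity of the objective in (\ref{eq:opt1}). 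The structural advantage over your approach is that the simple-function decomposition converts the weighted inequality into a sum of genuine probabilities over disjoint sets, rather than a log-ratio expectation that resists de-weighting; if you want to repair your argument, that is the mechanism to emulate.
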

\begin{proof}
	Rewriting the update rule (\ref{eq:opt1}) in a generalized form as follows:
	\begin{flalign}\label{eq:rwopt1}
	\theta^{L}_{t+1} = \argmax_{\theta \in \Theta}\mathbb{E}_{\theta^{L}_{t}}\left[L(\mathsf{X}))\mathbb{I}_{\{\mathcal{H}(\mathsf{X}) \geq \gamma_{t+1}\}}\log{f_\theta(\mathsf{X})}\right],
	\end{flalign} 
	where $L:\mathcal{X} \rightarrow \bbbr$. Note that in the case of weighted CE, we have $L(x) = (S \circ \mathcal{H})(x)$. Now consider the case when $L$ is a characteristic function, \emph{i.e.}, $L$ is of the form $\mathbb{I}_{A}$ where $A \subseteq \mathbb{R}^{m}$ is a Borel set. Then, by Equation (\ref{eq:excegminc}), we have
	\begin{flalign}
	&\mathbb{E}_{\theta^{A}_{t+1}}\left[\mathbb{I}_{A}\mathbb{I}_{\{\mathcal{H}(\mathsf{X}) \geq \gamma_{\rho}(\mathcal{H}, \theta^{A}_t)\}}\right] \geq \mathbb{E}_{\theta^{A}_{t}}\left[
	\mathbb{I}_{A}\mathbb{I}_{\{\mathcal{H}(\mathsf{X}) \geq \gamma_{\rho}(\mathcal{H}, \theta^{A}_t)\}}\right].\nonumber\\\nonumber\\
	&\Longrightarrow \P_{\theta^{A}_{t+1}}(\{\mathcal{H}(\mathsf{X}) \geq \gamma_{\rho}(\mathcal{H}, \theta^{A}_t)\} \cap A) \geq \P_{\theta^{A}_t}(\{\mathcal{H}(\mathsf{X}) \geq \gamma_{\rho}(\mathcal{H}, \theta^{A}_t)\} \cap A),
	\end{flalign}
	where $\{\theta^{A}_{t}\}$ is the model sequence generated using Equation (\ref{eq:rwopt1}) with $L(x) = \mathbb{I}_{A}(x)$.
	
	Now note that $S \circ \mathcal{H}$ is Borel measurable (since $S$ and $\mathcal{H}$ are Borel measurable). Also $S > 0$. Hence there exists a sequence of simple functions $\{s_n\}$ such that $s_{n}(x) \rightarrow (S \circ \mathcal{H})(x)$, $\forall x \in \mathcal{X}$. Note that the simple function $s_n$ is of the form $s_n(x) = \sum_{i=1}^{d}a_i\mathbb{I}_{A_i}(x)$, where $a_i > 0$, $1 \leq \forall i \leq d$ and each $A_i \subseteq \mathbb{R}^{m}$ is a Borel set and $A_i \cap A_j = \phi$, $1\leq \forall i, j, \leq d$, $i \neq j$.
	
	Now let $\{\theta^{s_n}_{t}\}$ be sequence genenerated using Equation (\ref{eq:rwopt1}) with $L(x) = s_{n}(x)$. Hence by Equation (\ref{eq:excegminc}), we have
	\begin{flalign}\label{eq:sngeq}
	&\mathbb{E}_{\theta^{s_n}_{t+1}}\left[s_n(\mathsf{X})\mathbb{I}_{\{\mathcal{H}(\mathsf{X}) \geq \gamma_{\rho}(\mathcal{H}, \theta^{s_n}_t)\}}\right] \geq \mathbb{E}_{\theta^{s_n}_{t}}\left[s_n(\mathsf{X}))\mathbb{I}_{\{\mathcal{H}(\mathsf{X}) \geq \gamma_{\rho}(\mathcal{H}, \theta^{s_n}_t)\}}\right].\nonumber\\
	&\Longrightarrow \sum_{i=1}^{d}a_i\P_{\theta^{s_n}_{t+1}}(\{\mathcal{H}(\mathsf{X}) \geq \gamma_{\rho}(\mathcal{H}, \theta^{s_n}_t)\} \cap A_i) \geq \nonumber\\&\hspace*{30mm}\sum_{i=1}^{d}a_i\P_{\theta^{s_n}_t}(\{\mathcal{H}(\mathsf{X}) \geq \gamma_{\rho}(\mathcal{H}, \theta^{s_n}_t)\} \cap A_i).\nonumber\\
	&\Longrightarrow \sum_{i=1}^{d}\P_{\theta^{s_n}_{t+1}}(\{\mathcal{H}(\mathsf{X}) \geq \gamma_{\rho}(\mathcal{H}, \theta^{s_n}_t)\} \cap A_i) \geq \nonumber\\&\hspace*{30mm}\sum_{i=1}^{d}\P_{\theta^{s_n}_t}(\{\mathcal{H}(\mathsf{X}) \geq \gamma_{\rho}(\mathcal{H}, \theta^{s_n}_t)\} \cap A_i).\nonumber\\
	&\Longrightarrow \P_{\theta^{s_n}_{t+1}}(\mathcal{H}(\mathsf{X}) \geq \gamma_{\rho}(\mathcal{H}, \theta^{s_n}_t)) \geq \P_{\theta^{s_n}_t}(\mathcal{H}(\mathsf{X}) \geq \gamma_{\rho}(\mathcal{H}, \theta^{s_n}_t)).
	\end{flalign}
	
	Let $\{\theta_t\}$ be the model sequence generated using Equation (\ref{eq:rwopt1}) with $L(x) = (S \circ \mathcal{H})(x)$. Now, since the function in Equation (\ref{eq:rwopt1}) is concave, it is not hard to verify that $\theta^{s_n}_{t} \rightarrow \theta_{t}$ as $n \rightarrow \infty$. Hence, from Equation (\ref{eq:sngeq}) and by the hypothesis that $\mathcal{X}$ is compact and further using the Bounded Convergence Theorem, we get
	\begin{flalign*}
	&\P_{\theta_{t+1}}(\mathcal{H}(\mathsf{X}) \geq \gamma_{\rho}(\mathcal{H}, \theta_t)) \geq \P_{\theta_t}(\mathcal{H}(\mathsf{X}) \geq \gamma_{\rho}(\mathcal{H}, \theta_t)) \geq \rho.\\
	&\Longrightarrow \gamma_{\rho}(\mathcal{H}, \theta_{t+1}) \geq \gamma_{\rho}(\mathcal{H}, \theta_{t}).
	\end{flalign*}
	This completes the proof.
\end{proof}
It is important to note that the above claim which confirms the monotonically increasing nature of the sequence $\{\gamma_t\}$ is established for the case which does not involve the mixture distribution. Now for the mixture distribution case, we have the earlier result (Theorem \ref{thm:ce2nddetmain}), where we confirmed the convergence of the model sequence to the degenerate distribution concentrated on the global optimum $x^{*}$. Combining these two results, we obtain the following corollary.
\begin{corollary}
	Let the assumptions of Theorem \ref{thm:ce2nddetmain} hold. Then $\gamma_{\rho}(\mathcal{H}, \widehat{\theta}_{t_{(n)}}) \rightarrow \mathcal{H}(x^{*})$ as $n \rightarrow \infty$ with probability 1.
\end{corollary}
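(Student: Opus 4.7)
My approach will be to leverage Theorem \ref{thm:ce2nddetmain} to establish that $\theta_{t_{(n)}} \to \theta^{*} = (x^{*}, 0_{m\times m})^{\top}$ almost surely, and then to transfer this convergence of the model parameters into the convergence of the quantiles through a ``sandwich'' bound $\mathcal{H}(x^{*}) - \epsilon \leq \gamma_{\rho}(\mathcal{H}, \widehat{\theta}_{t_{(n)}}) \leq \mathcal{H}(x^{*})$ that holds eventually for every $\epsilon > 0$.

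The upper bound is immediate: since $\mathcal{H}(x) \leq \mathcal{H}(x^{*})$ for all $x \in \mathcal{X}$, we have $\P_{\widehat{\theta}_{t_{(n)}}}(\mathcal{H}(\mathsf{X}) > \mathcal{H}(x^{*})) = 0 < \rho$, so by the definition of the $(1-\rho)$-quantile in Equation (\ref{eq:quantdef}), $\gamma_{\rho}(\mathcal{H}, \widehat{\theta}_{t_{(n)}}) \leq \mathcal{H}(x^{*})$ for all $n$. The lower bound is where the real work lies. Fix $\epsilon > 0$ and consider the open set $U_{\epsilon} \triangleq \{x \in \mathcal{X} : \mathcal{H}(x) > \mathcal{H}(x^{*}) - \epsilon\}$, which is open by continuity of $\mathcal{H}$ and contains $x^{*}$. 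Since $\mu_{t_{(n)}} \to x^{*}$ and $\Sigma_{t_{(n)}} \to 0_{m\times m}$ a.s., the Gaussian measure $f_{\theta_{t_{(n)}}}$ converges weakly to the Dirac measure $\delta_{x^{*}}$ a.s. By the Portmanteau theorem applied to the open set $U_{\epsilon}$, we get $\liminf_{n\to\infty} \P_{\theta_{t_{(n)}}}(\mathsf{X} \in U_{\epsilon}) \geq \delta_{x^{*}}(U_{\epsilon}) = 1$, so $\P_{\theta_{t_{(n)}}}(\mathcal{H}(\mathsf{X}) > \mathcal{H}(x^{*}) - \epsilon) \to 1$ a.s.

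Consequently, under the mixture distribution $\widehat{f}_{\theta_{t_{(n)}}}$,
\begin{equation*}
\P_{\widehat{\theta}_{t_{(n)}}}(\mathcal{H}(\mathsf{X}) \geq \mathcal{H}(x^{*}) - \epsilon) \;\geq\; (1-\lambda) \P_{\theta_{t_{(n)}}}(\mathcal{H}(\mathsf{X}) > \mathcal{H}(x^{*}) - \epsilon) \;\xrightarrow[n\to\infty]{} \; 1-\lambda.
\end{equation*}
The characterization of the limit distribution $\widehat{\theta^{*}}$ established in the proof of Theorem \ref{thm:ce2nddetmain} (see Equation (\ref{eq:degdistres}) and the subsequent Claim (C1)) guarantees that $\rho$ lies in a range where $\gamma_{\rho}(\mathcal{H}, \widehat{\theta^{*}}) = \mathcal{H}(x^{*})$; in particular, this forces $\rho \leq 1-\lambda$. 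Hence for every $\eta \in (0, 1-\lambda-\rho]$ and sufficiently large $n$ we have $\P_{\widehat{\theta}_{t_{(n)}}}(\mathcal{H}(\mathsf{X}) \geq \mathcal{H}(x^{*}) - \epsilon) \geq \rho + \eta > \rho$, and applying Equation (\ref{eq:quantdef}) yields $\gamma_{\rho}(\mathcal{H}, \widehat{\theta}_{t_{(n)}}) \geq \mathcal{H}(x^{*}) - \epsilon$ eventually, a.s. Combining the two bounds and letting $\epsilon \downarrow 0$ completes the proof.

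\textbf{Main obstacle.} The delicate part is the lower bound: the CDF of $\mathcal{H}(\mathsf{X})$ under $\widehat{f}_{\theta^{*}}$ has a jump at $\mathcal{H}(x^{*})$ (because of the atom at $x^{*}$), so a direct invocation of continuity of quantiles under weak convergence is not available and one must argue carefully through open sets via the Portmanteau theorem and exploit the fact that the mixing weight $(1-\lambda)$ strictly exceeds $\rho$ at the limit. The rest is a matter of bookkeeping with the definition of the $(1-\rho)$-quantile.
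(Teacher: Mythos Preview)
Your argument is correct and, in fact, more explicit than what the paper offers. The paper does not give a written proof of this corollary at all: it simply states, in the paragraph preceding it, that ``combining'' the monotonicity Proposition (which is proved only for the \emph{non-mixture} update) with Theorem~\ref{thm:ce2nddetmain} yields the result. That combination is left entirely to the reader, and the monotonicity claim does not literally apply to the mixture-based quantile $\gamma_{\rho}(\mathcal{H},\widehat{\theta}_{t_{(n)}})$, so the paper's justification is heuristic.

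Your route is genuinely different and cleaner: you bypass monotonicity altogether and argue directly from the almost-sure convergence $\theta_{t_{(n)}}\to\theta^{*}$ via weak convergence of $f_{\theta_{t_{(n)}}}$ to $\delta_{x^{*}}$ and a Portmanteau lower bound on open super-level sets. This buys you a self-contained argument that works for the mixture distribution as stated, without needing to borrow a monotonicity result proved in a different setting.

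One small refinement: in your lower bound you drop the $\lambda f_{\theta_{0}}$ contribution and obtain only $\liminf \geq 1-\lambda$, which forces you to extract the strict inequality $\rho < 1-\lambda$ from the internals of the proof of Theorem~\ref{thm:ce2nddetmain}. You can avoid this detour entirely by keeping the $\lambda$ term: since $f_{\theta_{0}}$ is strictly positive on $\mathcal{X}$ and $U_{\epsilon}$ is open and nonempty, $\P_{\theta_{0}}(\mathcal{H}(\mathsf{X}) > \mathcal{H}(x^{*})-\epsilon) > 0$, so the mixture probability has $\liminf$ strictly greater than $1-\lambda$, and the comparison with $\rho$ then goes through without appealing to Equation~(\ref{eq:degdistres}) or Claim~(C1).
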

To illustrate this particular aspect of CE2-ND, we again consider the setting from Example \ref{exp:exothmce2mn}. The results obtained are shown in Fig. \ref{fig:gdiff1}.
\begin{figure}[h]
	\centering
	\includegraphics[scale=0.4]{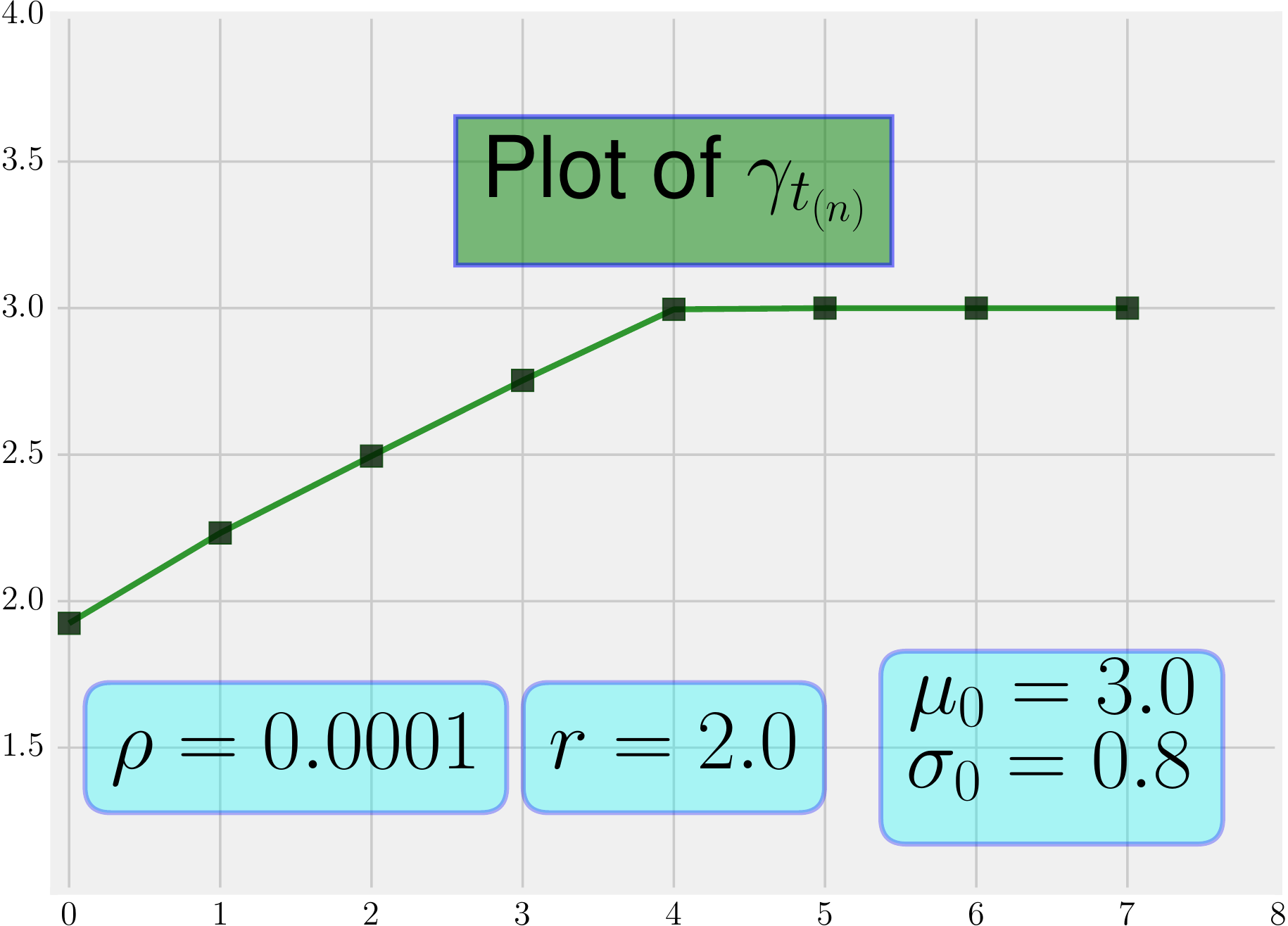}
	\caption{Example which illustrates the evolution of $\gamma_{\rho}(\mathcal{H}, \theta_{t(n)})$.}\label{fig:gdiff1}
\end{figure}
\section{Experimental Illustrations}
We tested CE2-ND on several global optimization benchmark functions from \cite{jamil2013literature}. The benchmark functions that we consider exhibit an uneven and rough landscape with many local optima. To evaluate the performance of the algorithm, we compare it against the naive Monte-Carlo CE (MCCE) from Algorithm \ref{algo:cemc} and the state-of-the-art gradient based Monte-Carlo CE (GMCCE) \cite{hu2012stochastic}, which is a modified version of the Monte-Carlo CE. The model parameter update of GMCCE is given by
\begin{flalign}\label{eq:gmcce}
&\mu_{t+1} = \alpha_{t}\frac{\frac{1}{N_{t}}\sum_{i=1}^{N_{t}}\mathsf{g_{1}}(\mathcal{H}(\mathsf{X}_{i}), \mathsf{X}_{i}, \gamma_{t+1})}{\frac{1}{N_{t}}\sum_{i=1}^{N_{t}}\mathsf{g_{0}}(\mathcal{H}(\mathsf{X}_{i}), \gamma_{t+1})} + (1-\alpha_t)\mu_{t},\\
&\Sigma_{t+1} = \alpha_{t}\frac{\frac{1}{N_{t}}\sum_{i=1}^{N_{t}}\mathsf{g_{2}}(\mathcal{H}(\mathsf{X}_{i}), \mathsf{X}_{i}, \gamma_{t+1}, \mu_{t+1})}{\frac{1}{N_{t}}\sum_{i=1}^{N_{t}}\mathsf{g_{0}}\bm{(}\mathcal{H}(\mathsf{X}_{i}), \gamma_{t+1})} + \nonumber\\ &\hspace*{2cm}(1-\alpha_t)(\Sigma_{t} + (\mu_t - \mu_{t+1})(\mu_t - \mu_{t+1})^{\top}),
\end{flalign}
where $\alpha_t \in (0,1)$ and $\gamma_{t+1}$ is computed using Equation (\ref{eqn:quantmcest}).\\

In each of the plots shown in this section, the solid graph represents the trajectory of $\mathcal{H}(\mu_{t})$, while the dotted horizontal line is the global maximum $\mathcal{H}^{*}$ of the objective function $\mathcal{H}(\cdot)$. The $x$-axis represents the real time in seconds relative to the start of the algorithm. This particular unit is pertinent due to the contrasting nature of the algorithms (a few being incremental and online, while others are batch based). Hence, by considering the $x$-axis to be the relative time in seconds, we obtain a common basis for comparison. All the three algorithms use the same initial distribution $\theta_0$. This helps to compare the algorithms independent of any initial  bias. The results shown are averages over $10$ independent simulations obtained with the same initial distribution $\theta_{0}$. In this section we take $S(x) = \exp(rx), r > 0$.\\

We consider the following benchmark functions for evaluating the performance of our algorithm:\\
\begin{enumerate}
	\item
	\textbf{Griewank function} [$m = 200$][Continuous, Differentiable, Non-Separable, Scalable, Multimodal]
	\begin{equation}\label{eqn:griewank}
	\mathcal{H}_{1}(x) = -1-\frac{1}{4000}\sum_{i=1}^{m}x_{i}^{2}+\prod_{i=1}^{m}\cos{(x_i/\sqrt{i})}. 
	\end{equation}\\
	\item
	\textbf{Levy function} [$m = 50$][Continuous, Differentiable, Multimodal]
	\begin{gather*}
	\mathcal{H}_{2}(x) = -1-\sin^{2}{(\pi y_{1})}-(y_{m}-1)^{2}(1 + \sin^{2}{(2\pi y_{m})})- \\
	\sum_{i=1}^{m}[(y_{i}-1)^{2}(1+10\sin^{2}{(\pi y_i+1)})], \\
	\mathrm{where } \hspace{3mm} y_i = 1 + \frac{x_i-1}{4}. \nonumber
	\end{gather*}\\
	\item
	\textbf{Trigonometric function} [$m = 30$][Continuous, Differentiable, Non-Separable, Scalable, Multimodal]
	\begin{gather*}
	\mathcal{H}_{3}(x) = -1-\sum_{i=1}^{m}[8\sin^{2}{(7(x_i-0.9)^{2})} + 6\sin^{2}{(14(x_i-0.9)^{2})}-(x_i-0.9)^{2}].
	\end{gather*}\\
	\item
	\textbf{Rastrigin function} [$m$ = $30$][Continuous, Differentiable, Scalable, Multimodal]
	\begin{gather*}
	\mathcal{H}_{4}(x) = -\sum_{i=1}^{m}(x_{i}^{2}-10\cos{(2\pi x_{i})})-10m.
	\end{gather*}\\
	\item
	\textbf{Qing function} [$m$ = $30$][Continuous, Differentiable, Separable, Scalable, Multimodal]
	\begin{gather*}
	\mathcal{H}_{5}(x) = -\sum_{i=1}^{m}(x_{i}^{2}-i)^{2}.
	\end{gather*}\\
	\item
	\textbf{Bukin function} [$m=2$][Multimodal, Continuous, Non-Differentiable, Non-Separable, Non-Scalable]
	\begin{gather*}
	\mathcal{H}_{6}(x) = -100\sqrt{x_2-0.01x_1^{2}}-0.01\vert x_1 + 10\vert-20.
	\end{gather*}\\
	\item
	\textbf{Salomon function} [$m=20$][Multimodal, Continuous, Differentiable,  Non-Separable, Scalable]
	\begin{gather*}
		\mathcal{H}_{7}(x) = 10\left(-1+\cos{\left(2\pi\sqrt{\sum_{i=1}^{m}x_i^{2}}\right)}-0.1\sqrt{\sum_{i=1}^{m}x_i^{2}}\right).
	\end{gather*}\\
	\item
	\textbf{Rosenbrock function} [$m=10$][Unimodal, Continuous, Differentiable, Non-Separable, Scalable]
	\begin{gather*}
	\mathcal{H}_{8}(x) = 
	-0.0001\left(\sum_{i=1}^{m}100(x_{2i+1}-x_{2i}^2)^2 + (1-x_{2i})^2\right).
	\end{gather*}\\
	\item
	\textbf{Plateau function} [$m=100$][Multimodal, Continuous, Non-Differentiable]
	\begin{gather*}
	\mathcal{H}_{9}(x) = 
	-0.1\left(30+\sum_{i=1}^{m} \lfloor\vert x_i \vert \rfloor\right).
	\end{gather*}\\
	\item
	\textbf{Pathological function} [$m=50$][Multimodal, Continuous, Non-Differentiable, Non-Separable, Non-Scalable]
	\begin{gather*}
	\mathcal{H}_{10}(x) = -0.1
\sum_{i=1}^{m-1}\left(\frac{\sin^{2}{\sqrt{100x_{i}^{2}+x_{i+1}^2}}-0.5}{0.001\left(x_{i}-x_{i+1}\right)^{4}+1}+0.5\right).
	\end{gather*}
\end{enumerate}

\vspace*{10mm}
The results of the numerical experiments are shown in Fig. \ref{fig:mnres}. The various parameter values used in the experiments are shown in Table \ref{tab:parvalues} and Table \ref{tab:initvalues}. To demonstrate the advantages of our algorithm with regards to memory utilization, we plot the real time memory usage of our algorithm and GMCCE. The comparison is shown in Fig. \ref{fig:memcompare}. To understand the behaviour of our algorithm with respect to the quantile parameter $\rho$, we plot the performance of the algorithm for various values of $\rho$. The results are shown in Fig. \ref{fig:rhocompare}. 

\begin{table}[h]
	\vspace*{4mm}
	\caption{The parameter values used in the experiments.}\label{tab:parvalues}
	\centering
	\begin{adjustbox}{max width=\textwidth}
		\begin{tabular}{| c | c | c | c | c | c | c | c | c | c | c |}
			\specialrule{.2em}{.02em}{.02em} 
			\multicolumn{7}{|c|}{\textbf{CE2-ND}} & \multicolumn{4}{|c|}{\textbf{GMCCE}}\\
			\specialrule{.1em}{.02em}{.02em} 
						\\[-1em]
			$\mathcal{H}$ & $r$ & $\beta_{t}$ & $\lambda$ & $c_t$ & $\epsilon_1$ & $\rho$ & $r$ & $\alpha_{t}$ & $\rho$ & $N_t$ \\ \hline
						\\[-1em]
			$\mathcal{H}_{1}$ & $1.0$ & $t^{-0.52}$ & $t_{(n)}^{-3.0}$ & $0.06$ & $0.9$ & $0.001$ & $0.1$ & $0.1$ & $0.001$ & $N_{t+1} = 1.03N_{t}$, $N_0=700$    \\ \hline
						\\[-1em]
			$\mathcal{H}_{2}$ & $0.001$ & $0.1$ & $t_{(n)}^{-3.0}$ & $0.06$ & $0.9$ & $0.1$ & $0.001$ & $0.1$ & $0.1$ &  $N_{t+1} = 1.001N_{t}$, $N_0=700$  \\ \hline
						\\[-1em]
			$\mathcal{H}_{3}$ & $0.001$ & $0.03$ & $t_{(n)}^{-3.0}$ & $0.06$ & $0.9$ & $0.001$ & $0.001$ & $0.001$ & $0.1$ & $N_{t+1} = 1.001N_{t}$, $N_0=700$   \\ \hline
			\\[-1em]
			$\mathcal{H}_{4}$ & $0.01$ & $0.2$ & $t_{(n)}^{-3.0}$ & $0.06$ & $0.9$ & $0.1$ & $0.001$ & $0.2$ & $0.01$ &  $N_{t+1} = 1.001N_{t}$, $N_0=800$   \\ \hline
			\\[-1em]
			$\mathcal{H}_{5}$ & $0.00001$ & $0.05$ & $t_{(n)}^{-3.0}$ & $0.06$ & $0.9$ & $0.01$ & $0.001$ & $0.2$ & $0.01$ & $N_{t+1} = 1.001N_{t}$, $N_0=1000$   \\ \hline
			\\[-1em]
			$\mathcal{H}_{6}$ & $0.1$ & $t_{(n)}^{-0.52}$ & $t_{(n)}^{-3.0}$ & $0.06$ & $0.9$ & $0.01$ & $0.1$ & $0.1$ & $0.01$ & $N_{t+1} = 1.001N_{t}$, $N_0=2000$   \\ \hline
			\\[-1em]
			$\mathcal{H}_{7}$ & $0.5$ & $0.4$ & $t_{(n)}^{-3.0}$ & $0.08$ & $0.9$ & $0.1$ & $0.5$ & $0.5$ & $0.1$ & $N_{t+1} = 1.005N_{t}$, $N_0=2000$   \\ \hline
			\\[-1em]
			$\mathcal{H}_{8}$ & $0.001$ & $0.1$ & $t_{(n)}^{-4.0}$ & $0.06$ & $0.9$ & $0.01$ & $0.001$ & $0.4$ & $0.01$ & $N_{t+1} = 1.001N_{t}$, $N_0=1000$   \\ \hline
			\\[-1em]
			$\mathcal{H}_{9}$ & $0.05$ & $0.22$ & $0.01$ & $0.05$ & $0.9$ & $0.02$ & $0.05$ & $0.2$ & $0.02$ & $N_{t+1} = 1.001N_{t}$, $N_0=1500$   \\ \hline
			\\[-1em]
			$\mathcal{H}_{10}$ & $0.04$ & $0.2$ & $0.2$ & $0.05$ & $0.9$ & $0.1$ & $0.04$ & $0.2$ & $0.1$ & $N_{t+1} = 1.001N_{t}$, $N_0=1200$   \\ \hline									
			\specialrule{.1em}{.02em}{.02em} 
		\end{tabular}
	\end{adjustbox}
\end{table}

\normalsize
\begin{table}[h]
	\vspace*{2mm}
	\caption{The initial distribution $\theta_0$ used in the various cases and the global maximum $\mathcal{H}^{*}$ of the respective functions.}\label{tab:initvalues}
	\begin{center}
		\begin{tabular}{| c | c | c |}
			\specialrule{.2em}{.02em}{.02em} 
			$\mathcal{H}(\cdot)$ & $\theta_0$ & $\mathcal{H}^{*}$ \\ \hline
			\\[-0.9em]			
			$\mathcal{H}_1$ & $(50.0, 50.0, \dots, 50.0)^{\top}$, $100\I_{200 \times 200}$ & $0$\\\hline
			\\[-0.9em]
			$\mathcal{H}_2$ & $(30.0, 30.0, \dots, 30.0)^{\top}$, $250\I_{50 \times 50}$ & $-1$\\\hline
			\\[-0.9em]
			$\mathcal{H}_3$ & $(10.0, 10.0, \dots, 10.0)^{\top}$, $100\I_{30 \times 30}$ & $-1$\\\hline
			\\[-0.9em]
			$\mathcal{H}_4$ & $(25.0, 25.0, \dots, 25.0)^{\top}$, $100\I_{30 \times 30}$ & $0$\\\hline
			\\[-0.9em]
			$\mathcal{H}_5$ & $(20.0, 20.0, \dots, 20.0)^{\top}$, $200\I_{30 \times 30}$ & $0$\\\hline
			\\[-0.9em]		
			$\mathcal{H}_6$ & $(30.0, 30.0)^{\top}$, $250\I_{2 \times 2}$ & $0$\\\hline
			\\[-0.9em]
			$\mathcal{H}_7$ & $(10.0, 10.0, \dots, 10.0)^{\top}$, $10\I_{20 \times 20}$ & $0$\\\hline			
			\\[-0.9em]			
			$\mathcal{H}_8$ & $(10.0, 10.0, \dots, 10.0)^{\top}$, $10\I_{10 \times 10}$ & $0$\\\hline
			\\[-0.9em]
			$\mathcal{H}_9$ & $(20.0, 20.0, \dots, 20.0)^{\top}$, $400\I_{100 \times 100}$ & $-3$\\\hline			
			\\[-0.9em]
			$\mathcal{H}_{10}$ & $(20.0, 20.0, \dots, 20.0)^{\top}$, $100\I_{50 \times 50}$ & $0$\\\hline						
		\end{tabular}
	\end{center}
\end{table}
\begin{figure}[!h]
	\begin{subfigure}[h]{0.5\textwidth}	 
		\includegraphics[height=52mm, width=60mm]{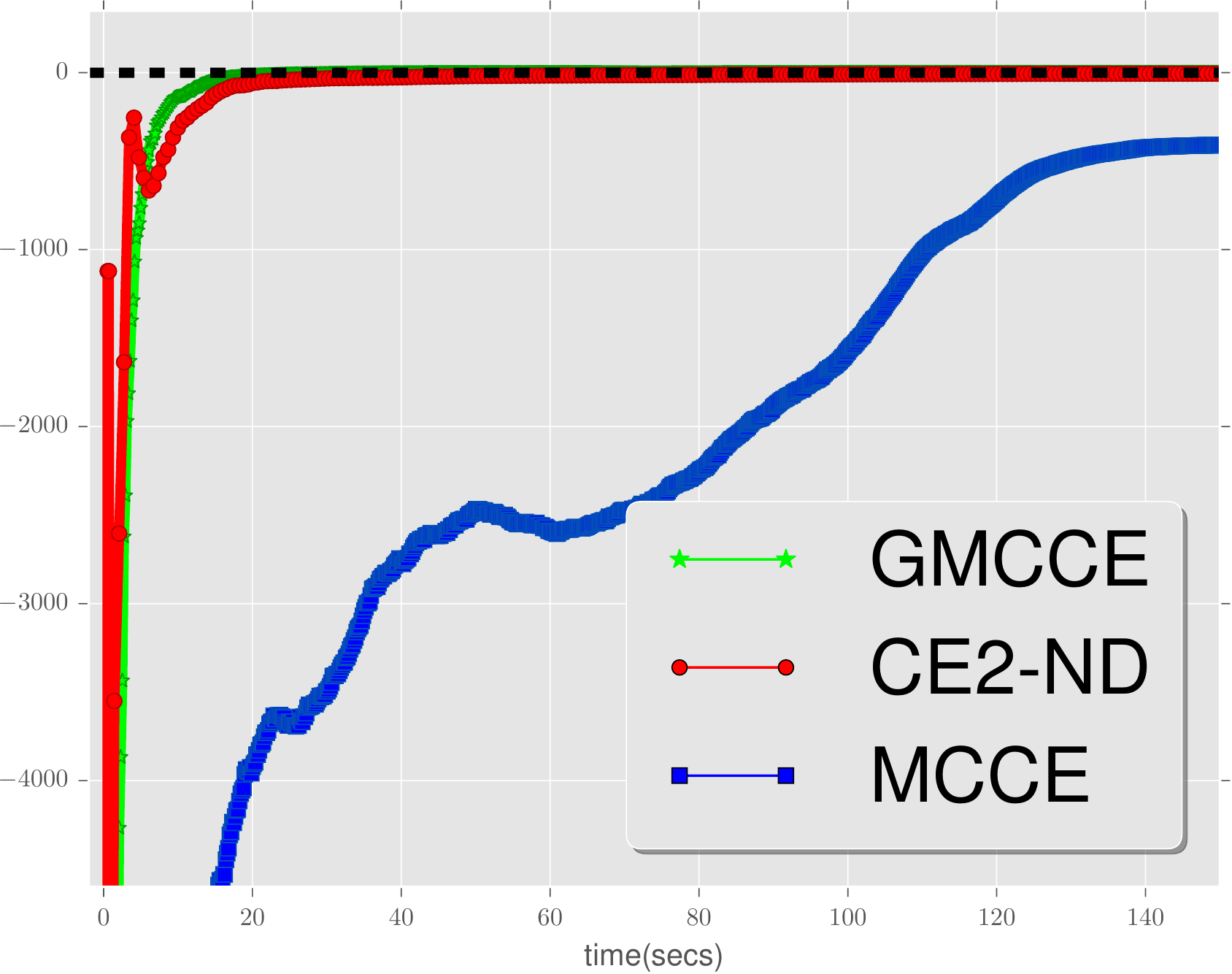}
		\subcaption{Levy function}
	\end{subfigure}%
	\begin{subfigure}[h]{0.5\textwidth}	 
		\includegraphics[width=60mm, height=52mm]{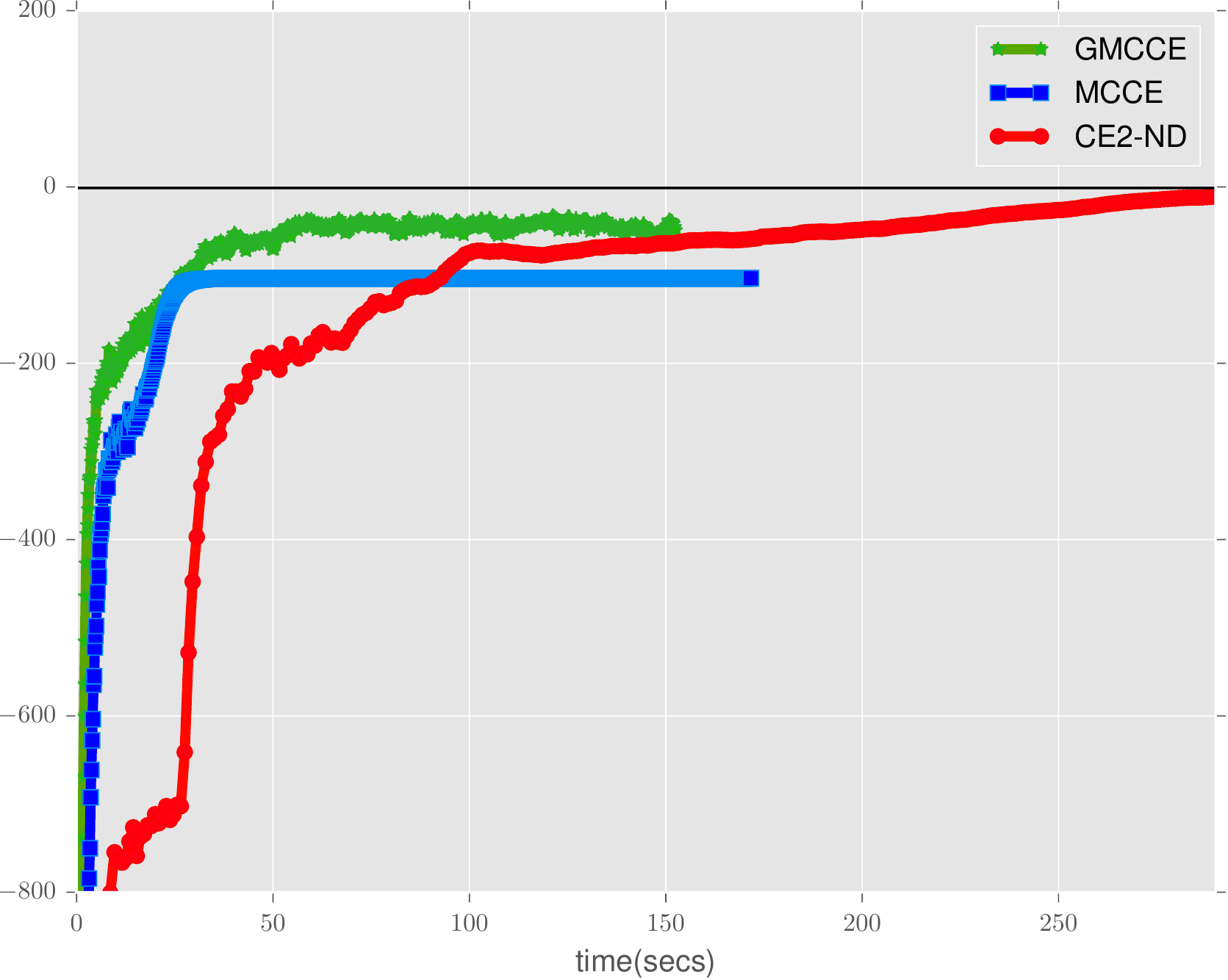}
		\subcaption{Trigonometric function}
	\end{subfigure}\vspace*{10mm}\\
	\begin{subfigure}[h]{0.5\textwidth}
		\includegraphics[width=60mm, height=52mm]{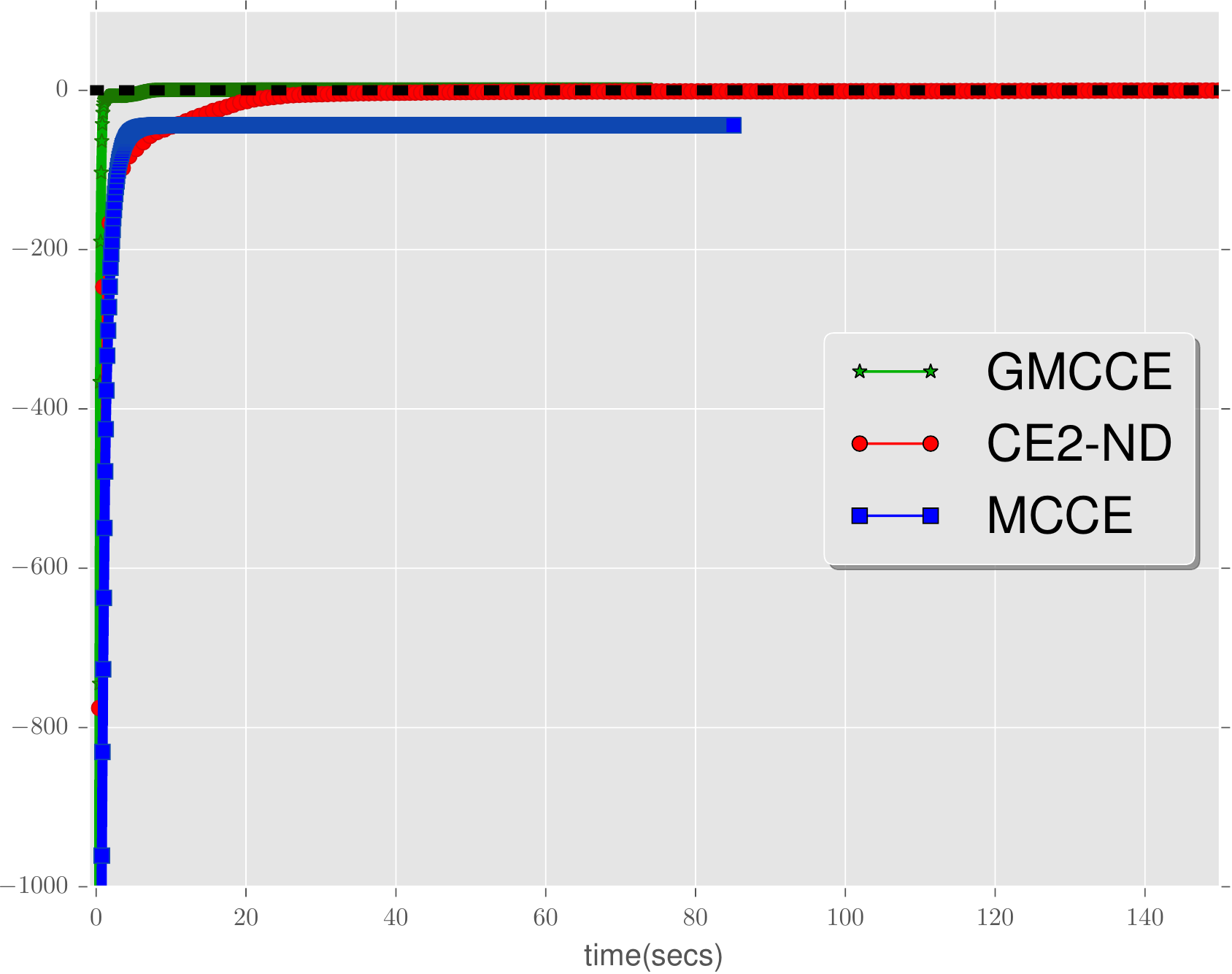}
		\subcaption{Qing function}
	\end{subfigure}
	\begin{subfigure}[h]{0.5\textwidth}
		\includegraphics[height=52mm, width=60mm]{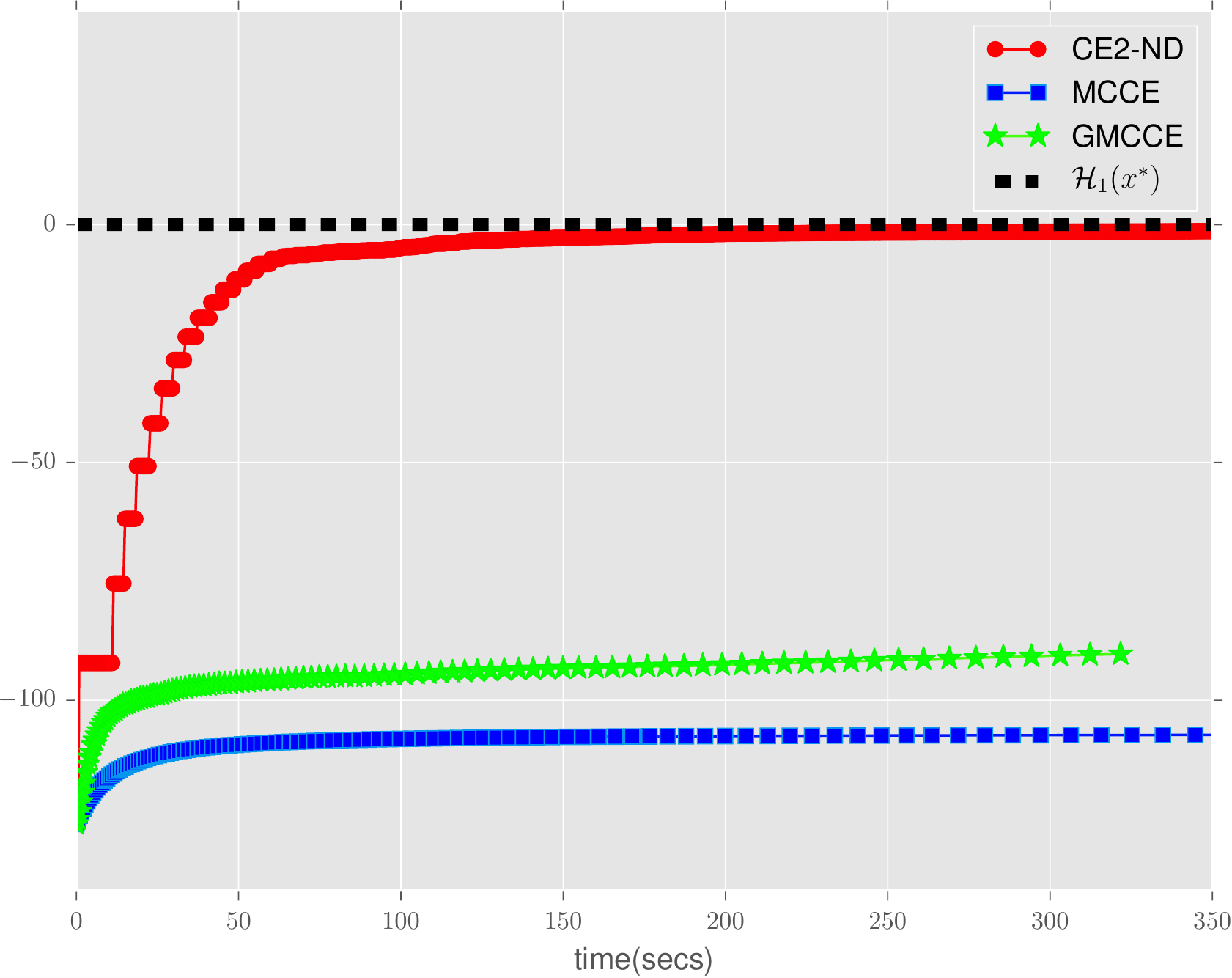}
		\subcaption{Griewank function}
	\end{subfigure}\vspace*{10mm}\\
	\begin{subfigure}[h]{0.5\textwidth}
		\includegraphics[width=60mm, height=52mm]{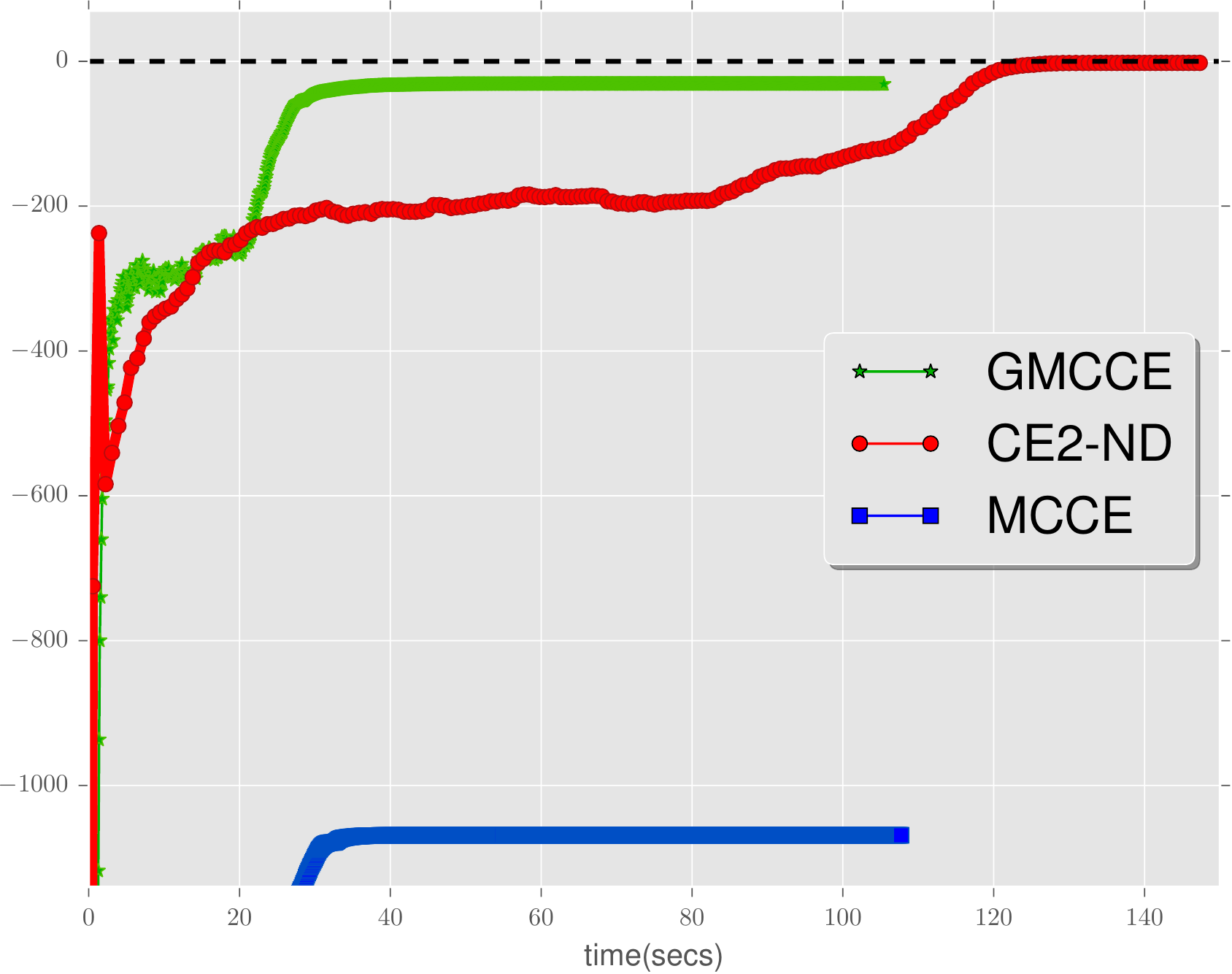}
		\subcaption{Rastrigin function}
	\end{subfigure}
	\begin{subfigure}[h]{0.5\textwidth}
		\includegraphics[width=60mm, height=52mm]{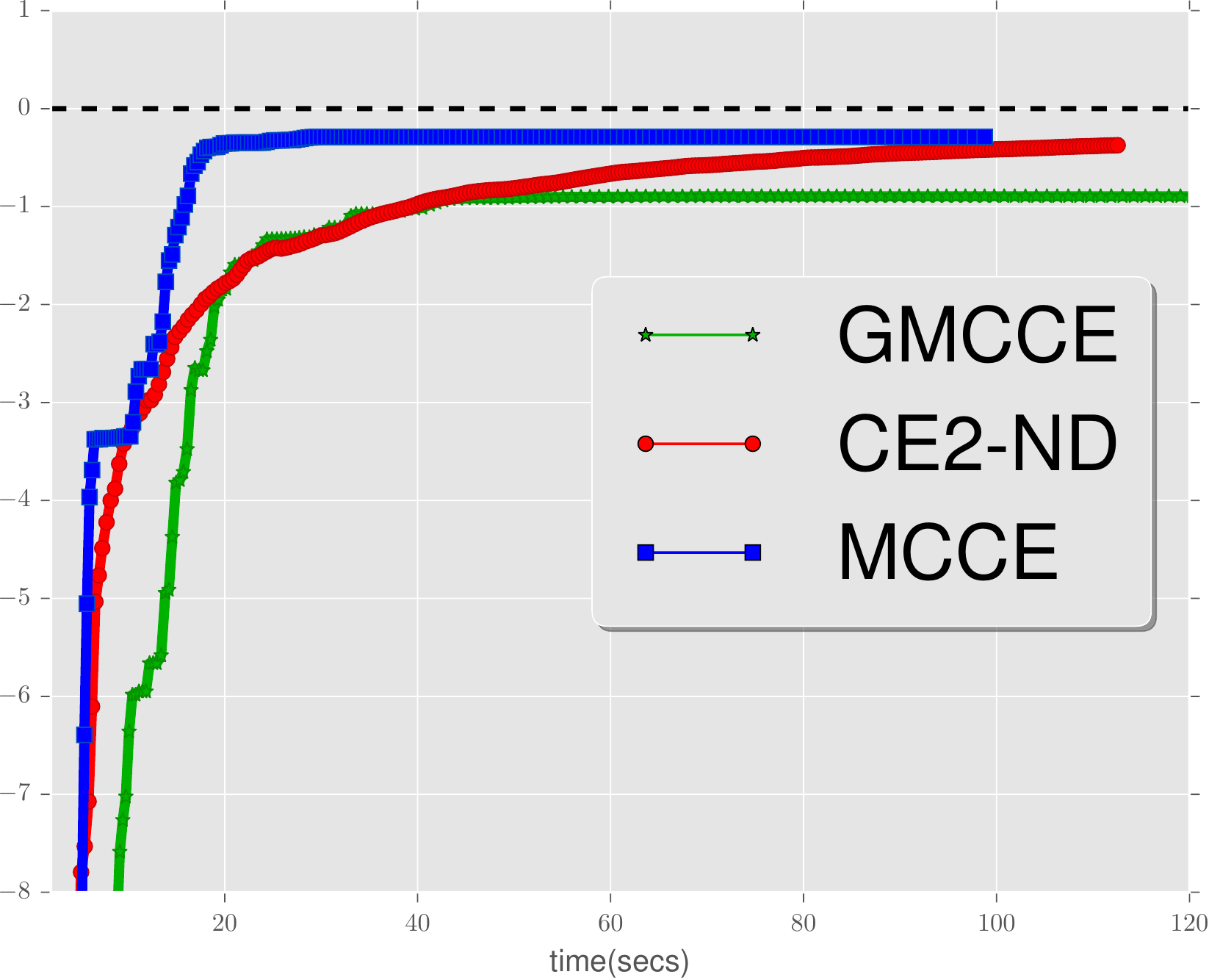}
		\subcaption{Bukin function}
	\end{subfigure}
	\caption{The performance comparison of CE2-ND against GMCCE and MCCE. Here $y$-axis is $\mathcal{H}(\mu_t)$ and $x$-axis is the time in secs relative to the start of the algorithm.}\label{fig:mnres}
\end{figure}

\begin{figure}[!h]
	\begin{subfigure}[h]{0.5\textwidth}
		\includegraphics[height=52mm, width=60mm]{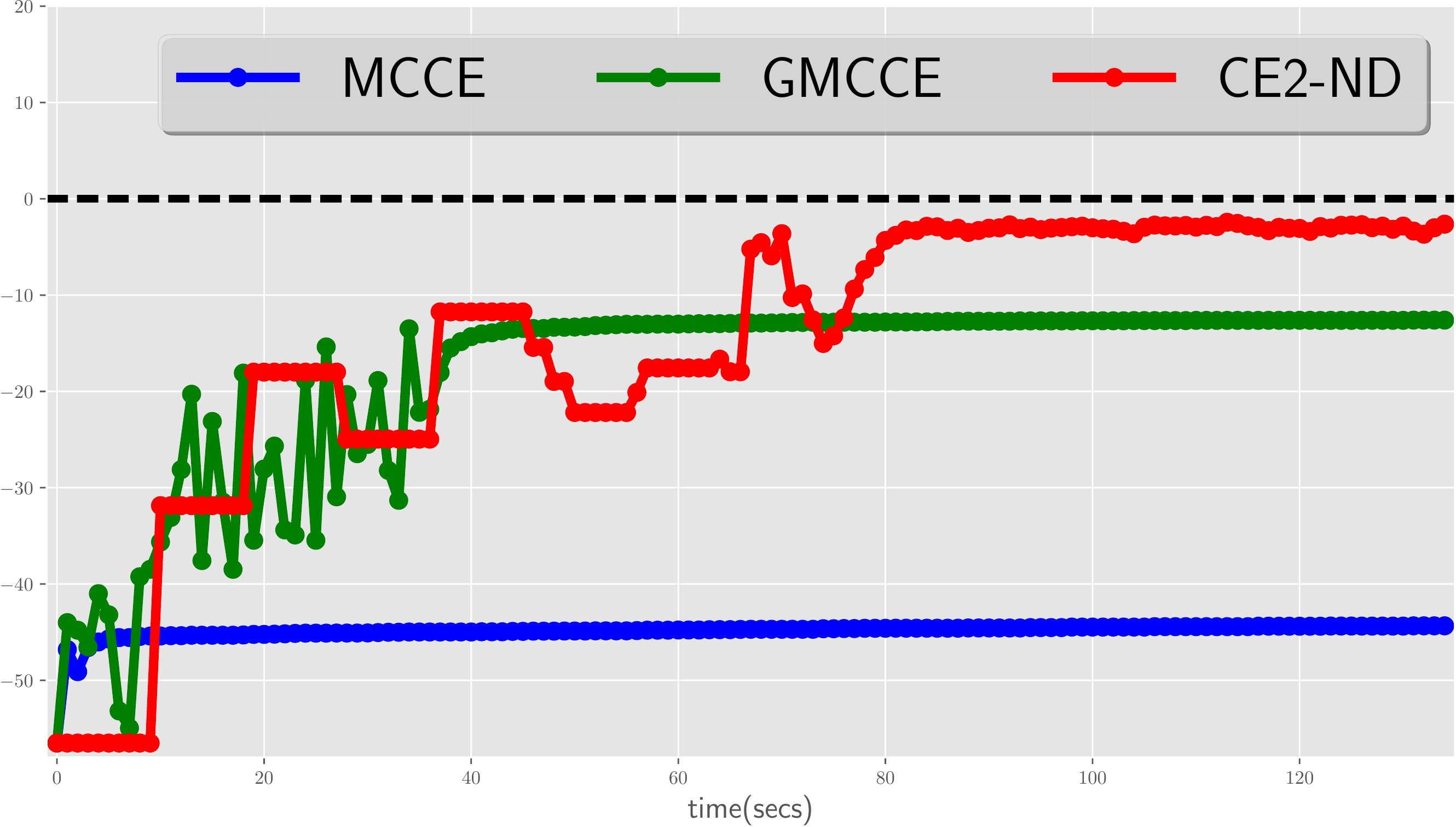}
		\subcaption{Salomon function}
	\end{subfigure}
	\begin{subfigure}[h]{0.5\textwidth}
		\includegraphics[height=52mm, width=60mm]{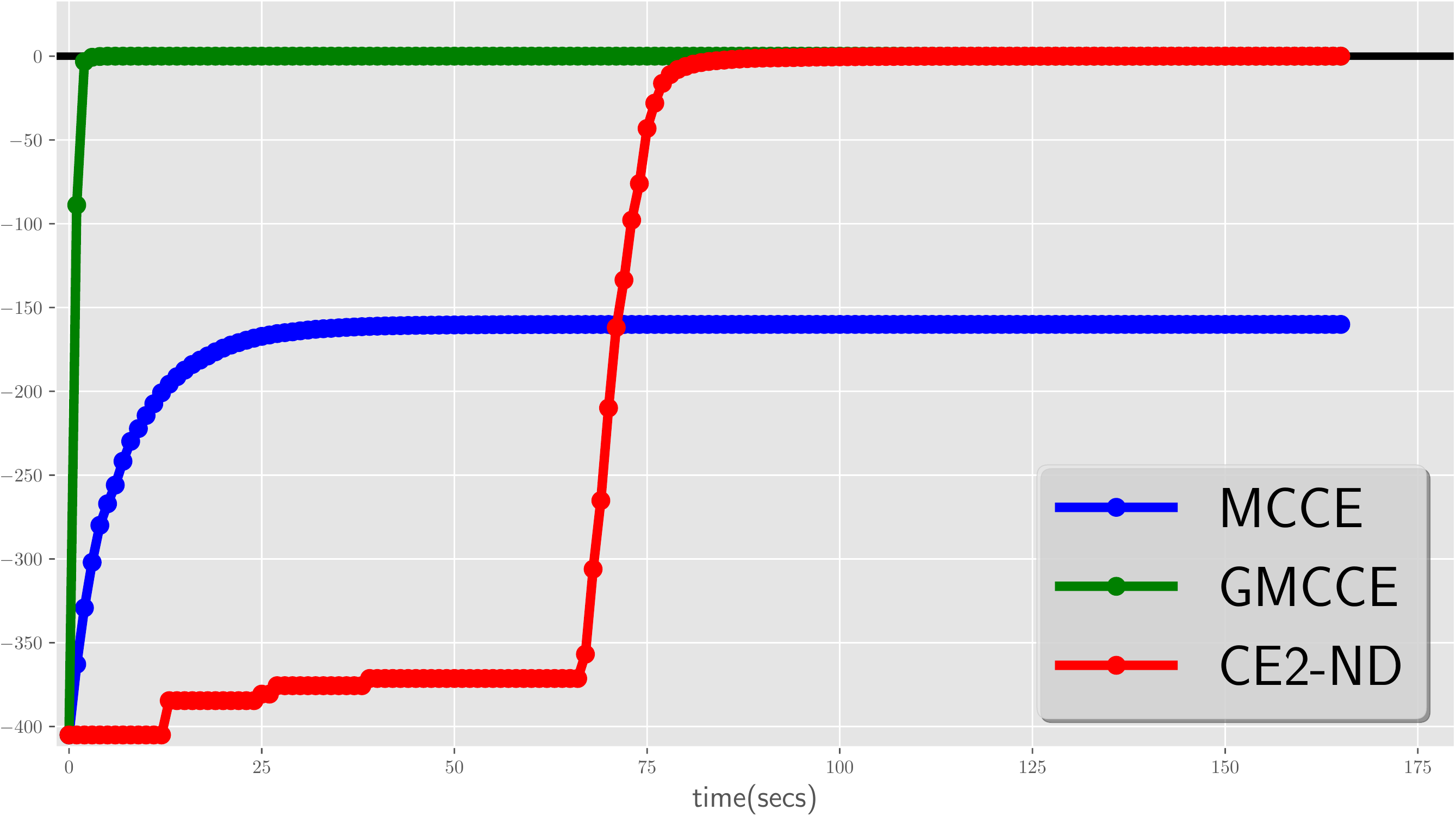}
		\subcaption{Rosenbrock function}
	\end{subfigure}\vspace*{10mm}\\
	\begin{subfigure}[h]{0.5\textwidth}
		\includegraphics[height=52mm, width=60mm]{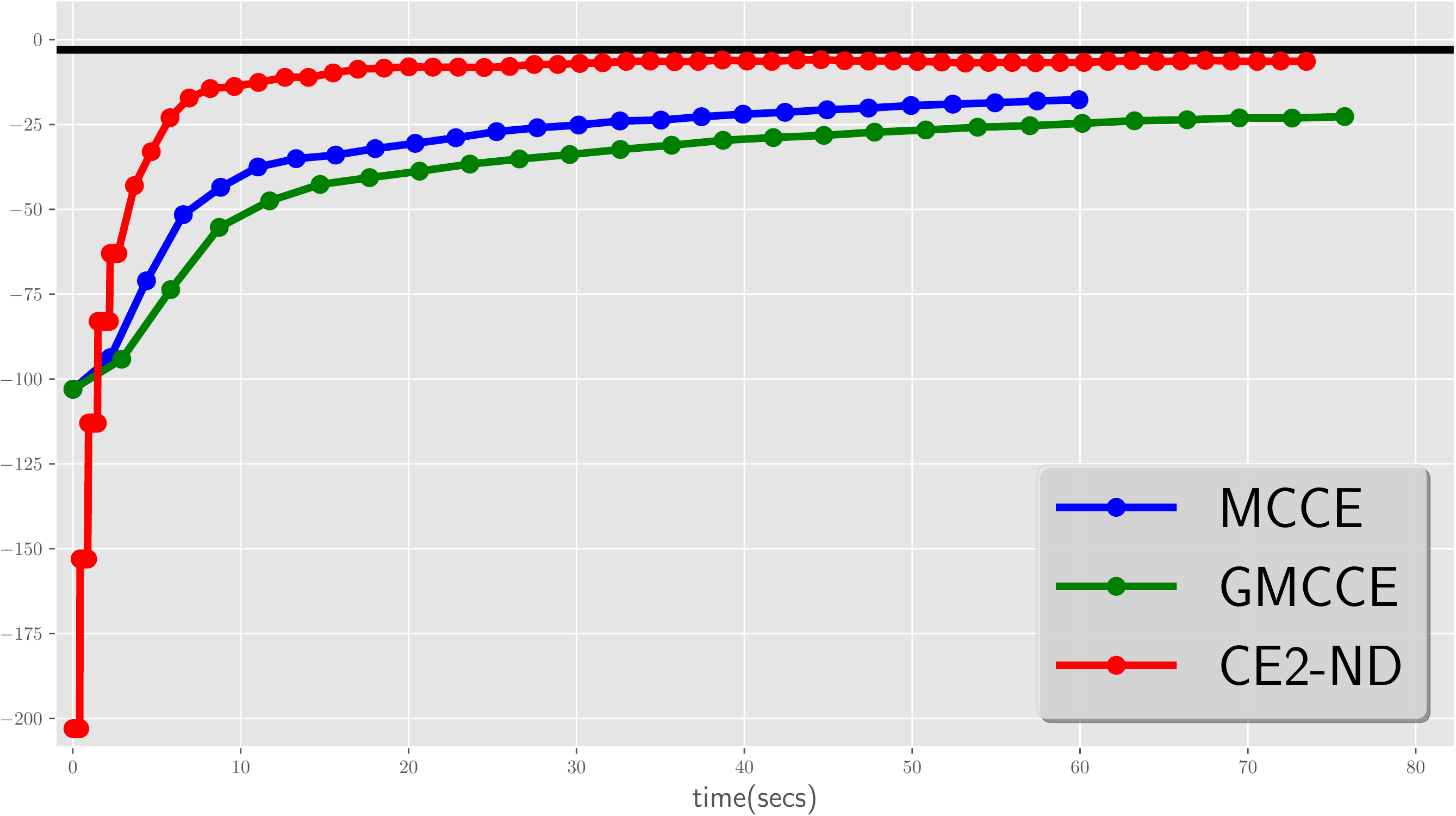}
		\subcaption{Plateau function}
	\end{subfigure}
	\begin{subfigure}[h]{0.5\textwidth}
		\includegraphics[height=52mm, width=60mm]{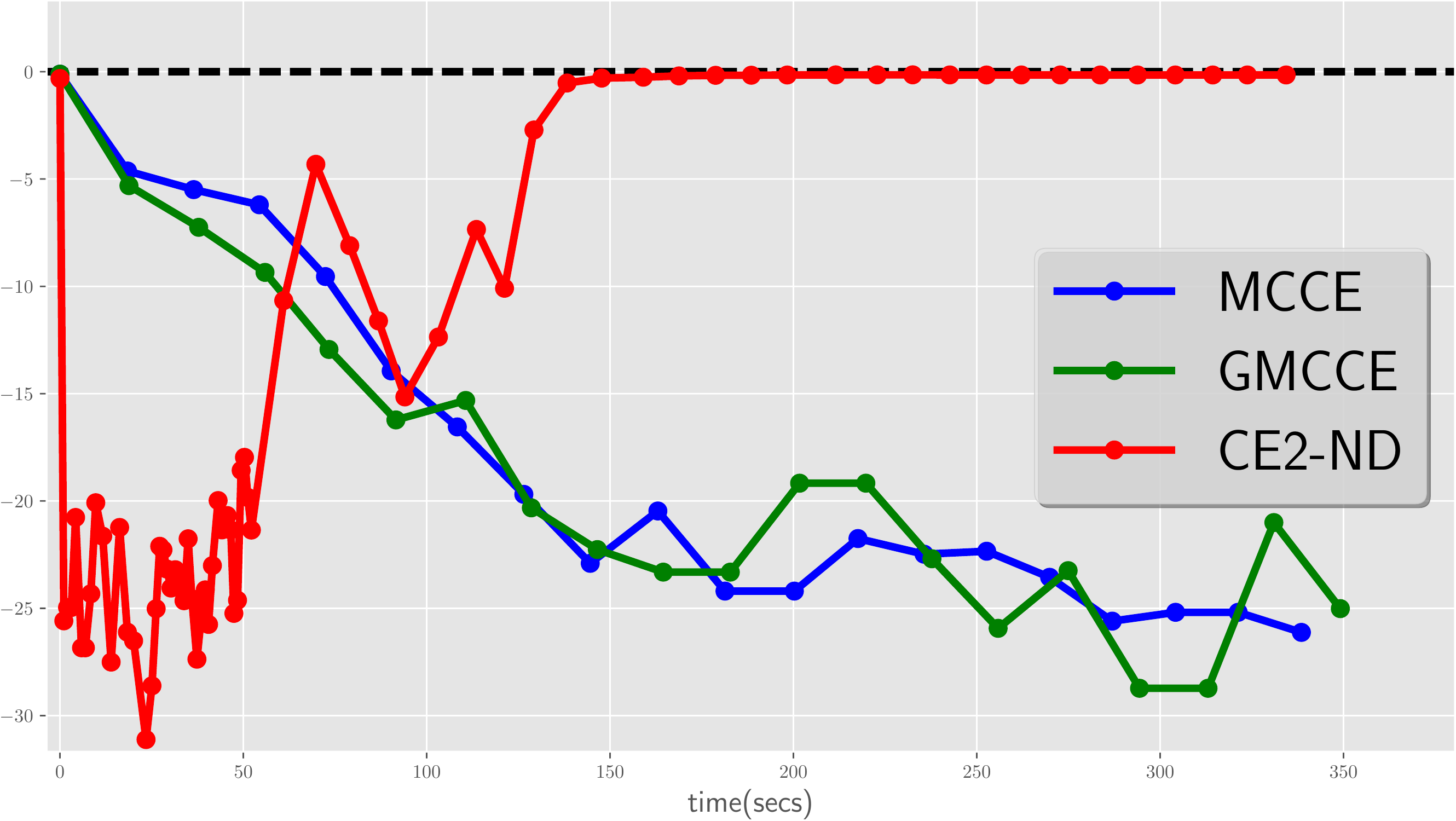}
		\subcaption{Pathological function}\label{fig:pathologicalresult}
	\end{subfigure}
	\caption{More comparisons}
\end{figure}%

\clearpage
\noindent
From the experiments, we make the following observations:\vspace*{0mm}\\
\begin{enumerate}
	\item
	The algorithm CE2-ND shows good performance compared to GMCCE and MCCE in all the test cases that we consider. The algorithm CE2-ND also exhibits good global optimum convergence behaviour when applied to all the above benchmark functions. The benchmark functions we consider for the empirical evaluation of the algorithm possess diverse and rigorous landscape. For example, in the non-differentiable Plateau function $\mathcal{H}_{9}$ (3D plot provided in Fig. \ref{fig:plateau3dplot}), the landscape involves numerous plateaus and ridges which makes the effective navigation of any gradient-based methods quite impossible. However, CE2-ND is able to tread both uphill and downhill across this inaccessible landscape to generate the global optimum with good accuracy at a reasonable rate.
	\begin{figure}[hp]
			\centering
			\includegraphics[height=60mm, width=65mm]{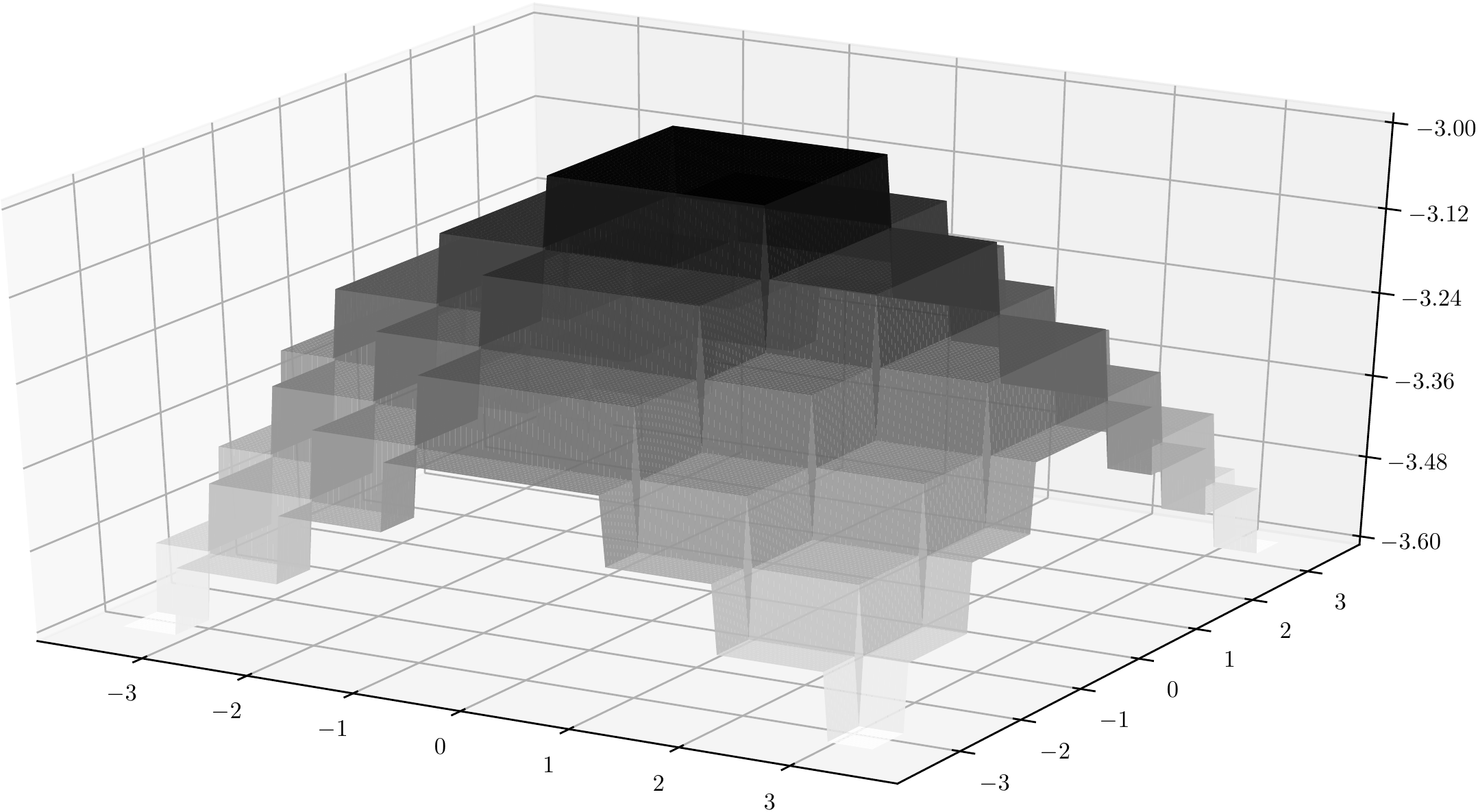}
			\caption{3D plot of the Plateau function}\label{fig:plateau3dplot}
	\end{figure}
	\item
	The sample size requirements of the algorithm CE2-ND witnessed during the empirical evaluation are relatively very less compared to GMCCE and MCCE. This is primarily attributed to the adaptive nature of the underlying stochastic approximation framework, where each sample drawn by the algorithm effectively and efficiently recalibrates the model parameters towards the singular distribution. The sample size requirements of the respective algorithms experienced during the experimental evaluation of the test cases $\mathcal{H}_{8}$, $\mathcal{H}_{9}$, and $\mathcal{H}_{10}$ are provided in Fig. \ref{fig:samplesizecompare}.
	\begin{figure}
		\centering
		\includegraphics[scale=0.32]{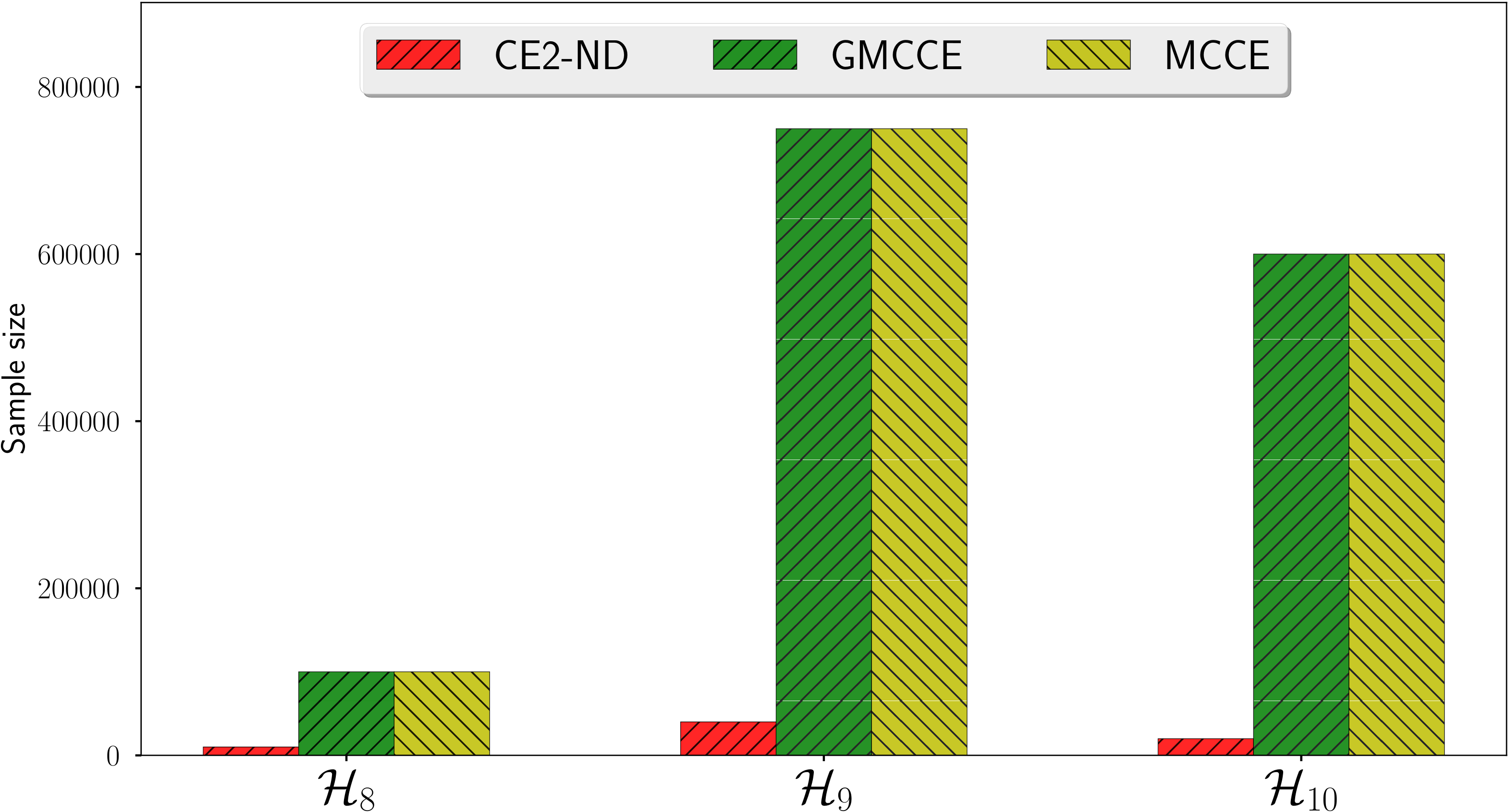}
		\caption{Comparison of the cumulative samples utilized by various algorithms.}\label{fig:samplesizecompare}
	\end{figure}
	\item
	The algorithm exhibits robustness with respect to the initial distribution $\theta_0$ in most of the test cases that we consider. Recall that in  CE2-ND, we employ a mixture PDF $\widehat{f}_{\theta_t}$  to draw the sample at time $t$. An initial distribution which weighs the solution space reasonably well, seems to be sufficient for all the test cases, except for $\mathcal{H}_{10}$ (Pathological function). Note that in the case of $\mathcal{H}_{10}$, both GMCCE and MCCE diverge. It is the unique landscape of the Pathological function which contains very narrow trenches and ridges with smooth regions of moderate values in between. See Fig. \ref{fig:pathological3d}. The global optimum is at the origin which is also contained in a very narrow crest. Hence for a given probability distribution over the solution space, the probabilities of the crests and trenches are very minimal and hence the samples drawn at each iteration of both GMCCE and MCCE are more likely to belong to the moderate range and hence the divergence. In CE2-ND, we use a mixture weight $\lambda = 0.2$ for this particular test case which is relatively higher compared to the rest of the test cases. The mixture PDF acts as a bridle to prevent the drift towards  horizon. We also consider an initial distribution $((0,0,\dots, 0)^{\top}_{50 \times 1}, \I_{50 \times 50})^{\top}$ (which is same for both MCCE and GMCCE). One can indeed argue that this choice of the initial distribution provides unwarranted bias towards the region containing the global optimum. But we believe that this information (in the form of the initial distribution) is quite naive, considering the fact that in the  unit hypercube around the origin (where the initial distribution is heavily concentrated), the terrain of the Pathological function is quite treacherous. See Fig. \ref{fig:pathological3dzoom}. The challenge to seek the global optimum is still hard and any effective global optimization algorithm has to both ascend as well as descend to find the global optimum. Any local optimization algorithm can indeed utilize this initial distribution to randomly pick the initial point, however, due to the uneven landscape in that region, the global optimum convergence is not guaranteed. In the case of CE2-ND, we observe an initial transient phase where the algorithm seems to wander randomly through the solution space (See Fig. \ref{fig:pathologicalresult}). Nonetheless, due to the mixture approach, there is a positive weight on the initial  distribution and hence there always exists a positive probability to draw a sample close to the origin. This will stabilize the exploration and guide the algorithm towards the origin.
	\begin{figure}[!h]
	\begin{subfigure}[h]{0.5\textwidth}
	\includegraphics[height=50mm, width=60mm]{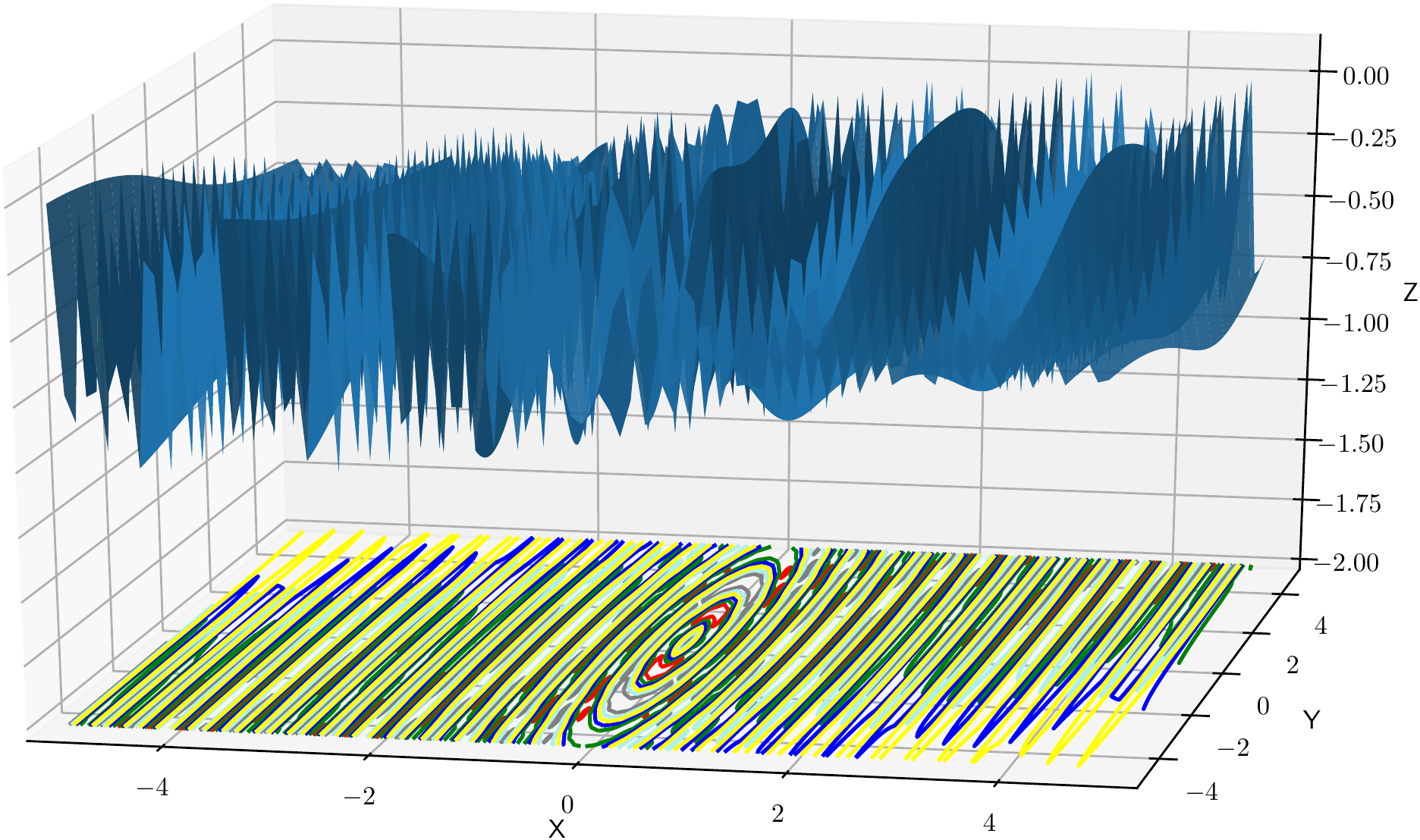}
	\subcaption{3D plot of the Pathological function in the region $[-5,5]\times[-5,5]$}
	\end{subfigure}		
	\begin{subfigure}[h]{0.5\textwidth}
		\includegraphics[height=50mm, width=60mm]{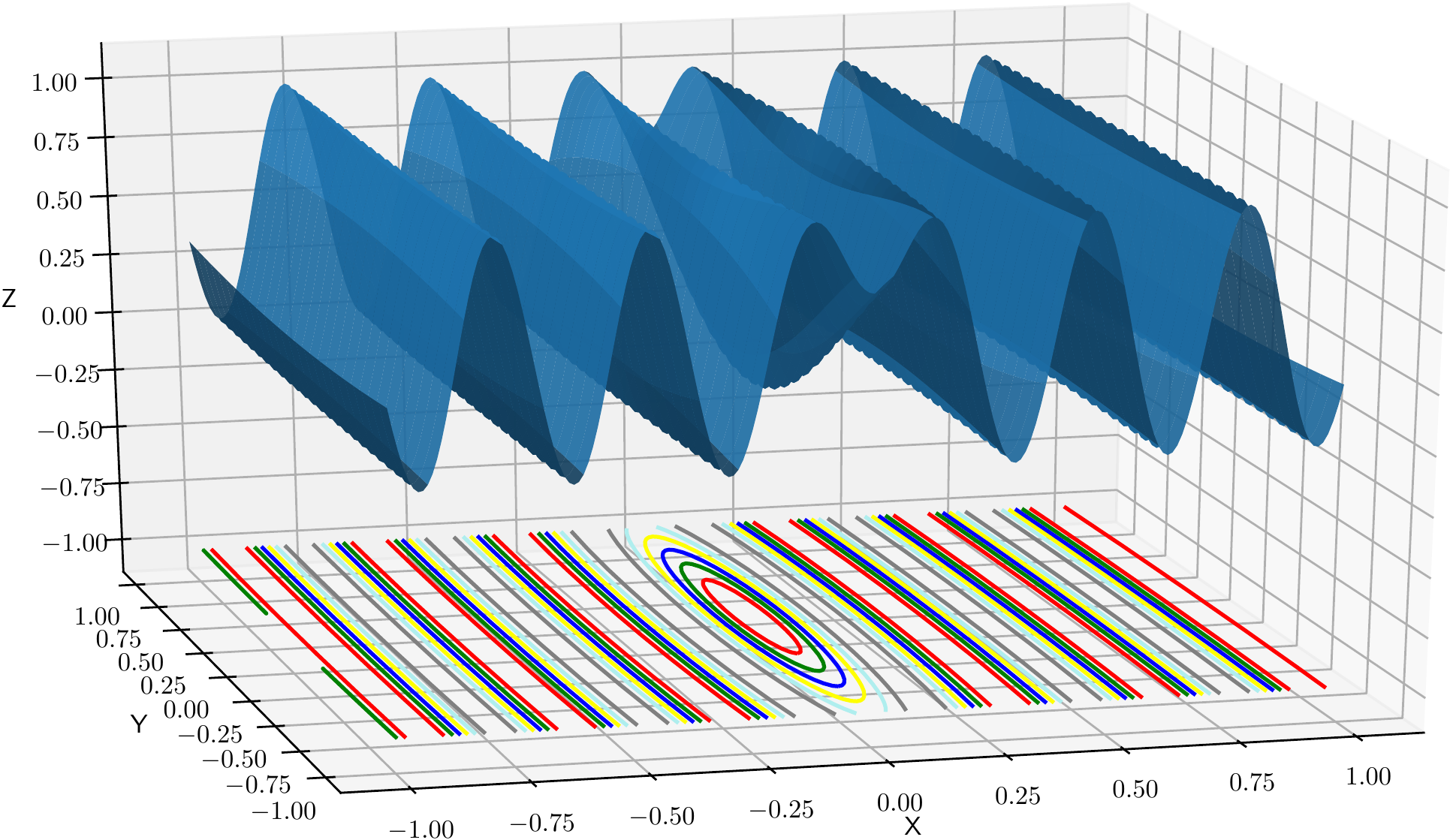}
		\subcaption{3D plot of the Pathological function in the region $[-1,1]\times[-1,1]$}\label{fig:pathological3dzoom}
	\end{subfigure}	
	\caption{3D plot of the Pathological function}\label{fig:pathological3d}	
	\end{figure} 
	\item
	We studied the sensitivity of the algorithm with regards to the quantile factor $\rho$. Recall that $\rho$ determines the threshold level used by the algorithm. The results are provided in  Fig. \ref{fig:rhocompare}.
		\begin{figure}[h]
		\vspace*{1mm}
		\centering
		\includegraphics[height=62mm, width=0.85\textwidth]{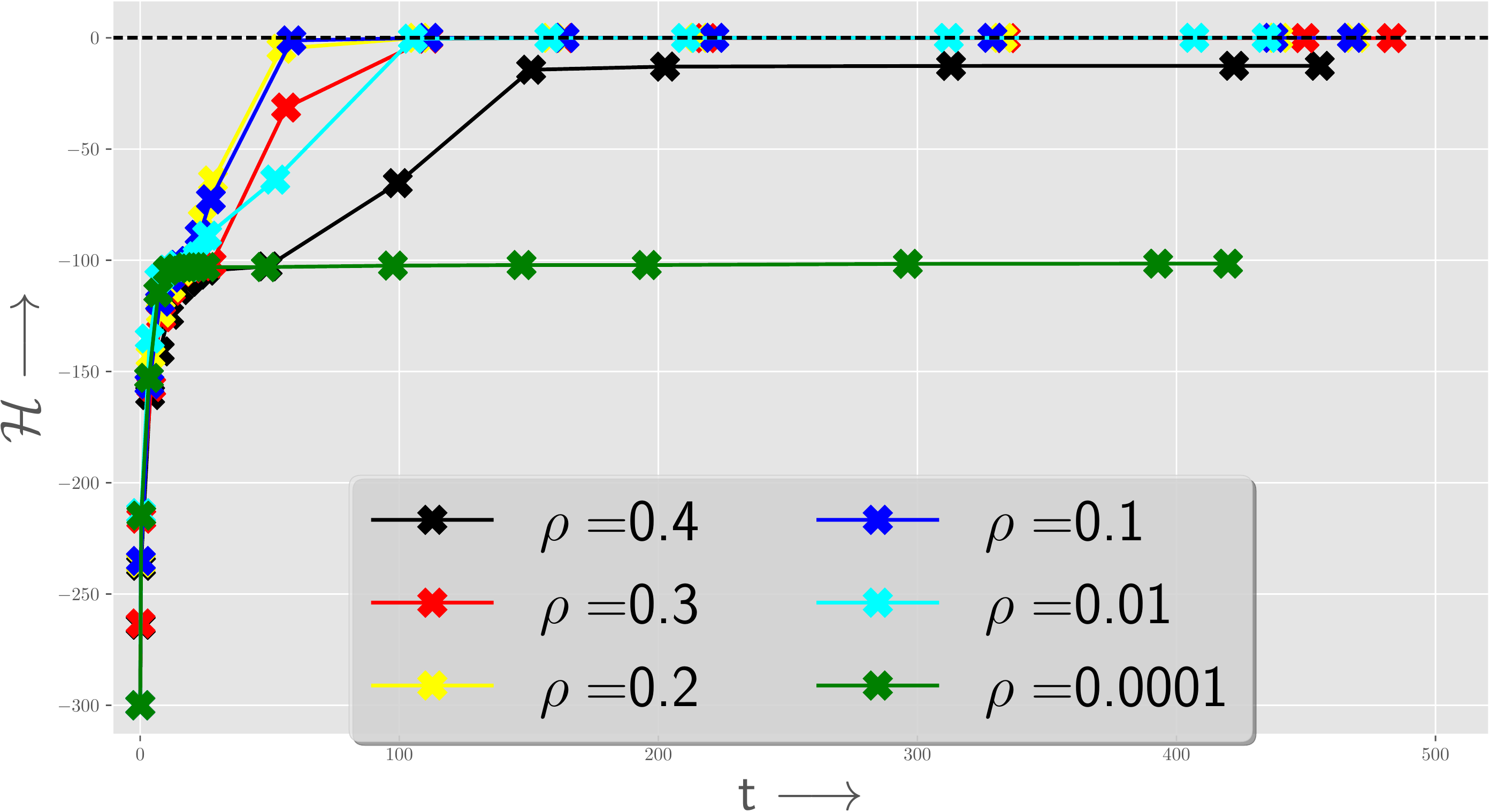}
		\caption{Performance comparison of CE2-ND for various values of $\rho$.}\label{fig:rhocompare}
	\end{figure}
	 For $\rho \in \{0.3,0.2,0.1,0.01\}$, the algorithm converges to the global optimum at a relatively faster rate. However, for $\rho=0.4$, the convergence rate is intermediate, while for $\rho=0.0001$, the drift is sluggish. Theoretically, the algorithm should converge for all values of $\rho$. However, in most practical cases, choosing $\rho$ in the range $[0.01,0.3]$ is highly recommended. Similar observation about the Monte-Carlo CE is mentioned in \cite{hu2007model}, and this needs further investigation.
	\item
	As with any stochastic approximation algorithm, the choice of the learning rate $\beta_t$ is vital.
	\item
	The computational and storage requirements of the algorithm CE2-ND are minimal. This is attributed to the streamlined and incremental nature of the algorithm. This attribute makes the algorithm suitable in settings where the computational and storage resources are scarce. 
	\begin{figure}[!h]
		\centering
		\includegraphics[height=62mm, width=0.75\textwidth]{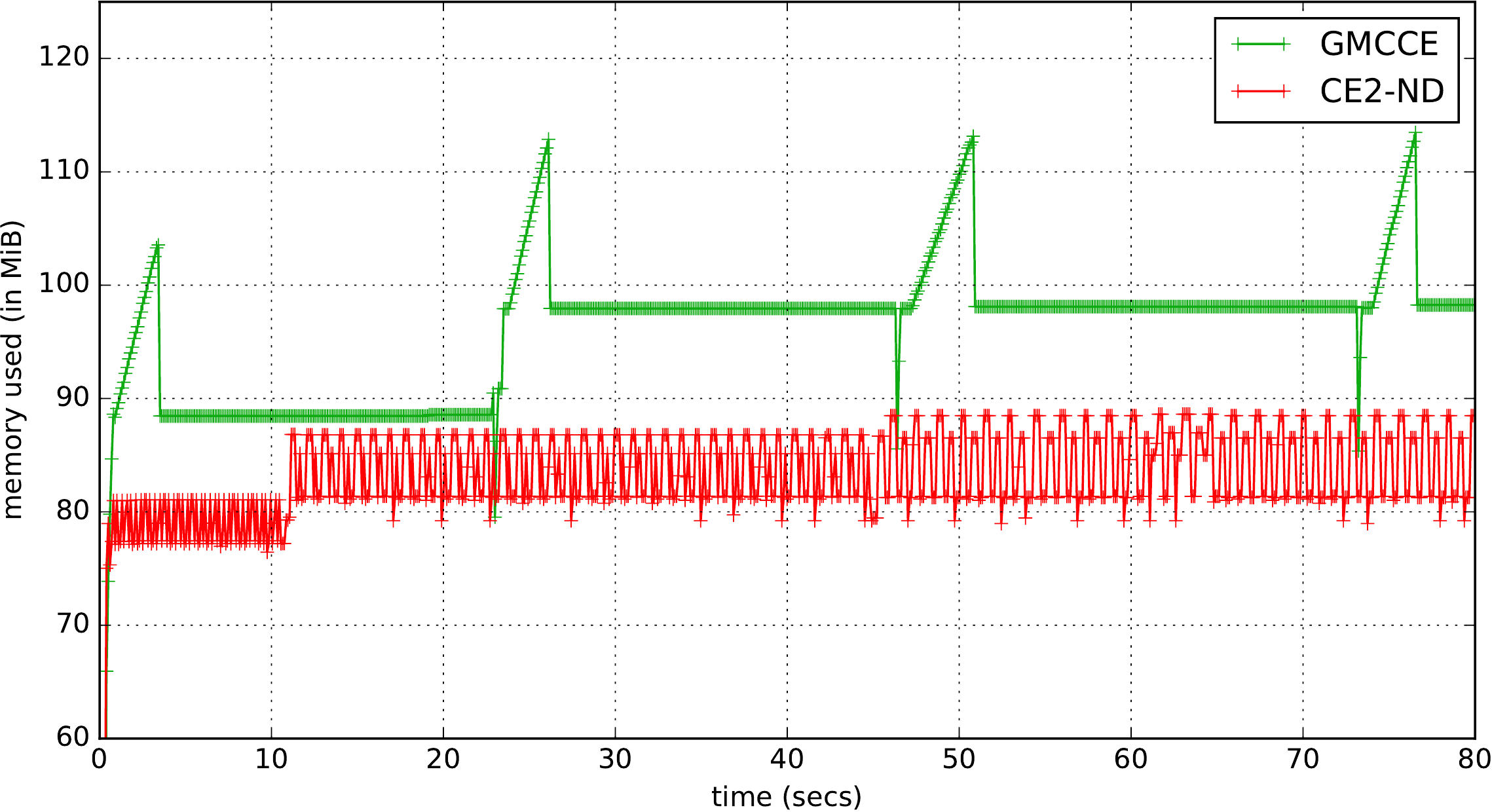}
		\caption{Memory usage comparison: CE2-ND uses very less memory compared to GMCCE. The spikes in the GMCCE case are attributed  to the sample generation.}\label{fig:memcompare}
	\end{figure}
\end{enumerate}
\section{Conclusion}
In this paper, we developed a stochastic approximation algorithm for continuous optimization based on the well known cross entropy (CE) method. Our technique efficiently and effectively tracks the ideal cross entropy method. It requires only one sample per iteration. The algorithm is incremental in nature and possesses attractive features like minimal restriction on the structural properties of the objective function, robustness, ease of implementation, stability  as well as computational and storage efficiency. We showed the almost sure convergence of our algorithm and proposed conditions required to achieve the convergence to the global maximum for a particular class of functions. Numerical experiments over diverse benchmark objective functions are shown to corroborate the theoretical findings.



\bibliographystyle{spmpsci}      

\bibliography{template}

\section*{\Large{Appendix}}
\subsection*{\bf{Stochastic Approximation Framework}}\label{sec:saframework}
\textit{Stochastic approximation algorithms} \cite{borkar2008stochastic,kushner2012stochastic,robbins1951stochastic} are a computationally appealing way of efficiently utilizing prior information and are primarily used for optimizing, tracking or regulating stochastic systems. They do so via a discounted averaging of the prior information and are usually expressed as recursive equations of the following form:
\begin{equation}\label{eq:strec}
\mathsf{Z}_{t+1} = \mathsf{Z}_{t} + \alpha_{t+1}\Delta \mathsf{Z}_{t},
\end{equation}
where $\Delta \mathsf{Z}_{t} = q(\mathsf{Z}_{t}) + b_{t} + \mathbb{M}_{t+1}$ is the \textit{increment term} also called the differential correction, $q(\cdot)$ is a Lipschitz continuous function, $b_{t}$ is the \textit{bias term} with $b_{t} \rightarrow 0$ and $\{\mathbb{M}_{t}\}$ is a \textit{martingale difference noise sequence}, \emph{i.e.}, $\mathbb{M}_{t}$ is $\mathcal{F}_{t}$-measurable and integrable and $\mathbb{E}[\mathbb{M}_{t+1} \vert \mathcal{F}_{t}] = 0, \forall t \geq 0$. Here $\{\mathcal{F}_{t}\}_{t \in \mathbb{N}}$ is a filtration, where the $\sigma$-field $\mathcal{F}_{t} = \sigma(\mathsf{Z}_i, \mathbb{M}_{i}, 1 \leq i \leq t, \mathsf{Z}_{0})$. The learning rate $\alpha_{t} > 0$ satisfies the Robbins-Monro condition $\Sigma_{t} \alpha_{t} = \infty$, $\Sigma_{t} \alpha_{t}^{2} < \infty$.

\end{document}